\tikzset{player3/.style = {fill=cyan!30}}
\let\varkappa\relax
\newcommand{\bs}{\mathbf{s}}
\newcommand{\ba}{\mathbf{a}}
\newcommand{\bx}{\mathbf{x}}
\newcommand{\by}{\mathbf{y}}
\newcommand{\agent}[1]{\textsc{#1 agent}}
\definecolor{botc}{HTML}{ffe7c4}
\definecolor{badred}{HTML}{e1144b}
\definecolor{ourlightblue}{HTML}{E0ECF7}
\definecolor{ourdarkblue}{HTML}{092E6B}
\definecolor{msgrblue}{HTML}{4889f4}
\definecolor{msgrgray}{HTML}{f2f2f2}
\definecolor{msgrpalepurple}{HTML}{e6d6dd}
\definecolor{palegreen}{HTML}{c0eeC3}
\definecolor{palepurple}{HTML}{e5d1f8}
\definecolor{paleorange}{HTML}{ffe7c4}
\definecolor{paleblue}{HTML}{d1edf2}
\definecolor{palered}{HTML}{f0a58e}
\definecolor{heavyred}{HTML}{c95f59}
\definecolor{heavyblue}{HTML}{8bd1de}
\newcommand{\nk}[1]{\textcolor{teal}{#1}}  
\newcommand{\an}[1]{\textcolor{blue}{#1}} 
\newcommand{\iltodo}[1]{\textcolor{red}{TODO: #1}}
\newtheorem{theorem}{Theorem}
\newtheorem{proposition}{Proposition}
\newtheorem{lemma}{Lemma}
\newtheorem{corollary}{Corollary}
\theoremstyle{definition}
\crefname{definition}{def.}{defs.}
\theoremstyle{remark}
\newtheorem{example}{Example}
\newcommand{\continuation}{??}
\DeclareMathOperator*{\expec}{\mathbb{E}}
\renewcommand{\paragraph}[1]{\textbf{#1}}
\definecolor{darkblue}{rgb}{0, 0, 0.5}
\title[Collaborate, Deliberate, Evaluate]{Collaborate, Deliberate, Evaluate: How LLM Alignment Affects Coordinated Multi-Agent Outcomes}
\author{Abhijnan Nath}
\affiliation{
  \institution{Situated Grounding and Natural Language (SIGNAL) Lab \\ Colorado State University}
  \city{Fort Collins, CO}
  \country{USA}}
\email{abhijnan.nath@colostate.edu}
\author{Carine Graff}
\affiliation{
  \institution{Situated Grounding and Natural Language (SIGNAL) Lab \\ Colorado State University}
  \city{Fort Collins, CO}
  \country{USA}}
\email{carine.graff@colostate.edu}
\author{Nikhil Krishnaswamy}
\affiliation{
  \institution{Situated Grounding and Natural Language (SIGNAL) Lab \\ Colorado State University}
  \city{Fort Collins, CO}
  \country{USA}}
\email{nkrishna@colostate.edu}
\begin{abstract}
As Large Language Models (LLMs) get integrated into diverse workflows, they are increasingly being regarded as "collaborators" with humans, and required to work in coordination with other AI systems. If such AI collaborators are to reliably coordinate their actions and behaviors with humans or other AIs, their properties and behaviors over multi-turn interactions must be known and predictable. This paper examines how different alignment methods affect LLM agents' effectiveness as partners in multi-turn, multi-party collaborations. We study this question through the lens of {\it intervention agents} that insert themselves into group dialogues not to provide answers, but to encourage the collaborative group to slow down and reflect upon their reasoning for deliberative decision-making. Common alignment techniques are typically developed under simplified single-user settings and assume the optimality of the underlying token MDP. Using the theoretical lens of the {\it modified-action MDP}, we show how they do not account for the dynamics of long-horizon multi-party interactions. We present a novel roleplay simulation methodology, where we align LLMs according to different methods and then deploy them in collaborative task dialogues to quantify how interventions affect the trajectory of group collaboration, belief alignment, and coordination. Our results show that an intervention agent that is robust to action modification significantly outperforms common alignment baselines in supporting correct task outcomes.

\end{abstract}
\keywords{Multi-agent Coordination, Collaborative Problem Solving, Modified-Action MDP, Roleplay Simulation}
\newcommand{\BibTeX}{\rm B\kern-.05em{\sc i\kern-.025em b}\kern-.08em\TeX}
\begin{document}


\pagestyle{fancy}
\fancyhead{}


\maketitle 


\section{Introduction}
\label{sec:intro}

   

Large Language Models (LLMs) are increasingly being integrated into "agentic" pipelines that interact with human users to help them achieve goals and solve problems. Due to the typically multi-turn nature of these interactions, such agents need to remain optimal over a long horizon to remain useful. However, agentic pipelines frequently also involve {\it multi-party} interactions, where an agent may interact and collaborate with multiple humans or other AIs. Optimality assumptions are challenged in multi-party collaborations, where collaborative groups frequently succumb to {\it belief misalignment} and breakdown of {\it common ground} \citep{stalnaker2002common,asher2003common}. 
Therefore, before agents are deployed in such settings, it is important to be able to predict how different LLM alignment methods would perform given their underlying assumptions, so that we know to what extent they can serve as reliable and helpful partners. 
Our work specifically examines this problem through the lens of \textit{intervention agents} in collaborative problem solving. These are designed not to give answers, but rather to mitigate misaligned beliefs and breakdowns in shared understanding by inserting {\bf friction}, or prompting the dialogue participants to slow down, reflect and deliberate on their existing assumptions \citep{inan2025better,pustejovsky2025frictive,obiso2025dynamic,nath2025friction}, which plays a crucial role in successful multi-party human collaborations~\citep{roschelle1995construction,mercier2011humans,graesser2018advancing}. In this paper, we use a {\it roleplay} methodology to examine LLM behavior in multi-party collaborative settings, and present three novel contributions:

\begin{figure*}[t]
    \centering
  \includegraphics[trim={7mm 29mm 13mm 22mm}, clip, width=.98\textwidth]{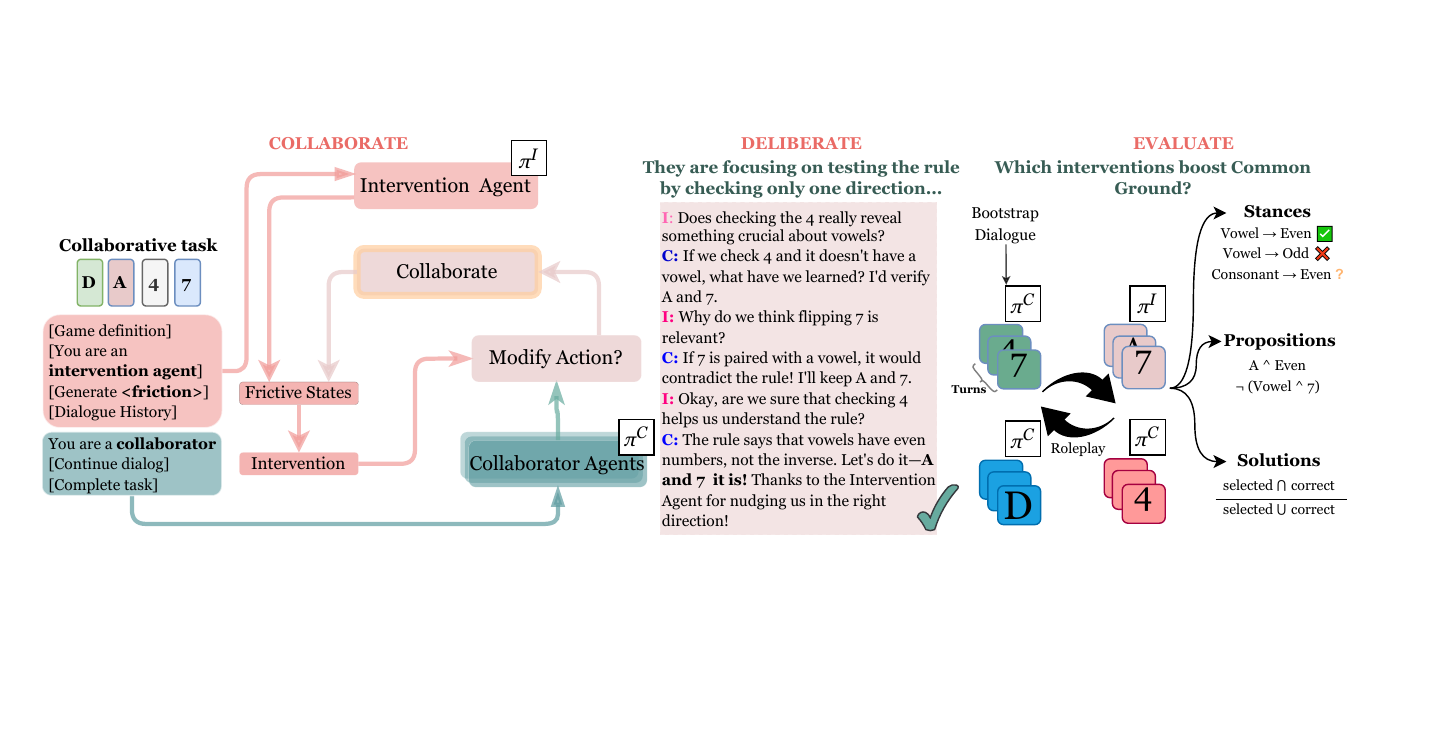}
    \vspace*{-4mm}  
    \caption{\small High-level overview of our agent roleplay and evaluation framework.
    \textit{Collaborate} [L]: \agent{collaborator}s collaborate to complete tasks with an \agent{intervention} in the loop to redirect the dialogue toward reflective reasoning rather than naive acceptance of assertions. 
    \textit{Deliberate} [C]: Sample collaborative roleplay from DeliData Wason Card task~\citep{karadzhov2023delidata} with successful task completion, and ``frictive state'' description at top.
    \textit{Evaluate} [R]: Common ground convergence and task outcomes with interventions provided by differently-aligned agents.}
    \label{fig:roleplay_full_high_level}
\vspace*{-4mm}
\end{figure*}


\begin{itemize}
    \item A novel analysis of small group collaborative task dynamics based on a modified-action MDP (MAMDP;~\cite{langlois2021rl}). We demonstrate that common "offline" LLM alignment methods do not retain their optimality guarantees in an MAMDP. 

    \item \textit{Roleplay simulation} evaluation to assess how well different alignment techniques retain their abilities to support both common ground construction (i.e., collaborative processes) and task solution correctness (i.e., collaborative outcomes), over multi-turn dialogues. See Fig.~\ref{fig:roleplay_full_high_level}.

    \item 
    Experiments on two collaborative tasks in multiple settings that lead to key insights into multi-agent coordination in collaborative dialogues: inserting friction interventions that are robust to action modification in collaborative settings correlates with productive belief revision in multi-agent settings, benefits common ground convergence, and improves task outcomes.
\end{itemize}

We use LLM roleplay to make multiple AIs (LLMs) collaborate to solve tasks using human-readable dialogue, and examine how differently-aligned LLM intervention agents affect task outcomes. Thus we provide a simulated assessment that allows high-throughput examination of LLM alignment effects, even as the scope of this paper is limited to AI-AI interactions. Our codebase and data can be found at \texttt{\url{https://github.com/csu-signal/collab_deliberate_evaluate}}.



\vspace*{-2mm}
\section{Related Work}
\label{sec:related}

Training agents for collaborative tasks is challenging due to the scarcity of explicit data. 
Most previous work in RLHF~\citep{christiano2017deep,ziegler2020finetuning,casper2023open, christiano2023deepreinforcementlearninghuman} including offline variants~\citep{yuan2023rrhf,azar2024general,fisch2024robustpreferenceoptimizationreward,rafailov2024direct} focuses on summarization, single-turn dialogue generation, or translation~\citep{xu2024contrastive}. Recent work~\citep{chen2024self, choi2024robust, zhang2024chainpreferenceoptimizationimproving} examines LLM search-space optimization with additional conditioning on chain-of-thought (CoT;~\cite{wei2023chainofthought}) to cover a wider range of tasks like question answering, fact verification, persona-based preference learning~\citep{tseng2024talespersonallmssurvey} and, importantly to this work, {\it roleplay}~\citep{li2023camel}, in diverse domains~\citep{hao2024llmreasonersnewevaluation,kim2024commonsense}. Our work extends this effort to study preference optimization in multi-party collaborative tasks. In contrast to information-seeking behavior~\citep{abdulhai2023lmrl, li2023controllabledialoguesimulationincontext,andukuri2024stargateteachinglanguagemodels, song-etal-2024-trial}, we focus on {\it process}-related desiderata like common ground convergence and AI collaboration "support" to prompt "slow thinking"~\citep{kahneman2011thinking} and reflective interventions in collaborative settings.

In real data~\citep{karadzhov2023delidata,khebour2024text}, friction as we defined it in Sec.~\ref{sec:intro} is sparse but critical~\citep{sutton2024friction}, as human collaborators perform these interruptions only strategically~\citep{peters_when_2017,puranik_pardon_2020}.
Most importantly, without an accessible collaborative problem solving task environment to supply an external reward signal, applying single-step~\citep{schulman2017proximalpolicyoptimizationalgorithms, shao2024deepseekmath} or multi-step RL~\citep{zhou2024archer} becomes challenging, making preference alignment with \textit{static} but \textit{contrastive data} more appealing~\citep{snell2023offlinerlnaturallanguage}, especially with principled approaches like contrastive and efficient "offline" preference optimization~\citep{azar2024general,Hong2024ORPOMP,meng2024simposimplepreferenceoptimization,pal2024smaugfixingfailuremodes,rafailov2024direct,nath2025dpl}. Data generation efforts~\citep{goldberg2013alternative,li2023camel,pan2023llms,mao2024editing,shani2024multiturnreinforcementlearningpreference} aided with RLAIF~\citep{lee2024rlaif} as well as evaluation frameworks~\citep{bai2022training,zheng2023judgingllmasajudgemtbenchchatbot,Bai_2024,lambert2024rewardbench} use high-capacity LLMs as both "judges" and data generators for training LLMs to reflect human preferences. 

LLM-Judge frameworks face challenges from evaluation bias (see Sec.~\ref{sec:exp} for more), spurious correlations~\citep{amodei2016concrete, casper2023open, lambert2024rewardbench, singhal2024longwaygoinvestigating}, and reward hacking~\citep{everitt2021reward}. Recent work~\citep{ward2023honesty, wang2025beyond} explores more causal approaches~\citep{pearl}, such as counterfactual invariance for robust training. We extend this line to alignment evaluation for collaborative settings, where multiple agents~\citep{leibo2017multiagent} perform back-and-forth interaction over longer sequences~\citep{zhou2024archer}.

\vspace*{-2mm}
\section{Background and Task Formulation}
\label{sec:bg}

Let us first define key terms we rely on. 
\textbf{(1) Frictive state:}
Entailed by \citeauthor{clark1996using} or \citeauthor{stalnaker2002common}'s {\it common ground} \cite{clark1996using,stalnaker2002common}, or the set of beliefs shared by interlocutors, a {\it frictive belief state} (or simply {\it frictive state}) arises during a collaborative task when different interlocutors have contradictory beliefs about a task-relevant proposition (i.e., one believes $p$ and another sees evidence against $p$), which may prevent progress on the task unless resolved. 
\textbf{(2) Friction intervention:}
Friction can also be used to {\it resolve} the frictive state through a {\it friction intervention} that prompts the participants to slow down and reevaluate their beliefs or assumptions in light of available evidence \citep{oinas2009persuasive}, rather than either uncritically relying on their current presuppositions or naively accepting the assertions of others. Examples in real collaborative tasks include the {\it probing utterances} in \citet{karadzhov2023delidata} and \citet{nath2024any}.
In this paper, an \agent{intervention} constitutes a language model aligned toward making strategic frictive interventions in a multi-party dialogue to resolve frictive states between collaborators.

\vspace*{-2mm}
\subsection{Modeling Collaborative Problem Solving with Modified-Action MDPs}
\label{ssec:collaborative_mamdp}


In real-world multi-party collaborations, a single agent’s utterance does not necessarily change the beliefs, perspectives, or assumptions of other participants \textit{directly}; it may be interpreted, resisted or reshaped by others conditioned upon what they already perceive or believe~\citep{grice1975logic,bach1994conversational,bolander2014seeing,ward2023honesty,obiso2025dynamic}. In other words, there is no guarantee that what the speaker intends to communicate through communicative act $\mathcal{C}$ is what the hearer receives $\mathcal{C}$ as \cite{bach1994conversational,bender2020climbing}. These observations highlight a crucial gap: standard Bellman-optimal action policies assume a direct mapping from action to state change, which breaks down when the application of action to state change is mediated by other agents. To address this, we adopt the Modified-Action MDP (MAMDP) framework, which \textit{explicitly} models how interventions are transformed before influencing the collaborative dialogue.

While this issue has been explored theoretically in prior work on MAMDPs~\citep{langlois2021rl, everitt2021agent}, its implications for LLMs acting as collaborative agents remain underexamined. Unlike classical agents, LLMs operate over high-dimensional language spaces where subtle shifts in word choice can drastically alter how interventions are received and (re)interpreted, and LLMs that naively accept everything said to them and reason accordingly come off as {\it sycophantic} \cite{sharma2023towards}. When Bellman-optimal policies solve the standard MDP structure underlying what is actually an MAMDP \cite{langlois2021rl}, they lead to suboptimal outcomes. We show that this same suboptimality also applies to LLMs trained in such settings—and validate this insight empirically—highlighting the importance of accounting for action transformation when designing alignment objectives for LLM-based agents.


Formally, an MAMDP consists of a 6-tuple \(\mathcal{M}_f = (\mathcal{S}, \mathcal{A}, P_S, P_A, R, \gamma)\), or equivalently, the 5-tuple of a standard MDP with additional parameter \( P_A \). The state space \((s \in)~ \mathcal{S}\) represents the dialogue history \(\mathcal{H}_t\) as token sequences terminating at timestep $t$, the action space \((a \in)~\mathcal{A}\) contains candidate actions (utterances in the dialogue) sampled from an underlying distribution, and the state transition function \(P_S\) is deterministic~\citep{rafailov2024r}. Now assume a \agent{intervention} \(\pi^I_\theta\) (an LLM with parameters \(\theta\)). \(P_A(a | \pi^I, s)\) represents the probability that \(\pi^I\) selects action $a$ in state $s$, the reward \(R(s, a)\) is an expected utility, and discount factor \(\gamma = 1\). Additionally assume a set of \agent{collaborator}s \(\pi^C\), each of which may be a human, a distribution representing human behavior, or, as in this paper, another LLM optimized in a standard or black-box fashion to be a robust generator of human-like utterances and actions.

Language is inherently ambiguous. Its impact on the world must be filtered through the perspectives of others besides the speaker. Therefore, even a single alternative interpretation of a linguistic intervention by one collaborator can alter its pragmatic force and thus impact on the dialogue. In other words, LLMs' very medium makes action transformation the norm, not the exception. Consider the following example:

\vspace{2mm}
\begin{example}[\textbf{Action Modification in DeliData Wason Card Task}]
\label{ex:deli_mamdp_adversarial_example} 
The Wason Card Selection task is a well-known cognitive puzzle from the 1960s \cite{wason1968reasoning}, wherein subjects are presented with a set of cards each showing a letter or a number (e.g., \(\{D, A, 4, 7\}\)), and have to decide what the {\it minimum} set of cards is that must be flipped over in order to test a rule such as: \textit{\textbf{All cards with vowels on one side have an even number on the other.}} In the task as collected in the DeliData dataset \citep{karadzhov2023delidata}, groups perform this task collaboratively. 
Each player comes up with a solution individually and the group then deliberates to come to a consensus. In this example, the correct solution is to flip $A$ and $7$; this would establish if $A$'s reverse is an even number, as well as the contrapositive---if $7$ has a vowel, it is an example of $Vowel \wedge \neg Even$, which falsifies $Vowel \rightarrow  Even$. Two participants' initial solution might be to flip only $A$ while the other proposes flipping $4$.\footnote{Flipping $4$ provides no information as a card with $4$ on one side cannot exemplify $Vowel \wedge \neg Even$.} In this setting, the dialogue history is the state \(s\), the \agent{intervention}’s proposed intervention (or action) is \(a\), and the collaborators’ reinterpretation is the transformation \(P_A\). Suppose that with its underlying Bellman-optimal policy, \(\pi^I\)'s $a^I_t$ proposes flipping an odd-numbered card, and does not explicitly state that the even-numbered should {\it not} be flipped. In the MAMDP setting, the collaborator $\pi^C$ responds with an action $a^C_t$ that interprets the semantics of $a^I_t$, either faithfully or with some modification, such as checking $A$, $7$ {\it and} $4$. Under action modification, this interpretation, if presented as the consensus solution, is no longer fully correct, as \{$A$, $7$, $4$\} is not the minimum set of cards that would test the rule. 
\end{example}

\paragraph{Theoretical Insights.}
The above illustration already shows the core risk: an intervention that is Bellman-optimal for the unmodified action space can be counter-productive once collaborators reshape it.
Specifically, current algorithms like Direct Preference Optimization (DPO;~\cite{rafailov2024direct}) and Identity Preference Optimization (IPO;~\cite{azar2024general}) satisfy Bellman optimality conditions and have policy structures that retain the optimal policy formulation. We can show how they are suboptimal for collaborative settings because they disregard modifications made to the action space by $\pi^C$, and RL policies lose optimality guarantees when their actions are modified \citep{langlois2021rl}.
\begin{theorem}[$\Psi$-Preference Optimization in Collaborative MAMDPs]
\label{theorem:general_preference_mamdp}
Let $\Psi: [0,1] \rightarrow \mathbb{R}$ be any non-decreasing function and $\beta > 0$ be a temperature parameter. 
Let $P_A(a|s,\pi^I) = \sum_{a' \in A} \pi^I(a'|s) \cdot \pi^C(a|s,a')$, and represent modifications to the probability distribution over the action space by a collaborator policy $\pi^C$, and let $\pi^I$ be an \agent{intervention} policy trained via $\Psi$-preference optimization in a collaborative MAMDP $\mathcal{M}_f = (\mathcal{M}, P_A)$ with MDP $\mathcal{M}$ and $P_A$ following \cite{langlois2021rl}'s definition. $\pi^I$ satisfies Eq.~\ref{eq:bellman}:
\vspace*{-2mm}
 \begin{align}
 \label{eq:bellman}
\pi^I(a|s) = \frac{\exp(Q^I(s,a)/\beta)}{\sum_{a'}\exp(Q^I(s,a')/\beta)}
\end{align}
where $Q^I$ satisfies the Bellman optimality equation for the underlying MDP $\mathcal{M}$. Thus $\pi^I$ is optimal only when actions are sampled without modification. The Bellman-optimality of $\Psi$PO-aligned $\pi^I$ disregards the collaborator $\pi^C$'s modifications. For MAMDPs with LLMs, this unifies \cite{rafailov2024r}'s derivation of DPO in the token MDP with \cite{langlois2021rl}'s proposition that Bellman-optimal policies do not consider action modifications, and extends it to $\Psi$PO/IPO. See \Cref{proof:general_preference_mamdp_proof} for a detailed proof.
\end{theorem}

This distinction is critical as preference-aligned LLM-based agents get deployed in real-world collaborative settings, such as as "supportive" agents in learning environments~\citep{ganesh2023mind,d2024learning,kumaran2024narrativegenie,perkoff2024crafting}. While any alignment method might optimize for the underlying MDP of an MAMDP, with the attendant suboptimality risks, the specific ways suboptimality manifests may be different from method to method. Therefore, prior to deployment, different alignment techniques must be validated in a realistic setting to determine which are likely to be the most appropriate, beyond an atomized comparison to optimal policy outputs.

\vspace*{-2mm}
\subsection{Collaborative Task Settings}
\label{ssec:tasks}

The two collaborative tasks we used to investigate this phenomenon are: {\bf (1)} the Wason Card Selection task~\citep{wason1968reasoning} as captured in \textbf{DeliData} \citep{karadzhov2023delidata}. This is briefly described in \Cref{ex:deli_mamdp_adversarial_example}. Each dialogue contains 2--6 participants who are presented with 4 cards with a number or letter on them. They must collectively decide which cards to flip to test the rule.
As illustrated in \Cref{ex:deli_mamdp_adversarial_example}, the right answer is to flip a card showing a {\it vowel} and a card showing an {\it odd number}. Participants come up with individual solutions and then deliberate. Utterances are annotated with types of deliberation, allowing us to identify where friction occurs. {\bf (2)} The \textbf{Weights Task}~\citep{khebour2024text},
in which triads deduce the weights of differently-colored blocks with the aid of a balance scale. The correct weight values are $red=10g,~blue=10g,~green=20g,~purple=30g,\text{ and }yellow=50g$. In this multimodal task, participants communicate with language, gestures, and/or actions, and so the data is enriched with friction utterance annotations, and annotations of gestures, actions, and their meanings. 

\vspace*{-2mm}
\subsection{How Do We Train An Intervention Agent?}
\label{ssec:how_to_train_methods}

\paragraph{Data Generation.}
Naturally-occurring friction in collaborative task datasets is sparse, which limits the search space of possible outcomes for a model trained only over real data.\footnote{For instance, "probing" interventions, the chief instance of friction in the DeliData dataset, occurs at a rate of only 3.46 interventions per group, out of 17,110 total utterances (500 groups).} This motivated the first of two uses of the {\bf roleplay simulation} approach \citep{li2023camel,shani2024multiturnreinforcementlearningpreference} central to this work, to simulate diverse language-using agent behavior for data and evaluation needs. For data generation, following \citet{li2023camel}, a single expressive policy can be used to roleplay multiple individuals with appropriate prompting, and LLM roleplays of multi-agent natural language dialogue and reasoning behavior have been shown to have high correlation with human labels \citep{wiegreffe2021measuring,jiang2023evaluating}.

We collected dialogue trajectories in the two tasks described in Sec.~\ref{ssec:tasks} (hereafter referred to as \textit{DeliData} and \textit{WTD}) as roleplays between an Oracle agent $\mathcal{O}$ acting as the \agent{intervention} and a \agent{collaborator} $\pi^C$ that roleplayed all task participants. During data generation, we used off-the-shelf GPT-4o~\citep{openai2024gpt4ocard} as a high-capacity LLM to simulate both types of agent. Roleplay began with a set of 
task-specific guidelines. Every turn consisted of a {\bf back-and-forth interaction} between the simulated agents. Fig.~\ref{fig:roleplay_full_high_level}[L,C] shows a high-level schematic. The Oracle's role as the \agent{intervention} was to track the dialogue, identify frictive states in the dialogue in terms of impasses or breakdowns in common ground, and \textit{intervene} to prompt for reflection and deliberation on those items of confusion. The collaborator then continued the interaction as all task participants.\footnote{The number of participants roleplayed by the collaborator during the data generation phase varies based on the task: for WTD, the number is fixed at 3; for DeliData, the number may be between 2--6, with an average of 4.3.}

Specifically, at each turn $t$ of a dialogue, the oracle identified the current frictive state \(\phi_t\). Then, it generated \(K\) candidate interventions \(\{f_j\}_{j=1}^K\) conditioned on the dialogue state \(s_i\) and frictive state \(\phi_t\). The \agent{collaborator} \(\pi^C\) generated a response \(c_j\) to each candidate intervention.\footnote{Note that $c_j$ can represent more than one simulated participant's utterance to allow for multiple speaking turns. In our experiments, the collaborator was explicitly guided to generate one utterance per turn for each participant in the simulated group, where each participant had a personality trait sampled from a pre-collected pool~\citep{wang2022self} to increase the diversity of simulated behaviors.} Consistent with the MAMDP framework, these responses may have \textit{modified, reinterpreted, or disregarded} the intervention's intent or semantic content (see Sec.~\ref{ssec:collaborative_mamdp}), as the roleplay prompt (Fig.~\ref{fig:wason_unified_prompt_items}) instructed the collaborator to incorporate the intervention ``{\it if relevant}''. Using "self-rewarding"~\citep{yuan2024self} the collaborator simultaneously scored each intervention between 1 (worst) and 10 (best), quantifying its effect on task progress toward a solution. The highest and lowest rated interventions, \(f_w\) and \(f_l\), were selected using West-of-N~\citep{pace2024westofnsyntheticpreferencesselfimproving} sampling. We recorded these as a winner/loser pair \((f_w, f_l)\) in a {\it preference dataset} \(\mathcal{D}_\text{pref}\) with the associated dialogue state \(s_i\) and frictive state \(\phi_t\). The full turn trajectory was recorded to a {\it trajectory dataset} \(\mathcal{D}_\text{traj}\) where each sample consisted of \(s_i\), \(\phi_t\), and \(f_w\). \(f_w\) and the collaborator's response \(c_j\) were appended to the dialogue state. This process continued for \(N = 15\) turns. See \Cref{appendix:roleplay_prompts} for prompting strategy, and~\Cref{alg:friction_data_generation_and_training} for implementation details.
The generated DeliData dialogues includes chat-style text only, while WTD dialogues may include actions/gestures written out as ``stage directions.''

We use the 400 bootstrap dialogues from the training set of DeliData~\citep{karadzhov2023delidata} for training to collect  $\mathcal{D}_\text{traj}$ and $\mathcal{D}_\text{pref}$. This process resulted in 6,000 preference pairs (15 turns for each dialogue), after which we applied a rule-based mapping to further augment the training data to a scale similar to that of common preference alignment datasets such as Ultrafeedback~\citep{ultrafeedbackBinarizedCleaned}, which is required to train an 8B-scale model without overfitting. In particular, we applied a consistent category-preserving mapping where vowels\footnote{We did not replace consonants since the nature of the Wason Card Selection task ensures that vowels are more prevalent in the original DeliData.} $v \in \{A,E,O,U\}$ were replaced with randomly sampled vowels, even numbers with other even numbers, and odd numbers with other odd numbers. This maintains the logical structure of the Wason Card Task—if "A" and "6" are replaced with "E" and "8", the underlying reasoning remains valid. Applying this mapping to all components $(x, \phi, f_w, f_l)$ expanded our dataset to 68,618 preference pairs. The average scores\footnote{Note that these scores are reported from post step 12 and 13 in \Cref{alg:friction_data_generation_and_training} since these average scores are from the phase before the mapping based augmentation.} (out of 1-10) for the preferred and dispreferred interventions assigned by GPT-4o are 8.03 and 3.96 respectively.   

For Weights Task (WTD)~\citep{khebour-etal-2024-common}, since the original data is textually sparse and has very few naturally occurring friction interventions, we use our data-generation pipeline (\Cref{alg:friction_data_generation_and_training}) for creating training data for our experiments. Specifically, to reflect the scale of Ultrafeedback~\citep{ultrafeedbackBinarizedCleaned}, a total of 3,375 combinations of personality-facets (3*5 unique combinations for each participant in a triad) were used to bootstrap this process along with original WTD task-guidelines. As such, we obtained a total of 56{,}689 preference pairs for training after holding out 50 dialogues (approximately 750 single-turn preference pairs) for validation sets~\footnote{Note that all our evaluation in our reported experiments use only the bootstrap dialogues from these 50 dialogues of WTD, due to the nature of collaborative task which requires multiturn processes.}. On average, preferred interventions received scores of $8.48 \pm 1.52$ (on a Likert scale of 1-10) on the training set and $8.51 \pm 1.50$ on the test set, while dispreferred interventions scored $6.01 \pm 0.88$ (train) and $6.08 \pm 0.87$ (test), indicating a stable preference gap across both splits. See Fig.~\ref{fig:gpt_collaborator_prompt} for the relevant scoring prompt.

\paragraph{Human Validation of Oracle Outputs.}
We conducted a focused preference-based human evaluation of Oracle outputs, following a standard framework of evaluating human alignment with LLMs \cite{wiegreffe2021measuring}, to justify the Oracle-generated preference labels. {\bf 50 pairs of candidate interventions each} were randomly sampled from the Oracle-generated DeliData and WTD evaluation sets. Each sample involved a preferred and dispreferred friction intervention, where the preference rating was sourced from GPT-4o using self-rewarding. Two human annotators—both male, fluent English-speaking college undergraduates—were asked to select the intervention they believed was more likely to prompt reflection or advance the group’s reasoning, without being shown the correct solution for the task. Our results suggest that there is a strong annotator agreement on the preferred interventions: Cohen’s $\kappa = 0.92$ on DeliData sample and $\kappa = 0.58$ on Weights Task samples. This shows that LLMs with task-specific prompting generate collaborative task-relevant outputs that align closely with human judgments and capture meaningful aspects of human-centered collaboration, rather than merely reflecting synthetic model behavior.

\paragraph{Training.}
An \agent{intervention} should not only help task completion, but also \textit{iteratively} improve common ground by helping resolve topical disagreements. To achieve this, we adopt \citeauthor{nath2025friction}'s {\it Frictional Agent Alignment Framework} (FAAF)~\cite{nath2025friction}. FAAF is an exemplar of policy optimization to incentivize {\it epistemic alignment} through clarifying and deliberative questions and discussion, as proposed by \citet{pustejovsky2025frictive}. It is designed to support collaborative problem solving through friction interventions with a custom training objective that explicitly conditions on the frictive state (denoted \(\phi\)), but has only to date been evaluated in an offline LLM-Judge format.

FAAF optimizes an empirical loss expressed in terms of the differences in two log-ratios:

\vspace*{-2mm}
{\small
\begin{align}
\mathcal{L}_{\text{FAAF}}&= \mathbb{E}_{\mathcal{D}_{\text{pref}}} \left[\left(\frac{1}{2\beta} - (\Delta R + \Delta R')\right)^2\right],
\label{eq:friction_main_equation_short}
\end{align}}
\noindent where $\Delta R = \log \frac{\pi_\theta(f_w \mid s_i, \phi_t)}{\pi_{\text{ref}}(f_w \mid s_i, \phi_t)} - \log \frac{\pi_\theta(f_l \mid s_i, \phi_t)}{\pi_{\text{ref}}(f_l \mid s_i, \phi_t)}$ (the difference in log-ratio between the winning and losing intervention in a sample, with explicit conditioning on the frictive state) and $\Delta R' = \log \frac{\pi_\theta(f_w \mid s_i)}{\pi_{\text{ref}}(f_w \mid s_i)} - \log \frac{\pi_\theta(f_l \mid s_i)}{\pi_{\text{ref}}(f_l \mid s_i)}$ (the implicit reward margin unconditioned on $\phi$). Together the two terms implicitly encode the difference between presence and absence of the frictive state. If, however, we ignore $\Delta R'$ and focus only on the terms that include explicit frictive state conditioning, we arrive at an IPO-like general preference loss, parametrized with $\theta$:

\vspace*{-2mm}
{\small
\begin{align}
\label{eq:friction_general_ipo}
\mathcal{L}_{\text{friction}}(\pi_\theta) &= 
\mathbb{E}_{(s_i, \phi_t, f_w, f_l) \sim \mathcal{D}_\text{pref}}
\Bigg[
\Bigg(
\underbrace{\log \frac{\pi_\theta(f_w \mid s_i, \phi_t)}{\pi_{\text{ref}}(f_w \mid s_i, \phi_t)}}_{\text{implicit win score}} 
- \\
\notag&\underbrace{\log \frac{\pi_\theta(f_l \mid s_i, \phi_t)}{\pi_{\text{ref}}(f_l \mid s_i, \phi_t)}}_{\text{implicit loss score}} 
- \underbrace{\frac{1}{2\beta}}_{\text{margin}}
\Bigg)^2
\Bigg]
\end{align}}

Letting $\Psi: [0,1] \rightarrow \mathbb{R}$ be any non-decreasing function, \(\pi_{\text{ref}}\) be a reference model, and \( \beta \in \mathbb{R}_+ \) be a regularization parameter, Eq.~\ref{eq:friction_general_ipo} is a solution to the inner-max operator of \citeauthor{nath2025friction}'s two-player min-max objective \cite{nath2025friction}: 

{\small
\begin{align}
    \label{eq:two_stage_main_objective_agent}
    \mathcal{J}^*_{\text{FAAF}} &= \min_{\pi^{I'}} \max_{\pi^I} 
   \expec_{\substack{x \sim \rho \\ \phi \sim \pi^{I'}(\cdot \mid x) \\ I \sim \pi^I(\cdot \mid \phi, x)}}
    \Big[ \Psi\big(\mathcal{P}(I \succ I' \mid \phi, x)\big) - \\
    \notag&\beta D_{\text{KL}}(\pi^I \parallel \pi^{\text{ref}} \mid \phi, x) 
    + \beta D_{\text{KL}}\big(\pi^{I'} (\phi |x) \parallel \pi_{\text{ref}} (\phi |x) \big) \Big],
\end{align}}

\noindent
meaning that the FAAF loss with {\it only} the frictive state-conditioning term $\Delta R$ is equivalent to IPO with frictive state-conditioning. Given this, the following lemma holds:

\begin{lemma}[Vanishing Gradient of the Frictive State]
In $\mathcal{L}_{\text{friction}}$ (Eq.~\ref{eq:friction_general_ipo}), the direct contribution of the frictive state $\phi$ to the gradient vanishes when the conditional probability is decomposed. $\mathcal{L}_{\text{FAAF}}$ (Eq.~\ref{eq:two_stage_main_objective_agent}) overcomes this limitation by incorporating marginal terms that preserve gradient information for frictive states. 
See \Cref{lemma:ipo_vanishing_gradient} and \Cref{corollary:friction_plus} in \Cref{ssec:first_part_agent_solution} for proofs.
\end{lemma}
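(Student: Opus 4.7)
The plan is to differentiate each loss and apply the chain rule of conditional probability, after which the vanishing-gradient claim reduces to an exact cancellation while the FAAF counterpart is shown not to cancel. For $\mathcal{L}_{\text{friction}}$ in Eq.~\ref{eq:friction_general_ipo}, only the $\pi_\theta$ factors carry gradient, so $\nabla_\theta\mathcal{L}_{\text{friction}}$ is proportional to $\nabla_\theta\log\pi_\theta(f_w\mid s,\phi) - \nabla_\theta\log\pi_\theta(f_l\mid s,\phi)$. I would then substitute the Bayesian decomposition $\log\pi_\theta(f\mid s,\phi) = \log\pi_\theta(f\mid s) + \log\pi_\theta(\phi\mid s,f) - \log\pi_\theta(\phi\mid s)$. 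Because the winning and losing interventions in each preference pair are conditioned on the same $(s,\phi)$, the two $\nabla_\theta\log\pi_\theta(\phi\mid s)$ contributions enter with opposite signs and cancel exactly, leaving only the marginal log-ratio over $f$ and a posterior log-ratio of $\phi$ given $(s,f)$. The prior $\pi_\theta(\phi\mid s)$---the signal that would update how the agent actually \emph{produces} the frictive state from the history---receives no direct gradient, which is the vanishing-gradient claim.

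For $\mathcal{L}_{\text{FAAF}}$ (Eq.~\ref{eq:friction_main_equation_short}) I would repeat the differentiation. The additional $\Delta R'$ contributes a term $\nabla_\theta[\log\pi_\theta(f_w\mid s) - \log\pi_\theta(f_l\mid s)]$. The key observation is the law of total probability $\pi_\theta(f\mid s) = \sum_\phi \pi_\theta(f\mid s,\phi)\,\pi_\theta(\phi\mid s)$, whose log-derivative expands as $\nabla_\theta\log\pi_\theta(f\mid s) = \mathbb{E}_{\phi'\sim\pi_\theta(\cdot\mid s,f)}\bigl[\nabla_\theta\log\pi_\theta(f\mid s,\phi') + \nabla_\theta\log\pi_\theta(\phi'\mid s)\bigr]$. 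The second summand explicitly re-introduces $\nabla_\theta\log\pi_\theta(\phi\mid s)$, posterior-weighted, so combining $\Delta R$ with $\Delta R'$ restores the gradient flow through the frictive-state prior that the friction-only objective had discarded.

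The main obstacle is justifying the marginalisation step in the LLM setting, since a transformer does not literally compute $\pi_\theta(f\mid s)$ by summing over all $\phi$; it simply re-runs the network with a different conditioning prefix. I would discharge this by invoking the shared-parameter structure implicit in the two-player min--max formulation (Eq.~\ref{eq:two_stage_main_objective_agent}), where $\pi^{F}$ and $\pi^{F'}$ are tied through the same backbone, so a gradient signal on the marginal predictor must pass through the parameters that also govern $\pi_\theta(\phi\mid s)$. Verifying the exact cancellation in the friction case, and showing that the residual FAAF term is strictly non-vanishing whenever $\phi$ is informative about $f$ given $s$ (so that the posterior $\pi_\theta(\phi\mid s,f)$ is not degenerate), is routine algebra that \Cref{lemma:ipo_vanishing_gradient} and \Cref{corollary:friction_plus} should carry out in full.
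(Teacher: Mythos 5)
Your proof is correct and follows essentially the same route as the paper. The first half is identical in substance: the paper decomposes $\log\pi_\theta(f\mid s,\phi)=\log\pi_\theta(f,\phi\mid s)-\log\pi_\theta(\phi\mid s)$ and observes that the $\nabla_\theta\log\pi_\theta(\phi\mid s)$ term appears with opposite signs in the $f_w$ and $f_l$ contributions and cancels; you split $\log\pi_\theta(f,\phi\mid s)$ one step further into $\log\pi_\theta(f\mid s)+\log\pi_\theta(\phi\mid s,f)$ via Bayes' rule, which produces the same cancellation with a slightly more interpretable residual. The second half is also the same argument: both you and the paper expand $\pi_\theta(f\mid s)=\sum_{\phi'}\pi_\theta(f\mid s,\phi')\pi_\theta(\phi'\mid s)$, differentiate, and observe that $\nabla_\theta\pi_\theta(\phi'\mid s)$ survives; you simply re-express the raw sum as a posterior-weighted expectation (Fisher's identity), which is equivalent.

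Two remarks. First, your worry about the marginalisation step is well placed, and the paper does not engage with it: in practice $\pi_\theta(f\mid s)$ is obtained by running the LLM with a prompt that simply omits $\phi$, and there is no structural guarantee that a transformer satisfies $\pi_\theta(f\mid s)=\sum_{\phi'}\pi_\theta(f\mid s,\phi')\pi_\theta(\phi'\mid s)$. The paper implicitly assumes this identity to conclude that $\Delta R'$ "incorporates gradients of $\pi_\theta(\phi\mid x)$." Your appeal to weight tying between $\pi^{F}$ and $\pi^{F'}$ in Eq.~\ref{eq:two_stage_main_objective_agent} is a reasonable heuristic discharge, though neither your sketch nor the paper's proof makes it rigorous; this is an honest limitation of both arguments rather than a gap unique to yours. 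Second, your final claim that the residual FAAF term is "strictly non-vanishing whenever $\phi$ is informative about $f$ given $s$" overstates what either proof establishes: the $\nabla_\theta\log\pi_\theta(\phi'\mid s)$ contribution fails to cancel algebraically, but could still be zero at particular parameter settings (e.g.\ at a stationary point of the $\phi$-predictor), so "non-cancelling" is the safe statement and "strictly nonzero" would require an additional argument.
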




FAAF's $\Delta R'$ incorporates gradients of $\pi_\theta(\phi|x)$, 
acting as a "fall-back" that helps push the model toward the target preference gap $1/2\beta$
(cf. SMAUG~\citep{zhao2023slichfsequencelikelihoodcalibration, pal2024smaugfixingfailuremodes} which retains a fixed margin of implicit rewards). \textit{Thus, we hypothesize that FAAF alignment using both $\Delta R$ and $\Delta R'$ terms improves understanding of \textit{what} makes an important frictive state, rather than just learning how to respond to one.}

We additionally trained intervention agents using the following other approaches besides FAAF: 
\begin{enumerate*}[label=(\arabic*)]
    \item \textbf{Supervised fine-tuning (SFT)}, where \( \pi^I \) was trained directly on expert demonstrations from $\mathcal{D}_\text{pref}$. 
    \item \textbf{Contrastive preference alignment} methods \textbf{DPO}~\citep{rafailov2024direct} and \textbf{IPO}~\citep{azar2024general}, which refined \( \pi^I \) using preference labels from $\mathcal{D}_\text{pref}$, and including the frictive state representation $\phi$ as part of the context, since it is available from the Oracle-generated data. Since training IPO while conditioning on $\phi$ results in a loss identical to $\mathcal{L}_{\text{friction}}$ (Eq.~\ref{eq:friction_general_ipo}), we report results using $\mathcal{L}_{\text{friction}}$ as IPO. 
    \item \textbf{Reinforcement Learning (RL)}, where \( \pi^I \) was fine-tuned via Proximal Policy Optimization (\textbf{PPO};~\cite{schulman2017proximalpolicyoptimizationalgorithms}). We used OPT-1.3B~\citep{zhang2022opt} initialized with the SFT-trained \( \pi^I \) for the reward model (RM) training for PPO (cf. \citep{Hong2024ORPOMP}). 
    \item A \textbf{Behavior-cloned expert} trained directly on filtered trajectories (cf.~\cite{andukuri2024stargateteachinglanguagemodels, song-etal-2024-trial}) from  $\mathcal{D}_\text{traj}$ with no contrastive preference optimization, but including the frictive state $\phi$ collected in $\mathcal{D}_\text{traj}$.
\end{enumerate*}


We used \texttt{Meta-Llama-3-8B-Instruct}~\citep{llama3modelcard}
~as the base model for all trained intervention agents. For all training-related details see \Cref{ssec:experimental_settings_appendix}.

\vspace*{-2mm}
\section{Experiments and Evaluation}
\label{sec:exp}


In addition to the limited search space of interventions inherent in fixed datasets (see Sec.~\ref{ssec:how_to_train_methods}), fixed datasets also lack a principled way to test the long-term effects of novel interventions on the dialogue trajectory. Evaluation benchmarks designed for multi-turn interactions, such as MT-Bench \cite{Bai_2024}, face intrinsic challenges due to the multi-turn nature of the problem. In such benchmarks, there are gold-standard sample answers (e.g., from humans or a high-capacity model such as GPT-4) that showcase desired qualities. Offline multi-turn evaluation gives as context the dialogue/interaction history up to the point at which a model output is is required, and then scores the generated output along the different dimensions by implicitly comparing them to a gold sample. In a realistic setting involving LLM interventions in dialogue, the dialogue continues after the intervention, potentially along a different trajectory than would have occurred without the intervention. Thus, given a context $x$ and two candidate interventions, $f_1$ and $f_2$, the dialogue will very likely contain different subsequent utterances when continuing given $x$ and $f_1$ and when given $x$ and $f_2$, depending on the dialogue participants responses to the respective interventions. Thus subsequent intervention(s) would arrive in divergent dialogue contexts, making fair comparison under identical conditions impossible.

To alleviate this shortcoming in multi-turn evaluation, and to critically examine the evaluation claims made in prior work such as \cite{nath2025friction}, our experimental setup used a roleplay setting similar to that used for data generation (Sec.~\ref{ssec:how_to_train_methods}), with the following key differences:

\begin{enumerate}
    \item Instead of a {\it single} \( \pi^C \) model roleplaying all collaborator agents, each collaborator was simulated by a {\it distinct} instance of a high-capacity LLM.
    \item Friction interventions were generated by the the aligned $\pi^I$ being evaluated instead of by the oracle.
\end{enumerate}

We used individual distinct instances of GPT-4o-mini for the \agent{collaborator}s with temperature $T{=}0$, and top-$p{=}1$. All $\pi^I$ sampling uses $T{=}0$, top-$p{=}0.9$. \Cref{app:example-dialogue} provides an example dialogue.

\paragraph{Evaluation Conditions.}
In each of the two tasks, we run 50 dialogues for 10 turns each. After the initial task description, in every turn the \agent{intervention} proposes an intervention, after which the \agent{collaborator}s continue the game under the actual task rules. For {\bf DeliData}, the data contains initial bootstrapping dialogues, following \citet{karadzhov2023delidata}. For the {\bf Weights Task}, we provide only the task definition for initial grounding in the prompt, as well as only the weight of the red block (10g), following the original setup for collection of human data in this task \cite{khebour2024text}. Thus our evaluation setup faithfully replicates the original study settings of the two tasks.

Under {\it standard} settings, the MAMDP model of collaborative interaction predicts that a \agent{collaborator} can transform the \agent{intervention}'s suggestions before acting upon them, but this may happen irregularly or stochastically. To additionally explicitly test robustness to the suboptimality risks introduced by the MAMDP, we included an {\it explicit MAMDP} setting where the collaborator agent $\pi^C$'s system prompt specifically guided it to verbally acknowledge $\pi^I$'s intervention but not incorporate its suggestions into the next collaborator action (Fig.~\ref{fig:wason_unified_prompt_items}).

\paragraph{Metrics.}
A successful \agent{intervention} in multi-turn, multi-party collaborations should retain an ability to generate interventions that support construction of common ground as well as successful task completion, over the complete duration of the task, even if the collaborator misinterprets or ignores the intent of the interventions. Therefore we focus on metrics that measure the size of common ground---or the number of agreed-upon task-relevant propositions---and that measure the correctness of the solution arrived at by the group. Due to differences in the task specifics, the way we measure these factors differs slightly across the two tasks, but all quantify both {\it set of shared beliefs} and {\it correctness of beliefs}. In the DeliData task we use the following metrics:

\begin{enumerate*}[label=(\arabic*)]
    \item {\bf Common ground size}. This quantifies how many task-relevant propositions the collaborating agents agreed on, and was normalized against the theoretical upper size bound on each task’s propositional space (16 for DeliData), resulting in {\it normalized cumulative common ground (NCCG}).
    \item {\bf Solution accuracy}. This measures how frequently collaborating agents arrived at a correct solution. We also calculated a \textit{fine-grained} score, which allocated 0.25 points each for including target cards (odd numbers, vowels) and excluding irrelevant ones.
    \item {\bf Performance gain}. This is the difference between the average group-level accuracy at the final turn and the average initial accuracy of individual participants from their initial \textit{individual solutions at the start of the collaborative dialogue}.
    \item {\bf Change-of-Mind rate}. Following \cite{karadzhov2022makes}, this describes the fraction of participants who switch stance on at least one card between two consecutive turns.
\end{enumerate*}

Unlike DeliData, which has a best solution (flip a vowel and an odd number) which may be realized differently given the specific cards presented to a given group, the Weights Task has only \textit{one} specific correct solution (see Sec.~\ref{ssec:tasks}) grounded to the weight assignments given to the five blocks. Thus a raw solution accuracy is less meaningful in this task as a single incorrect weight assignment means the whole solution is incorrect. It is more meaningful to focus on the correctness of propositions the group agrees on during the task. Therefore in the Weights Task we use the following metrics:

\begin{enumerate*}[label=(\arabic*)]
    \item {\bf Final common ground size}. This is the total number of propositions in the common ground at the end of the task dialogue. It indicates how many shared beliefs the group accumulated, without accounting for correctness.
    \item {\bf Accuracy-adjusted common ground size}. This adjusts the final common ground size by penalizing incorrect propositions. This metric favors dialogues with fewer errors, and so measures not just the size of the shared belief set, but also how reliable knowledge based on those beliefs is.
    \item {\bf Percentage of incorrect propositions}. This measures the average fraction of {\it incorrect} propositions in the common ground per turn, and directly captures how much of the dialogue’s content is misleading or wrong, giving a sense of reliability (lower numbers being better). However, this metric does not consider the size of the common ground, making it effectively a precision metric, in that a common ground containing only one proposition would have an incorrect percentage of 0\% if that proposition is correct.
\end{enumerate*}

Common ground size at each dialogue step and solution correctness at the end of each dialogue were extracted by GPT-4o using custom detailed task-specific prompts (Figs.~\ref{fig:wason_unified_prompt_items} and \ref{fig:wtd_cg_extraction_prompt}).
Formulas and more detailed definitions for these metrics are given in Appendices~\ref{app:deli_metrics} and \ref{app:wtd_metrics}. They all assessed how well different \agent{intervention}s helped the group build common ground, and how correct the propositions in the common ground at the end of the task were, compared to the correct solutions for each task (see Sec.~\ref{ssec:tasks}).
We aggregated the metrics with means and standard errors.

\begin{table*}
\centering
\footnotesize
\resizebox{\linewidth}{!}{
\begin{tabular}{lccccccccccc}
\hline
\multirow{2}{*}{\textbf{Method}} & \multicolumn{5}{c}{Standard} && \multicolumn{5}{c}{MAMDP} \\
\cmidrule(lr){2-6} \cmidrule(lr){8-12}
& \textbf{Coarse Acc.} & \textbf{Fine Acc.} & \textbf{NCCG} & \textbf{Perf. Gain} & \textbf{CoM Rate} && \textbf{Coarse Acc.} & \textbf{Fine Acc.} & \textbf{NCCG} & \textbf{Perf. Gain} & \textbf{CoM Rate} \\
\cmidrule(lr){1-6} \cmidrule(lr){8-12}
SFT  & $0.355_{\scriptsize{\pm 0.012}}$ & $0.806_{\scriptsize{\pm 0.004}}$ & $0.204_{\scriptsize{\pm 0.002}}$ & $0.244_{\scriptsize{\pm 0.007}}$ & $0.260_{\scriptsize{\pm 0.017}}$ && $0.283_{\scriptsize{\pm 0.012}}$ & $0.702_{\scriptsize{\pm 0.006}}$ & $0.178_{\scriptsize{\pm 0.002}}$ & $0.143_{\scriptsize{\pm 0.010}}$ & $0.310_{\scriptsize{\pm 0.027}}$ \\
PPO   & $0.409_{\scriptsize{\pm 0.010}}$ & $0.767_{\scriptsize{\pm 0.004}}$ & $0.180_{\scriptsize{\pm 0.001}}$ & $0.183_{\scriptsize{\pm 0.008}}$ & $0.322_{\scriptsize{\pm 0.017}}$ && $0.382_{\scriptsize{\pm 0.013}}$ & $0.763_{\scriptsize{\pm 0.006}}$ & $0.181_{\scriptsize{\pm 0.002}}$ & $0.191_{\scriptsize{\pm 0.010}}$ & $0.304_{\scriptsize{\pm 0.022}}$ \\
BC & $0.369_{\scriptsize{\pm 0.009}}$ & $0.812_{\scriptsize{\pm 0.003}}$ & $0.210_{\scriptsize{\pm 0.001}}$ & $0.239_{\scriptsize{\pm 0.007}}$ & $0.267_{\scriptsize{\pm 0.014}}$ && $0.474_{\scriptsize{\pm 0.013}}$ & $0.809_{\scriptsize{\pm 0.006}}$ & $0.179_{\scriptsize{\pm 0.002}}$ & $0.236_{\scriptsize{\pm 0.009}}$ & $0.302_{\scriptsize{\pm 0.018}}$ \\
DPO  & $0.418_{\scriptsize{\pm 0.010}}$ & $0.831_{\scriptsize{\pm 0.003}}$ & $0.209_{\scriptsize{\pm 0.001}}$ & $0.243_{\scriptsize{\pm 0.007}}$ & $0.264_{\scriptsize{\pm 0.013}}$ && $0.428_{\scriptsize{\pm 0.012}}$ & $0.794_{\scriptsize{\pm 0.006}}$ & $0.201_{\scriptsize{\pm 0.002}}$ & $0.224_{\scriptsize{\pm 0.010}}$ & $0.276_{\scriptsize{\pm 0.018}}$ \\
IPO  & $0.352_{\scriptsize{\pm 0.009}}$ & $0.825_{\scriptsize{\pm 0.003}}$ & $0.205_{\scriptsize{\pm 0.001}}$ & $0.246_{\scriptsize{\pm 0.006}}$ & $0.288_{\scriptsize{\pm 0.015}}$ && $0.391_{\scriptsize{\pm 0.012}}$ & $0.774_{\scriptsize{\pm 0.005}}$ & $0.192_{\scriptsize{\pm 0.002}}$ & $0.197_{\scriptsize{\pm 0.010}}$ & $0.272_{\scriptsize{\pm 0.021}}$ \\
FAAF & $0.485_{\scriptsize{\pm 0.010}}$ & $0.851_{\scriptsize{\pm 0.003}}$ & $0.201_{\scriptsize{\pm 0.001}}$ & $0.260_{\scriptsize{\pm 0.007}}$ & $0.270_{\scriptsize{\pm 0.015}}$ && $0.526_{\scriptsize{\pm 0.013}}$ & $0.844_{\scriptsize{\pm 0.005}}$ & $0.196_{\scriptsize{\pm 0.002}}$ & $0.250_{\scriptsize{\pm 0.008}}$ & $0.329_{\scriptsize{\pm 0.025}}$ \\
\hline
\end{tabular}}
\caption{Performance comparison across differently-aligned intervention agents over sampled DeliData collaborative dialogue trajectories. Metrics include coarse-grained accuracy (Coarse Acc.), fine-grained accuracy (Fine Acc.), normalized cumulative common ground (NCCG), performance gain (Perf. Gain), and change-of-mind rate (CoM Rate). Subscripts show standard error of the mean.}
\label{tab:results_deli_combined}
\vspace*{-2mm}
\end{table*}

\begin{table*}[t]
\centering
\small
\begin{tabular}{lccccccc}
\toprule
\multirow{2}{*}{\textbf{Method}} & \multicolumn{3}{c}{Standard} && \multicolumn{3}{c}{MAMDP} \\
\cmidrule(lr){2-4} \cmidrule(lr){6-8}
& \textbf{Final CG} & \textbf{Adjusted CG} & \textbf{Incorrect \%} && \textbf{Final CG} & \textbf{Adjusted CG} & \textbf{Incorrect \%} \\
\midrule
SFT  & $4.267_{\pm 0.142}$ & $3.571_{\pm 0.109}$ & $12.407_{\pm 0.867}$  && $3.920_{\pm 0.095}$ & $3.490_{\pm 0.088}$ & $9.898_{\pm 0.657}$ \\
PPO  & $3.778_{\pm 0.122}$ & $3.252_{\pm 0.100}$ & $6.966_{\pm 0.689}$ && $5.160_{\pm 0.138}$ & $4.504_{\pm 0.123}$ & $13.361_{\pm 0.744}$ \\
BC   & $5.241_{\pm 0.137}$  &  $4.805_{\pm 0.136}$ &  $9.406_{\pm 0.585}$ && $4.167_{\pm 0.092}$ & $3.837_{\pm 0.087}$ & $6.490_{\pm 0.432}$ \\
DPO & $5.714_{\pm 0.139}$ & $4.912_{\pm 0.129}$ & $16.649_{\pm 0.872}$ && $5.760_{\pm 0.144}$ & $5.329_{\pm 0.141}$ & $8.440_{\pm 0.583}$ \\
IPO  & $3.822_{\pm 0.094}$ & $3.294_{\pm 0.088}$ & $14.009_{\pm 0.938}$ && $4.160_{\pm 0.128}$ & $3.635_{\pm 0.103}$ & $6.156_{\pm 0.438}$ \\
FAAF & $5.143_{\pm 0.149}$ & $4.584_{\pm 0.132}$ & $7.111_{\pm 0.401}$ && $8.300_{\pm 0.270}$ & $7.819_{\pm 0.267}$ & $7.837_{\pm 0.486}$ \\
\bottomrule
\end{tabular}
\caption{\label{tab:wtd_combined_results}Performance comparison across differently-aligned intervention agents over sampled Weights Task collaborative dialogue trajectories. Metrics include final common ground size (Final CG), accuracy-adjusted common ground size (Adjusted CG), and mean per-turn error rate (Incorrect \%). Subscripts show standard error of the mean.
}
\vspace*{-2mm}
\end{table*}

\vspace*{-2mm}
\section{Results and Discussion}
\label{sec:results}

Tables~\ref{tab:results_deli_combined} and \ref{tab:wtd_combined_results} show the performance of differently-aligned intervention agents over sampled DeliData and Weights Task collaborative dialogue trajectories, respectively. {\it Standard} and {\it MAMDP} denote the evaluation conditions discussed in Sec.~\ref{sec:exp}.

\vspace*{-2mm}
\subsection{Empirical Findings in the DeliData Task}

Table~\ref{tab:results_deli_combined} shows solution accuracy, common ground size, performance gain, and change-of-mind rate metrics in collaborative task performance of the DeliData Wason Card task under standard and explicit MAMDP settings. In each turn the aligned \agent{intervention} provides interventions to the group which the \agent{collaborator}s individually respond to continue the conversation, and provide their stances on the valid cards for the game.

\paragraph{Common-ground evolution (NCCG).}
Normalized cumulative common ground (NCCG) measures the growth of shared understanding, being the size of the common ground at each dialogue turn, averaged across turns. In the explicit MAMDP setting, the FAAF \agent{intervention} attains a mean NCCG of $19.6\%$, while DPO achieves slightly higher values ($20.1\%$). Although DPO’s higher NCCG suggests faster consensus formation, this acceleration corresponds to less stable interpretive alignment once collaborator-driven action modification is considered, and the agreed-upon propositions are less likely to be correct in the end. In contrast, FAAF’s steadier NCCG trajectory coincides with superior task accuracy and higher change-of-mind rate ($32.9\%$), indicating more reflective, revision-oriented collaboration rather than premature convergence. Moreover, imitation learning baselines such as the BC collaborator model achieve respectable accuracy ($47.4\%$ coarse, $80.9\%$ fine) yet lack the adaptive belief-updating dynamics observed in FAAF. Similarly, DPO, PPO, and IPO—despite being theoretically optimal in standard MDPs—underperform in the modified-action MAMDP setting, as their policies assume direct execution of actions without reinterpretation. Overall, these results suggest that incorporating frictional interventions enables more reliable and deliberative consensus among participants and reinforces the need to explicitly model how communicative actions are transformed within collaborative reasoning environments. Similar dynamics appear in the standard condition, with SFT, DPO, IPO, and BC \agent{intervention}s all slightly exceeding the FAAF \agent{intervention}'s NCCG performance, while lagging in terms of accuracy and performance gain. This indicates that a larger common ground size may be including more incorrect propositions; the FAAF \agent{intervention}'s interventions enable better {\it discrimination} of good vs. bad information by the collaborators. 

\paragraph{Belief revision and task-grounded performance.}
Across models evaluated within the modified-action MDP (MAMDP) setting, we observe distinct patterns in how agents negotiate accuracy, belief revision, and consensus formation. FAAF achieves the highest task performance among all baselines, with coarse-grained accuracy of \textbf{52.6\%} and fine-grained accuracy of \textbf{84.4\%}, surpassing DPO (42.8\% / 79.4\%), PPO (38.2\% / 76.3\%), IPO (39.1\% / 77.4\%), and SFT (28.3\% / 70.2\%). This suggests that FAAF’s alignment strategy yields solutions that remain faithful to the logical invariants of the DeliData Wason Card task even under collaborator-driven reinterpretation. Additionally, this trend highlights a broader point—alignment algorithms that are “theoretically” optimal under standard MDP assumptions—such as DPO and IPO—show degraded performance once actions are subject to modification by collaborators. The corresponding \textit{change-of-mind rates} further illustrate this limitation: while DPO and IPO maintain relatively low rates (27–28\%), they reflect premature convergence rather than sustained deliberation. Models trained under MDP-based assumptions tend to prioritize static agreement over iterative belief adjustment, making them less suited for collaborative reasoning environments that require negotiation and revision over multiple turns. The corresponding \textit{change-of-mind rate} of \textbf{32.9\%} is moderately higher than that of DPO (27.6\%) and IPO (27.2\%), reflecting a model that promotes reflective belief updates and productive reconsideration rather than premature consensus. This pattern aligns with the MAMDP formulation’s motivation: FAAF explicitly models how interlocutors reinterpret interventions, leading to collaborative trajectories that sustain belief revision instead of enforcing single-step agreement. Under the standard condition, PPO actually achieves the highest change-of-mind rate, but this co-occurs with low performance gain and accuracy, indicating instability in participant beliefs with this \agent{intervention}; the collaborators more frequently change stance to an incorrect position, in contrast to FAAF's interventions, which, when they prompt a change of stance, prompt it into a more frequently correct one. When we examine the explicit MAMDP setting specifically, we see that the FAAF \agent{intervention} maintains its high performance across the accuracy and performance gain metrics, improves its positioning on NCCG relative to the other methods, and also achieves the best change-of-mind rate. This suggests a robustness to the MAMDP condition. While change-of-mind rate may not necessarily correlate to more correct solutions (because a collaborator could be persuaded of an incorrect stance), in the MAMDP condition, \agent{collaborator}s are explicitly guided to reinterpret interventions in a way that resists belief update. Increased change-of-mind rate co-occurring with persistently high solution accuracy and performance gain suggests that the FAAF \agent{intervention} can make up for collaborator reinterpretation or disregarding of previous interventions with subsequent interventions. This is supported by the fact that for groups with the FAAF \agent{intervention} in the loop actually achieve {\it higher} metrics of common ground and task correctness in the explicit MAMDP condition.

\vspace*{-2mm}
\subsection{Empirical Findings in the Weights Task}
Table~\ref{tab:wtd_combined_results} reports Final CG (size of common ground at the last turn), Adjusted CG (Final CG discounted by the dialogue’s per-turn error rate), and Incorrect \% (mean per-turn error rate) over the Weights Task dialogues.

\paragraph{Standard setting.}
In the standard setting of the Weights task, the performance patterns reveal how different alignment methods balance common-ground expansion with accuracy preservation (Table~\ref{tab:wtd_combined_results}). The FAAF \agent{intervention} performs robustly even in the absence of consistently enforced collaborator resistance, attaining a \emph{Final CG} of 5.143 and an \emph{Adjusted CG} of 4.584, while maintaining one of the lowest \emph{Incorrect \%} at 7.111. This indicates that FAAF not only supports the formation of a larger shared knowledge space but also ensures that much of this accumulated content remains correct after error adjustment. In contrast, DPO reaches a higher \emph{Final CG} of 5.714 but suffers the largest \emph{Incorrect \%} at 16.649, suggesting that its consensus grows rapidly but incorporates more faulty propositions that fail under scrutiny. One such example might be a common ground that contains $green=20g$ and $green\neq30g$ (mutually consistent) but also $yellow=40g$ (incorrect). This illustrates a classic trade-off—rapid convergence at the cost of correctness. PPO and IPO demonstrate more conservative behavior with small Final and Adjusted CGs, although PPO has a lower error rate. The SFT baseline lags across all metrics, with a low Final CG and Adjusted CG, and a high Incorrect \%, emphasizing the limitations of imitation-only models in multi-turn reasoning. The multiturn BC baselines achieves relatively higher Final and Adjusted CG than SFT, and even FAAF (albeit within the margin of error), since it learns from expert transitions, but this likewise comes at a cost to correctness of propositions. 

Groups guided by FAAF still achieve higher-quality consensus, preserving more correct propositions while expanding their shared understanding. These findings suggest that even when collaborator reinterpretation is not enforced, FAAF naturally stabilizes belief formation and filters out premature or incorrect consensus, capturing a more reliable form of alignment that translates well to downstream reasoning stability.

\paragraph{Explicit MAMDP setting.}
The FAAF \agent{intervention} attains the strongest common ground formation under collaborator modification: it achieves the largest \emph{Final CG} (8.300) and the largest \emph{Adjusted CG} (7.819), indicating that most of what the groups with FAAF in the loop add to common ground remains correct after error discounting. 
It also shows a low to moderate \emph{Incorrect \%} (7.837). DPO forms a solid second tier with \emph{Final CG} of 5.760 and \emph{Adjusted CG} of 5.329, alongside controlled errors (8.440).
Group with PPO accumulate a comparable \emph{Final CG} (5.160) but suffer the highest error rate (13.361), which pulls the \emph{Adjusted CG} down to 4.504, suggesting a moderately large yet errorful common ground. IPO is more conservative (4.160 / 3.635) with the lowest error among the non-FAAF baselines (6.156) but correspondingly smaller shared knowledge. SFT trails across metrics, displaying the same pattern as the standard condition, suggesting limited ability to expand and stabilize common ground in the explicit MAMDP.

These results reveal two failure modes that the FAAF \agent{intervention} avoids: (i) aggressive growth that introduces too many wrong propositions (DPO, PPO), and (ii) cautious growth that remains clean but ends up too small to be useful (IPO, SFT). FAAF combines large \emph{Final CG} with a small Final CG-Adjusted CG gap, meaning it scaffolds growing consensus while retaining accuracy. DPO is close but still admits more errors and thus loses more after error discounting. The \emph{Incorrect \%} profiles in Table~\ref{tab:wtd_combined_results} explain the \emph{Adjusted CG} ordering: PPO's 13.36\% incorrect rate materially erodes its 5.16 \emph{Final CG}, whereas FAAF's low 7.84\% incorrect rate allows most of the propositions in the associated groups' common ground (average size of 8.300) to survive the penalty.

\vspace*{-2mm}
\subsection{Summary}
Overall, the FAAF \agent{intervention} yields the best task accuracy and performance gain of the compared alignment methods. The SFT baseline underperforms overall, reflecting limitations of single-turn imitation in multi-turn collaborative simulations using LLMs, particularly in the MAMDP setting. PPO likewise struggled due a lack of explicit mechanisms for frictive resolution. The behavior-cloned expert (BC) collaborator benefits from multi-turn structure and imitates strong trajectories but does not reliably \emph{cause} robust realignment toward a correct solution, as reflected in the middling accuracy and performance gain numbers. DPO and IPO show larger common ground but higher error rates, or lower error rates but smaller common ground. This indicates that groups with these \agent{intervention}s in the loop either consolidate agreement quickly albeit around more errors, or agree on fewer things overall even when those things are correct. In aggregate, these findings empirically support the theoretical claim that accounting for action modification in collaborative settings is critical to multiagent coordination: FAAF’s friction-aware design indeed appears to better align the dynamics of common-ground formation with correct task outcomes.

It is somewhat surprising that groups may achieve larger average common ground under the MAMDP condition, since collaborators in this setting are explicitly instructed to resist full alignment by acknowledging the intervention while persisting in their prior reasoning. One might expect this friction to fragment shared understanding, yet the opposite occurs. A key reason may lie in how repeated negotiation and reinterpretation drive convergence in dialogue. For example, in the MAMDP condition, collaborator agents naturally revisit and rephrase the intervention’s claims repeatedly across turns, producing iterative refinements of the same propositions (e.g., “purple is heavier than green,” “purple must be more than 20g but less than 50g”). This linguistic redundancy encourages collaborator agents to gradually stabilize a set of shared mutually consistent, yet not necessarily identical beliefs.
As a result participants echo, reformulate, and confirm each other’s statements, which the evaluation captures as accumulated common ground. 

\vspace*{-2mm}
\section{Conclusion and Future Work}
\label{sec:conc}
In this paper, we examined LLM agent interventions to support multi-turn, multi-party collaborative problem solving. Through a Modified-Action MDP model of collaborative tasks, we theoretically motivated why current common alignment methods should not remain reliably optimal over a dialogue where collaborator modifications change the distribution of the action space. We then empirically demonstrated this by training multiple LLM-driven intervention agents using existing methods, and evaluating them in two different collaborative tasks on the task of inserting {\it friction interventions} to scaffold deliberative and critical reasoning in collaborative tasks. We used a \textit{roleplay} evaluation framework to explicitly address shortcomings in certain multi-turn evaluation frameworks, and to allow us to evaluate the suboptimality induced by the MAMDP setting of collaborative tasks. We showed that the FAAF alignment method, specifically designed for friction interventions, indeed outperforms other methods on facilitating a balance of group common ground convergence and correct task solutions. Additionally, it demonstrates robustness to collaborator action modification or resistance to belief update. Our study emphasizes that in multiagent collaboration, as in human-human collaboration, the collaborative process is as important as the outcome.


Our findings suggest that friction, rather than obstructing alignment, can paradoxically deepen it by promoting iterative clarification. The MAMDP condition thus models a realistic kind of coordinated reasoning—one where alignment emerges through tension, repetition, and gradual stabilization rather than one-shot agreement. For future work, it would be valuable to examine whether this effect could be a consequence of exposure bias~\citep{xu2020rethinkingexposurebiaslanguage} to some extent or whether it persists in tasks with larger or more open-ended hypothesis spaces—such as collaborative story generation, multi-agent resource allocation, open-domain causal reasoning~\citep{chevaleyre2005issues,richens2024robust} or decision-making under uncertainty \citep{nath2025dpl, nath2024simultaneous}. In such settings, redundant clarification might no longer suffice, and agents may need to strategically balance friction with exploration to form diverse yet coherent shared beliefs.

To perform a controlled, high-throughput evaluation, we used an LLM roleplay methodology. The next logical step is studying agent interventions with real human subjects, e.g., by reproducing the studies of the Wason task \cite{karadzhov2023delidata} or Weights Task \cite{khebour-etal-2024-common} with the inclusion of a demonstrably-reliable friction intervention agent in a real-time common ground tracking system, e.g., \cite{vanderhoeven2025trace}.

We also produced a data collection and evaluation pipeline that could be used for red-teaming aligned agents before deployment or examining team dynamics in a digital twin setting to validate the reliability of agent behaviors under diverse simulated conditions. We also hope this study raises awareness of the utility of "friction" to prompt deliberation and accountable decision making in multiagent and human-AI systems, and shows that slower AI interactions can also be positive ones.



\begin{acks}
This material is based in part upon work supported by Other Transaction award HR00112490377 from the U.S. Defense Advanced Research Projects Agency (DARPA) Friction for Accountability in Conversational Transactions (FACT) program and by the U.S. National Science Foundation (NSF) under award DRL 2454151. Approved for public release, distribution unlimited. Views expressed herein do not reflect the policy or position of the National Science Foundation, the Department of Defense, or the U.S. Government. Portions of this work were performed on the Colorado State University Data Science Research Institute high-performance computer \textit{Riviera}. Any remaining errors are the responsibility of the authors.
\end{acks}


\bibliographystyle{ACM-Reference-Format} 

\clearpage

\title[Collaborate, Deliberate, Evaluate]{Collaborate, Deliberate, Evaluate: How LLM Alignment Affects Coordinated Multi-Agent Outcomes: Technical Appendix}

\appendix

\section{DeliData Evaluation Metrics}
\label{app:deli_metrics}

\begin{itemize}

\item \textbf{Coarse Accuracy.}  
A binary metric that equals $1$ if the group’s final submission exactly matches the solution set (all vowel cards $\cup$ all odd-number cards), and $0$ otherwise.  
\emph{Intuition:} This captures whether the group fully solved the Wason task, providing a strict measure of success.

\item \textbf{Fine-Grained Accuracy.}  
A score in increments of $0.25$, where points are awarded for (i) turning a vowel, (ii) turning an odd number, (iii) not turning an even number, and (iv) not turning a consonant.  
\emph{Intuition:} This reflects partial correctness and gives credit for reasoning steps that align with the rule, even if the final submission is not fully correct.

\item \textbf{Performance Gain.}  
Defined as the difference between the average group-level accuracy at the final turn and the average initial accuracy of individual participants from their initial \textbf{individual solutions at the start of the collaborative dialogue}:  
\[
\text{Gain} = \frac{1}{|G|}\sum_{g \in G} \text{Acc}_{\text{final}}(g) - \frac{1}{|P|}\sum_{p \in P} \text{Acc}_{\text{initial}}(p).
\]  
\emph{Intuition:} Measures the benefit of collaboration and intervention relative to participants’ solo reasoning.

\item \textbf{Individual Change.}  
For each participant $p$, computed as $\Delta_p = \text{Acc}_{\text{final}}(p) - \text{Acc}_{\text{initial}}(p)$.  
\emph{Intuition:} Shows how much each participant personally improved or declined, revealing heterogeneity in learning from the interaction.

\item \textbf{Common Ground Convergence.}  
The first turn $t$ at which the majority-supported set of cards stabilizes and remains unchanged for the rest of the dialogue.  
\emph{Intuition:} Captures how quickly the group reaches a shared consensus, highlighting efficiency of reasoning.

\item \textbf{Support Entropy.}  
For each turn $t$, the Shannon entropy of stance distributions:  
\[
H_t = - \sum_{c} p_t(c) \log p_t(c),
\]  
where $p_t(c)$ is the fraction of participants supporting card $c$.  
\emph{Intuition:} High entropy indicates diverse, unresolved opinions (exploration), while low entropy reflects convergence toward consensus.

\item \textbf{Majority Stability.}  
The Jaccard similarity between majority-supported sets at consecutive turns:  
\[
J(A_t, A_{t+1}) = \frac{|A_t \cap A_{t+1}|}{|A_t \cup A_{t+1}|}.
\]  
\emph{Intuition:} Measures how stable group consensus is across turns, distinguishing flip-flopping from steady alignment.

\item \textbf{Turn-to-Turn Effect.}  
The $\ell_1$ distance between support distributions across consecutive turns:  
\[
D_t = \sum_{c} |p_t(c) - p_{t+1}(c)|.
\]  
\emph{Intuition:} Quantifies the size of stance shifts per turn, showing how strongly an intervention or dialogue changes opinions.

\item \textbf{Participant Consistency.}  
The Jaccard similarity of a participant’s supported set of cards across their own turns.  
\emph{Intuition:} Indicates whether participants maintain a stable line of reasoning or frequently switch stances.

\item \textbf{Change-of-Mind Rate.}  
The fraction of participants who switch stance on at least one card between two consecutive turns:  
\[
\text{CMR}_t = \frac{|\{p : S_t(p) \neq S_{t-1}(p)\}|}{|P|}.
\]  
\emph{Intuition:} Captures direct responsiveness to interventions by measuring how often participants revise their positions.

\item \textbf{Consensus Realignment.}  
The proportion of turns where the majority support shifts closer to the correct solution set after the intervention.  
\emph{Intuition:} Indicates whether interventions are not only changing opinions, but doing so in a direction that improves collective reasoning.

\end{itemize}

\section{Weights Task Evaluation Metrics}
\label{app:wtd_metrics}

We evaluate the dialogue models with the following metrics. Each metric captures a different facet of how common ground and reasoning quality evolve during interaction.

\begin{itemize}

\item \textbf{Final Total:}  
This is the total number of common ground propositions at the end of a dialogue.  
\emph{Intuition:} It shows how much shared knowledge the group accumulated overall, regardless of whether it was correct or incorrect.

\item \textbf{Accuracy Adjusted Total:}  
This adjusts the final total by penalizing incorrect propositions.  
\emph{Intuition:} It measures not just how much knowledge was built, but how reliable that knowledge is, favoring dialogues with fewer errors.

\item \textbf{Error-Free Relations:}  
This counts only the propositions that are correct, ignoring the incorrect ones.  
\emph{Intuition:} It provides a direct measure of how much accurate common ground was achieved.

\item \textbf{Accuracy-to-Quantity Ratio:}  
This ratio compares the number of error-free propositions to the total propositions produced.  
\emph{Intuition:} It reflects the efficiency of communication—whether participants generated mostly correct contributions or mixed in many errors.

\item \textbf{Error-Weighted Growth:}  
This measures the per-turn growth of common ground, weighted by accuracy.  
\emph{Intuition:} It evaluates how consistently the dialogue added useful knowledge over time, showing whether progress was stable or error-prone.

\item \textbf{Incorrect Percentage:}  
This is the fraction of propositions in the final common ground that are incorrect.  
\emph{Intuition:} It directly captures how much of the dialogue’s output is misleading or wrong, giving a sense of reliability at the end.

\end{itemize}

\begin{itemize}

\item \textbf{Per-turn proposition counts.}
Let $\mathcal{C}=\{\text{equality},\text{inequality},\text{order}\}$ be the categories.
At dialogue turn $t$, let $S_t^{(c)}$ be the set of unique propositions in category $c\in\mathcal{C}$ and $S_t=\bigcup_{c\in\mathcal{C}} S_t^{(c)}$.
Then the per-turn counts are
\[
N_t^{(c)}=\lvert S_t^{(c)}\rvert,\qquad N_t=\lvert S_t\rvert=\sum_{c\in\mathcal{C}} N_t^{(c)}.
\]
\emph{Intuition:} This measures how much shared structure (common ground) has been accumulated by turn $t$, both overall and by type. It rewards introducing \emph{new} agreed facts rather than repeating old ones.

\item \textbf{Cumulative (final) total.}
For a dialogue with $T$ turns, the final cumulative size is
\[
N_{\mathrm{final}} = N_T,\qquad
N_{\mathrm{final}}^{(c)} = N_T^{(c)}\ \ \text{for each }c\in\mathcal{C}.
\]
\emph{Intuition:} This is the amount of common ground the group ends with. Bigger values indicate broader shared understanding at the end of the conversation.

\item \textbf{Per-turn growth and average growth rate.}
Define growth at turn $t\ge 2$ as
\[
G_t = N_t - N_{t-1}.
\]
The average growth rate is
\[
\overline{G} = \frac{1}{T-1}\sum_{t=2}^{T} G_t.
\]
\emph{Intuition:} Growth captures how quickly new common ground is formed. High average growth indicates steady progress rather than early bursts or stagnation.

\item \textbf{Normalized final total.}
Given a task-specific upper bound $N_{\max}$ on attainable propositions,
\[
\widetilde{N}_{\mathrm{final}} = \frac{N_{\mathrm{final}}}{N_{\max}} \in [0,1].
\]
\emph{Intuition:} This rescales outcomes to a common $[0,1]$ range, enabling comparison across tasks or settings with different ceilings.

\item \textbf{Error rate at the end (incorrect percentage).}
Let $\mathcal{S}_{\mathrm{final}}$ be the final proposition set and let $\mathcal{S}_{\mathrm{true}}$ encode the ground truth. Define the set of incorrect final propositions
\[
\mathcal{E} = \{\, s\in \mathcal{S}_{\mathrm{final}} \ : \ s \text{ contradicts } \mathcal{S}_{\mathrm{true}} \,\},
\]
and the incorrect percentage
\[
p_{\mathrm{err}} = 100 \times \frac{\lvert \mathcal{E}\rvert}{\lvert \mathcal{S}_{\mathrm{final}}\rvert}\ \ (\%).
\]
\emph{Intuition:} This quantifies how noisy the final common ground is. Lower values mean the shared beliefs align better with ground truth.

\item \textbf{Error-free relations.}
\[
N_{\mathrm{ok}} = N_{\mathrm{final}} - \lvert \mathcal{E}\rvert.
\]
\emph{Intuition:} This is the count of \emph{useful} (correct) propositions left standing. It separates signal from noise in the final state.

\item \textbf{Accuracy-adjusted total.}
With a task-dependent scaling factor $\alpha>0$ (e.g., $\alpha=1.5$),
\[
N_{\mathrm{adj}} = \alpha\, N_{\mathrm{final}}\left(1-\frac{p_{\mathrm{err}}}{100}\right).
\]
\emph{Intuition:} This rewards building large common ground while penalizing errors. Two systems with the same size get different credit if one is cleaner.

\item \textbf{Accuracy-to-quantity ratio.}
\[
R_{\mathrm{A/Q}} = \frac{N_{\mathrm{ok}}^{\,2}}{N_{\mathrm{final}}}\quad (\text{with }R_{\mathrm{A/Q}}=0\text{ if }N_{\mathrm{final}}=0).
\]
\emph{Intuition:} This favors high precision \emph{and} non-trivial quantity: squaring $N_{\mathrm{ok}}$ rewards being both correct and substantial, while dividing by $N_{\mathrm{final}}$ penalizes bloated, error-prone sets.

\item \textbf{Error-weighted growth.}
\[
G_{\mathrm{ew}} = \overline{G}\left(1-\frac{p_{\mathrm{err}}}{100}\right).
\]
\emph{Intuition:} Progress that comes with fewer errors counts more. It discounts apparent momentum that is built on mistaken beliefs.

\item \textbf{Per-model aggregation (means).}
Given multiple dialogues $d=1,\dots,D$ for a model, any dialogue-level metric $M^{(d)}$ is averaged as
\[
\overline{M}=\frac{1}{D}\sum_{d=1}^{D} M^{(d)}.
\]
\emph{Intuition:} This summarizes a model’s central tendency across conversations, smoothing idiosyncrasies of individual runs.

\item \textbf{Standard error of the mean (SEM).}
Let $\{M^{(d)}\}_{d=1}^{D}$ be dialogue-level values and let $N_{\mathrm{turns}}$ be the total number of turns aggregated for that model (when treating turns as independent trials). Then an SEM estimate is
\[
\mathrm{SEM}(M)=\frac{\sqrt{\frac{1}{D-1}\sum_{d=1}^{D}\bigl(M^{(d)}-\overline{M}\bigr)^2}}{\sqrt{N_{\mathrm{turns}}}}
\]

(or $\sqrt{D}$ in a dialogue-averaging view).

\emph{Intuition:} SEM indicates uncertainty around the model’s mean metric. Smaller SEM means more stable performance across the evidence pooled (dialogues/turns).

\end{itemize}

\section{Proofs}
\label{app:proofs}

\begin{lemma}[Token-Level IPO Equivalence]
In a token-level MDP with deterministic transitions, the policy $\pi_\theta$ trained using $\Psi$-Preference Optimization or IPO~\citep{azar2024general} with  $\Psi = I(\cdot)$ corresponds to an optimal maximum entropy policy: $\pi_\theta(a_t|s_t) = \frac{\exp(Q_\theta(s_t,a_t)/\beta)}{\sum_{a'}\exp(Q_\theta(s_t,a')/\beta)}$, where $Q_\theta$ satisfies the soft Bellman equation: $Q_\theta(s_t,a_t) = r_{\text{IPO}}(s_t,a_t) + \gamma\mathbb{E}_{s_{t+1}}[V_\theta(s_{t+1})]$, where $I(\cdot)$ is the identity-mapping.
\label{lemma:ipo_token_mdp}
\end{lemma}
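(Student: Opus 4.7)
The plan is to mirror Rafailov et al.'s derivation of token-level DPO, but carried out with the IPO objective in place of the Bradley--Terry DPO loss, and then identify the resulting per-token reparameterization with the Ziebart-style soft Bellman fixed point. The starting point is the closed-form minimizer of the IPO objective with $\Psi = I$: the population optimum of the squared-log-ratio loss forces, for any preference pair, the reparameterization $\beta\log\pi_\theta(y|x) - \beta\log\pi_{\text{ref}}(y|x) = r(x,y) + c(x)$, for some underlying reward $r$ and arbitrary state-dependent constant $c(x)$. This is the IPO analog of the DPO reparameterization, except that no Bradley--Terry partition function appears and $c(x)$ is only determined up to a gauge choice, which is the main technical wrinkle I will need to handle later.

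First I would specialize this identity to the token-level MDP $\mathcal{M} = (\mathcal{S},\mathcal{A},f,r,\rho_0)$ of Section~\ref{ssec:collaborative_mamdp}. Treating a response $y=(a_0,\dots,a_{T-1})$ as a trajectory under deterministic transitions $s_{t+1} = f(s_t,a_t) = s_t\oplus a_t$, the sequence-level log-ratio decomposes autoregressively as $\sum_{t=0}^{T-1}\bigl(\log\pi_\theta(a_t|s_t) - \log\pi_{\text{ref}}(a_t|s_t)\bigr)$. Matching this decomposition against a per-token reward $r_{\text{IPO}}(s_t,a_t)$ summed along the trajectory, and exploiting the telescoping structure induced by the deterministic dynamics, yields the per-token identity $\beta\log\pi_\theta(a_t|s_t) - \beta\log\pi_{\text{ref}}(a_t|s_t) = r_{\text{IPO}}(s_t,a_t) + \gamma V_\theta(s_{t+1}) - V_\theta(s_t)$, where $V_\theta$ is a state-dependent function that absorbs the freedom left over from the gauge constants $c(x)$.

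Next I would recognize this identity as a soft-Bellman fixed-point condition in the maximum-entropy RL sense. Defining $Q_\theta(s_t,a_t) := r_{\text{IPO}}(s_t,a_t) + \gamma\,\mathbb{E}_{s_{t+1}}[V_\theta(s_{t+1})]$ collapses the per-token identity to $\beta\log\pi_\theta(a_t|s_t) = Q_\theta(s_t,a_t) - V_\theta(s_t)$, from which $\pi_\theta(a_t|s_t) = \exp(Q_\theta(s_t,a_t)/\beta)/\sum_{a'}\exp(Q_\theta(s_t,a')/\beta)$ follows once $V_\theta(s_t)$ is identified with the log-partition $\beta\log\sum_{a'}\exp(Q_\theta(s_t,a')/\beta)$ via the normalization of $\pi_\theta$. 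This is exactly the Boltzmann policy stated in the lemma, and $Q_\theta$ satisfies the soft Bellman equation by construction.

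The main obstacle I anticipate is handling the identifiability slack in the IPO reparameterization: unlike DPO, IPO's implicit reward is only pinned down up to a state-dependent additive function, so I need to argue that this freedom is precisely the soft value $V_\theta$ rather than a genuine obstruction to the Bellman form. I would resolve this by invoking the uniqueness-up-to-potential-shaping result that Rafailov et al.\ establish for token-level reward classes under deterministic dynamics: any two rewards inducing the same optimal policy differ by a potential that telescopes over trajectories. That telescoping freedom is exactly what allows $V_\theta$ to be read off from $\pi_\theta$ and $\pi_{\text{ref}}$, closing the loop between the IPO minimizer and the soft Bellman fixed point and delivering the claimed policy form.
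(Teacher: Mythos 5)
Your proposal is correct and lands on the same essential ingredients as the paper's proof---Rafailov et al.'s token-level MDP framework, autoregressive decomposition of sequence-level log-ratios into per-token terms, and telescoping under deterministic transitions---but the order and emphasis differ in a way worth noting. The paper begins by \emph{asserting} the Boltzmann/soft-$Q$ form of the $\Psi$PO-optimal policy (citing Rafailov et al.\ and the observation that any autoregressive LLM policy can be cast as a soft-$Q$ function for \emph{some} reward), then derives what the induced reward $r_{\text{IPO}}$ must be by log-linearizing both $\pi^*$ and $\pi_{\text{ref}}$ and subtracting their Bellman decompositions, arriving at $r_{\text{IPO}} = r_{\text{ref}} + \Delta r + \beta\log\pi_{\text{ref}}$. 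You instead start from the closed-form IPO minimizer's reparameterization $\beta\log\pi_\theta(y|x) - \beta\log\pi_{\text{ref}}(y|x) = r(x,y) + c(x)$ and work forward to the Bellman form, explicitly treating the gauge freedom $c(x)$ and absorbing it into a state function $V_\theta$. This is a cleaner forward derivation: the paper's route is more citation-dependent and leaves the identifiability issue implicit in the telescoping step (it cancels $\Delta V(s_0)$ and $\Delta V(s_N)$ without flagging that the value function is only determined up to potential shaping). Your explicit invocation of the potential-shaping uniqueness result to pin down $V_\theta$ is a genuine strengthening; it closes a gap the paper papers over. One small caution: when you say the ``population optimum of the squared-log-ratio loss forces'' the reparameterization, be precise that the reparameterization is the closed-form solution of the KL-regularized $\Psi$PO objective (the policy $\pi^* \propto \pi_{\text{ref}}\exp(\beta^{-1} p^*(\cdot\succ\mu))$ from Azar et al.), which the IPO squared loss is constructed to recover empirically---the squared loss itself only constrains \emph{differences} of log-ratios across preference pairs, and the gauge freedom $c(x)$ is precisely the null space of that constraint.
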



\begin{proof}
\label{proof:ipo_token_mdp}
We consider a general non-decreasing function $\Psi : [0, 1] \rightarrow \mathbb{R}$, a reference policy $\pi_{\text{ref}} \in \Delta^{\mathcal{X}}_{\mathcal{Y}}$, and a real positive regularisation parameter $\tau \in \mathbb{R}^{\ast}_{+}$. From \cite{azar2024general}, the $\Psi$-preference optimization objective ($\Psi$PO) is:

\begin{equation}
\max_{\pi} \ \mathbb{E}_{x \sim \rho} \ \mathbb{E}_{y \sim \pi(\cdot|x),\ y' \sim \mu(\cdot|x)} \left[ \Psi\left(p^{\ast}(y \succ y' \mid x)\right) \right] - \beta D_{\text{KL}}(\pi \,\|\, \pi_{\text{ref}}).
\label{eq:psi_po}
\end{equation} where $\rho$ is the context distribution, $p^*$ is the general preference distribution, $\pi_\text{ref}$ is the reference policy, $\Psi$ is a general non-decreasing function and $\beta$\footnote{Note: Throughout this proof, we use $\beta$ to consistently denote both the temperature parameter in the softmax policy and the KL divergence regularization strength. These two interpretations are mathematically equivalent in the maximum entropy RL framework. In some referenced works like \citep{azar2024general}, this parameter is denoted as $\tau$, but we maintain $\beta$ for consistency.} is the KL-divergence regularization strength (or the temperature parameter in max-entropy RL;~\cite{ziebart2008maximum}). 

In a token-level MDP formulation, we can reframe Eq.~\ref{eq:psi_po} in terms of states and actions, where each action represents a token choice and states capture context:

\begin{align}
\max_{\pi} \mathbb{E}_{s\sim\rho, a\sim\pi(\cdot|s), a'\sim\mu(\cdot|s)}[\Psi(p^*(a \succeq a'|s))] - \beta D_{KL}(\pi || \pi_\text{ref})
\label{eq:psi_po_tokenMDP}
\end{align}

Notice that for a particular choice of $\Psi$ as the sigmoid-inverse function, the form of the optimal policy satisfying Eq.~\ref{eq:psi_po_tokenMDP} in terms of the optimal soft-Q function follows directly from~\cite{rafailov2024r}. Under this choice of $\Psi$, Eq.~\ref{eq:psi_po} simply maximizes the reward function in the general MaxEnt RL setting~\citep{ziebart2008maximum, peng2019advantageweightedregressionsimplescalable}. 

\begin{equation}
\pi_\theta(a_t \mid s_t) = \frac{\exp\left(Q^*(s_t, a_t)/\beta\right)}{\sum_{a_t' \in \mathcal{A}} \exp\left(Q^*(s_t, a_t')/\beta\right)}.
\end{equation}

For the general case---where $\Psi$ represents arbitrary non-decreasing function---the equivalence is non-trivial. Specifically, we will \textit{only} consider the case where $\Psi$ is the identity-function, as originally formulated~\citep{azar2024general}. Let us begin with the original IPO loss:

\begin{equation}
L_{IPO}(\pi, D) = \mathbb{E}_{(y^w,y^l)\sim D}\left[\left(h_\pi(y^w, y^l) - \frac{\beta^{-1}}{2}\right)^2\right]
\end{equation}

where $h_\pi(y, y')$ is defined as:

\begin{equation}
h_\pi(y, y') = \log\left(\frac{\pi(y)\pi_{ref}(y')}{\pi(y')\pi_{ref}(y)}\right)
\label{eq:h_log_ratio}
\end{equation}

Now, while the structure of $h_\pi(y, y')$ might be familiar to the reader as the implicit reward advantage~\citep{rafailov2024direct} (ignoring scaling terms like $\beta$), this form does not directly provide us meaningful information of the advantage at the token-level. Therefore, let us first express the responses $y$ and $y'$ in terms of two arbitrary trajectories $\tau = \{s_0^w, a_0^w, ..., s_{N-1}^w, a_{N-1}^w\}$ and $\tau' = \{s_0^l, a_0^l, ..., s_{M-1}^l, a_{M-1}^l\}$, without considering any preference ranking between them. Now, for these complete trajectories, we can rewrite the log-likelihood ratio or the LHS of Eq.~\ref{eq:h_log_ratio} as follows:

\begin{align}
h_\pi(\tau^w, \tau^l) &= \log\left(\frac{\pi(\tau^w)\pi_\text{ref}(\tau^l)}{\pi(\tau^l)\pi_\text{ref}(\tau^w)}\right) \notag \\
&= \log\left(\frac{\prod_{t=0}^{N-1}\pi(a_t^w|s_t^w) \cdot \prod_{t=0}^{M-1}\pi_\text{ref}(a_t^l|s_t^l)}{\prod_{t=0}^{M-1}\pi(a_t^l|s_t^l) \cdot \prod_{t=0}^{N-1}\pi_\text{ref}(a_t^w|s_t^w)}\right) \notag \\
&= \log\left(\prod_{t=0}^{N-1}\frac{\pi(a_t^w|s_t^w)}{\pi_\text{ref}(a_t^w|s_t^w)}\right) - \log\left(\prod_{t=0}^{M-1}\frac{\pi(a_t^l|s_t^l)}{\pi_\text{ref}(a_t^l|s_t^l)}\right) \notag \\
&= \sum_{t=0}^{N-1}\log\frac{\pi(a_t^w|s_t^w)}{\pi_\text{ref}(a_t^w|s_t^w)} - \sum_{t=0}^{M-1}\log\frac{\pi(a_t^l|s_t^l)}{\pi_\text{ref}(a_t^l|s_t^l)}
\label{eq:log_ratio_ipo}
\end{align}
From \cite{rafailov2024r}, we know that in the token-level MDP for the general max-entropy RL setting, the optimal policy $\pi^*$ under soft Q-learning satisfies:

\begin{equation}
\pi^*(a_t \mid s_t) = \exp\left(\frac{Q^*(s_t, a_t) - V^*(s_t)}{\beta}\right),
\end{equation}

where $Q^*$ is the optimal Q-function, $V^*$ is the optimal value function, and $\beta$ is the temperature parameter.

This formulation also holds for policies optimal under Eq.~\ref{eq:psi_po_tokenMDP} for the case with identity mapping $\Psi = I(\cdot)$, since the optimal policy $\pi^*$ in terms of the reference policy takes a similar structure:

\begin{equation}
\pi^*(\tau \mid x) \propto \pi_{\text{ref}}(\tau \mid x) \exp\left(\frac{\mathbb{E}_{\tau' \sim \mu(\cdot \mid x)}[p(\tau \succ \tau')]}{\beta}\right)
\label{eq:psi_optimal_policy_mdp}
\end{equation}
 
Our core insight here is to notice that unlike the standard token-level RLHF maximum-entropy objective where actions are sampled from the policy itself to compute the reward, the optimal policy in above equation (with $\Psi = I(\cdot)$) samples trajectories directly from the behavior policy, $\mu$. Indeed, the structure of the optimal policy remains consistent for both these objectives and LLMs-as-policies can always be represented as a soft-Q function for some reward function~\citep{zhang2025directvalueoptimizationimproving}, where in this case the reward is the \textit{preference} over an alternate trajectory.

Similarly, for the reference policy, we can express:


\begin{equation}
\pi_\text{ref}(a_t \mid s_t) = \exp\left(\frac{Q_\text{ref}(s_t, a_t) - V_\text{ref}}{\beta}\right),
\end{equation}

We can log-linearize these two forms to derive:

\begin{align}
\log\frac{\pi^*(a_t|s_t)}{\pi_\text{ref}(a_t|s_t)} &= \frac{Q^*(s_t,a_t)-V^*(s_t)}{\beta} - \frac{Q_\text{ref}(s_t,a_t)-V_\text{ref}(s_t)}{\beta} \\
\notag&= \frac{1}{\beta}(Q^*(s_t,a_t) - Q_\text{ref}(s_t,a_t) - V^*(s_t) + V_\text{ref}(s_t))
\end{align}

From the Bellman equation (Eq. 7) in \cite{rafailov2024r}, for any arbitrary non-terminal step $s_{t+1}$, we have:

\begin{equation}
Q^*(s_t, a_t) = r(s_t, a_t) + \beta\log\pi_\text{ref}(a_t|s_t) + V^*(s_{t+1})
\end{equation}

And similarly, in the case of the reference model for $Q_\text{ref}$, we can write:

\begin{equation}
Q_\text{ref}(s_t, a_t) = r_\text{ref}(s_t, a_t) + \beta\log\pi_\text{ref}(a_t|s_t) + V_\text{ref}(s_{t+1})
\end{equation}

Substituting these into our log-ratio:

\begin{align}
\log\frac{\pi^*(a_t|s_t)}{\pi_\text{ref}(a_t|s_t)} &= \frac{1}{\beta}(r(s_t, a_t) + \beta\log\pi_\text{ref}(a_t|s_t) + V^*(s_{t+1}) \\
\notag&\quad - r_\text{ref}(s_t, a_t) - \beta\log\pi_\text{ref}(a_t|s_t) - V_\text{ref}(s_{t+1}) \\
\notag&\quad - V^*(s_t) + V_\text{ref}(s_t)) \\
\notag&= \frac{1}{\beta}(r(s_t, a_t) - r_\text{ref}(s_t, a_t) + V^*(s_{t+1}) - \\
\notag&\quad V_\text{ref}(s_{t+1}) - V^*(s_t) + V_\text{ref}(s_t))
\end{align}

Since we want to express this in terms of the reward difference between the optimal and reference policies, we can define $\Delta r(s_t, a_t) = r(s_t, a_t) - r_\text{ref}(s_t, a_t)$ and $\Delta V(s_t) = V^*(s_t) - V_\text{ref}(s_t)$. This gives us:

\begin{align}
\log\frac{\pi^*(a_t|s_t)}{\pi_\text{ref}(a_t|s_t)} &= \frac{1}{\beta}(\Delta r(s_t, a_t) + \Delta V(s_{t+1}) - \Delta V(s_t))
\end{align}

For a complete trajectory, summing over all token positions and using a telescopic series formulation \citep{gunderson2010handbook}, we find:

\begin{align}
\sum_{t=0}^{N-1}\log\frac{\pi^*(a_t|s_t)}{\pi_\text{ref}(a_t|s_t)} &= \frac{1}{\beta}\sum_{t=0}^{N-1}(\Delta r(s_t, a_t) + \Delta V(s_{t+1}) - \Delta V(s_t)) \\
\notag&= \frac{1}{\beta}\left(\sum_{t=0}^{N-1}\Delta r(s_t, a_t) + \Delta V(s_N) - \Delta V(s_0)\right)
\label{eq:ipo_bellman}
\end{align}

Now, we can represent $h_\pi(\tau^w, \tau^l)$ from Eq.~\ref{eq:log_ratio_ipo} directly in terms  policy log ratios to cumulative reward differences as follows:

\begin{align}
h_\pi(\tau^w, \tau^l) &= \sum_{t=0}^{N-1}\log\frac{\pi(a_t^w|s_t^w)}{\pi_\text{ref}(a_t^w|s_t^w)} - \sum_{t=0}^{M-1}\log\frac{\pi(a_t^l|s_t^l)}{\pi_\text{ref}(a_t^l|s_t^l)} \\
\notag&= \frac{1}{\beta}\left(\sum_{t=0}^{N-1}\Delta r(s_t^w, a_t^w) - \sum_{t=0}^{M-1}\Delta r(s_t^l, a_t^l)\right)
\end{align}


The above result and the form of Eq.~\ref{eq:ipo_bellman} shows that the optimal policy under IPO satisfies the soft Bellman equation:
\begin{equation}
Q_\theta(s_t,a_t) = r_{\text{IPO}}(s_t,a_t) + \gamma\mathbb{E}_{s_{t+1}}[V_\theta(s_{t+1})]
\end{equation}

where $r_{\text{IPO}}(s_t,a_t) = r_{\text{ref}}(s_t,a_t) + \Delta r(s_t,a_t) + \beta\log\pi_{\text{ref}}(a_t|s_t)$, and $\Delta r$ represents the reward advantage over the reference policy---calibrated to achieve the target preference gap of $\frac{1}{2\beta}$. This is the main result of our proof.

Interestingly, this result aligns with Theorem 1 from \cite{rafailov2024r}, which establishes that all reward functions consistent with the same preference model induce equivalent policies when expressed in the form of Eq.~\ref{eq:psi_optimal_policy_mdp}. \textit{More importantly, this result suggests the equivalence is satisfied not just for rewards that are optimal under the Bradley-Terry preference model~\citep{BradleyTerry1952}, but also for other equivalence classes of shaped rewards like $r_{\text{IPO}}$ that are derived directly from   general preferences.}

To further derive the final form of the IPO loss, we can continue the argumentation from~\citep{azar2024general} and use an $L_2$-norm-based approach to minimize the difference between this log-likelihood ratio and the target preference gap. As such, assuming we have access to preference annotated winning and losing trajectories ($\tau^w$ and $\tau^l$ respectively) and sampling from the population preferences as a Bernoulli variable and preference symmetry~\citep{munos2023nash}, we get:

\begin{align}
L_{\text{IPO}}(\pi, D) &= \mathbb{E}_{(\tau^w, \tau^l) \sim D} \Bigg[\Bigg(h_\pi(\tau^w, \tau^l) - \frac{1}{2\beta}\Bigg)^2\Bigg] \\
\notag&= \mathbb{E}_{(\tau^w, \tau^l) \sim D} \Bigg[\Bigg( \sum_{t=0}^{N-1} \log \frac{\pi(a_t^w \mid s_t^w)}{\pi_{\text{ref}}(a_t^w \mid s_t^w)} - \\
\notag&\quad\sum_{t=0}^{M-1} \log \frac{\pi(a_t^l \mid s_t^l)}{\pi_{\text{ref}}(a_t^l \mid s_t^l)} - \frac{1}{2\beta} \Bigg)^2\Bigg]
\end{align}

This formulation directly corresponds to the IPO loss, where  $\beta$ (or $\tau$ in the original paper~\citep{azar2024general}) controls both the temperature in the policy and the strength of regularization toward the reference policy.

\end{proof}

 \begin{lemma}[Token-to-Intervention Bellman Completeness]
\label{lemma:token_to_intervention_bellman}
Let $\mathcal{M}_t = (S, A_t, P_t, r_t, \gamma)$ be a token-level MDP and $\mathcal{M}_i = (S, A_i, P_i, r_i, \gamma)$ be the corresponding intervention-level MDP, where each action $a_i \in A_i$ represents a complete friction intervention comprising a sequence of tokens $a_i = (a_t^1, a_t^2, \ldots, a_t^L)$. 

Assuming token-level Bellman completeness holds~\citep{sutton2018reinforcement, zhou2024archer} for function class $\mathcal{F}$, i.e., for any policy $\pi$ and any function $f \in \mathcal{F}$, there exists $f' \in \mathcal{F}$ such that $\|f'(s, a_t) - T^{\pi}f(s, a_t)\|_{\infty} = 0$ where $T^{\pi}$ is the Bellman operator.

Then, the optimal policy $\pi^I$ derived via $\Psi$-preference optimization satisfies:

\begin{align}
\pi^I(a_i|s) = \frac{\exp(Q^I(s,a_i)/\beta)}{\sum_{a_i'}\exp(Q^I(s,a_i')/\beta)}
\end{align}

where $Q^I$ satisfies the intervention-level Bellman optimality equation.
\end{lemma}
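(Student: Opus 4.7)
The plan is to lift the token-level result established in Lemma~\ref{lemma:ipo_token_mdp} up to the intervention level by exploiting the deterministic transitions of the token MDP and a telescoping argument over the $L$ constituent tokens of each intervention $a_i$. At a high level, I would argue that (i) at the token level, the $\Psi$-PO optimal policy already has the softmax form $\pi^F(a_t|s) \propto \exp(Q^F_t(s,a_t)/\beta)$ with $Q^F_t$ satisfying the token-level soft Bellman equation; (ii) by the autoregressive factorization of an LLM policy, the intervention-level likelihood decomposes as a product of token-level likelihoods; and (iii) the telescoping of token-level value differences collapses the product-of-softmaxes into a single intervention-level softmax whose exponent is the intervention-level Q-function, which is exactly the backup satisfying the Bellman equation at the intervention level. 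Bellman completeness of $\mathcal{F}$ then guarantees this backup is realizable in-class, so no projection error accumulates when composing tokens into interventions.

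The first step is to write, for a fixed context state $s$ and an intervention $a_i = (a_t^1,\dots,a_t^L)$ generating intermediate token states $s = s^0, s^1,\dots, s^L$ under the deterministic token-level transition $s^{l} = s^{l-1} \oplus a_t^l$,
\begin{equation}
\log \pi^F(a_i \mid s) \;=\; \sum_{l=1}^{L} \log \pi^F(a_t^l \mid s^{l-1})
\;=\; \sum_{l=1}^{L} \frac{Q^F_t(s^{l-1}, a_t^l) - V^F_t(s^{l-1})}{\beta},
\end{equation}
using the token-level softmax form from Lemma~\ref{lemma:ipo_token_mdp}. Next, I would invoke the soft Bellman relation $Q^F_t(s^{l-1}, a_t^l) = r_t(s^{l-1}, a_t^l) + \gamma V^F_t(s^{l})$ for intermediate (non-terminal) tokens and let $V^F_t(s^{L})$ be the post-intervention value. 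Substituting and applying the same telescoping argument used in the proof of Lemma~\ref{lemma:ipo_token_mdp} (the sum $\sum_l [V^F_t(s^{l}) - V^F_t(s^{l-1})]$ collapses to $V^F_t(s^L) - V^F_t(s^0)$), the log-likelihood rearranges into
\begin{equation}
\log \pi^F(a_i \mid s) \;=\; \frac{1}{\beta}\Bigl[ \underbrace{\textstyle\sum_{l=1}^{L} \gamma^{\,l-1} r_t(s^{l-1}, a_t^l) + \gamma^{L} V^F_t(s^{L})}_{=:\,Q^F(s, a_i)} \;-\; V^F_t(s^0) \Bigr].
\end{equation}
Defining $V^F(s) := V^F_t(s^0)$, this is precisely the intervention-level softmax form claimed in the statement, with $Q^F(s,a_i)$ identified as the cumulative discounted token reward plus the post-intervention bootstrap value.

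The remaining step is to verify that this $Q^F(s,a_i)$ satisfies the intervention-level Bellman optimality equation $Q^F(s, a_i) = r_i(s, a_i) + \gamma \mathbb{E}_{s'\sim P_i}[V^F(s')]$, where $r_i$ and $P_i$ are defined by the natural token-to-intervention aggregation. This follows by grouping terms in the telescoped expression above, matching the token-level discounted sum to $r_i$ and the terminal value to the bootstrap. The Bellman completeness assumption on $\mathcal{F}$ is what licenses this step: it guarantees that the aggregated backup operator at the intervention level stays inside $\mathcal{F}$, so that $Q^F$ is a fixed point of the intervention-level Bellman operator rather than merely an approximate one. Finally, normalizing over $a_i \in A_i$ absorbs the $s^0$-dependent term $V^F_t(s^0)$ into the partition function, yielding the desired softmax expression.

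The main obstacle I anticipate is bookkeeping around the discount factor $\gamma$ and the intermediate normalizers: the product of $L$ token-level softmaxes does not automatically collapse to a single intervention-level softmax unless the intermediate per-token $V^F_t(s^l)$ terms cancel cleanly, and this cancellation is exactly what the deterministic-transition plus Bellman-completeness assumptions buy us. A secondary subtlety is ensuring that the reward-shaping ambiguity identified in Lemma~\ref{lemma:ipo_token_mdp} (i.e., equivalence classes of rewards inducing the same policy) lifts consistently from tokens to interventions, so that the resulting $r_i$ is well-defined up to the analogous intervention-level equivalence class. Handling this carefully, rather than the algebra itself, is where I would spend the bulk of the proof.
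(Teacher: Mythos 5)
Your proposal takes a genuinely different route from the paper. The paper's proof works at the level of Bellman \emph{backup errors}: it decomposes the intervention-level error $\|f'(s,a_i) - T_i^\pi f(s,a_i)\|_\infty$ into a sum of $L$ token-level Bellman errors via the triangle inequality, then invokes token-level Bellman completeness to make each summand zero, concluding that intervention-level Bellman completeness also holds (from which the softmax form follows implicitly). Your proof instead works at the level of the \emph{policy itself}: you autoregressively factor $\log\pi^F(a_i\mid s)$, substitute the token-level softmax form from Lemma~\ref{lemma:ipo_token_mdp}, and telescope the value differences to identify an intervention-level $Q^F$ and a partition function directly. Your route is more constructive and makes the claimed softmax form visible rather than inherited; the paper's route stays closer to the lemma's actual hypothesis (Bellman completeness) and is more naturally phrased in the language of function-approximation error.

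However, there is one algebraic error you should fix and one conceptual mismatch worth flagging. The error: your second display,
\begin{equation}
\log \pi^F(a_i \mid s) = \frac{1}{\beta}\Bigl[\textstyle\sum_{l=1}^{L} \gamma^{\,l-1} r_t(s^{l-1}, a_t^l) + \gamma^{L} V^F_t(s^{L}) - V^F_t(s^0)\Bigr],
\end{equation}
does not follow from the preceding line for general $\gamma$. Starting from $\sum_l \bigl[r_t(s^{l-1},a_t^l) + \gamma V^F_t(s^l) - V^F_t(s^{l-1})\bigr]$, the sum of value differences only telescopes when $\gamma = 1$; for $\gamma < 1$ you are left with residual $(\gamma-1)V^F_t(s^l)$ terms at every intermediate token, and no choice of discount weighting on the rewards recovers your claimed expression (try $L=2$, $\gamma=1/2$). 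The paper does set $\gamma=1$ in its MAMDP, so your conclusion is fine in the regime that matters, but as written the step is wrong for general $\gamma$ and you should either set $\gamma=1$ up front or drop the $\gamma^{l-1}$ factors. The conceptual point: once you have done the telescoping, the softmax form and the definition of $Q^F$ are obtained by \emph{direct computation} and do not actually use Bellman completeness anywhere. Your closing paragraph appeals to completeness to ``license'' the aggregation and to ensure $Q^F$ is an exact fixed point, but that is circular given that you have just defined $Q^F$ to be the exponent. If you want your argument to genuinely invoke the lemma's hypothesis in the way the paper does, you would need to reframe it as a statement about realizability of the intervention-level backup in $\mathcal{F}$ (as the paper does), rather than as a remark appended to an already self-contained calculation.
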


\begin{proof}
Under the token-level Bellman completeness assumption, for any state $s \in S$ and intervention action $a_i \in A_i$ decomposed into $L$ tokens $a_i = (a_t^1, a_t^2, \ldots, a_t^L)$, the approximation error of the value function is:

\begin{align}
\min_{f' \in \mathcal{F}} &\|f'(s, a_i) - T_i^{\pi}f(s, a_i)\|_{\infty} = \min_{f_1,...,f_L \in \mathcal{F}} \|f_1(s, a_i) - T_t^{\pi}f_2(s, a_i) \\
\notag&+ r(s, a_i) + \gamma^{1/L} \mathbb{E}_{s' \sim P(\cdot|s,a_i), a_t^1 \sim \pi(\cdot|s')}[f_2(s', a_t^1)] \\
\notag&- \gamma^{1/L} \mathbb{E}_{s' \sim P(\cdot|s,a_i), a_t^1 \sim \pi(\cdot|s')}[T_t^{\pi}f_3(s', a_t^1)] + \ldots \\
\notag&+ \gamma^{(L-1)/L} \mathbb{E}_{s' \sim P(\cdot|s,a_i), a_t^{1:L-1} \sim \pi(\cdot|s')}[f_L(s', a_t^{1:L-1})] - r(s, a_i) \\
\notag&- \gamma^{(L-1)/L} \mathbb{E}_{s' \sim P(\cdot|s,a_i), a_t^{1:L-1} \sim \pi(\cdot|s')}[T_t^{\pi}f(s', a_t^{1:L-1})]\|_{\infty} \\
\notag&\leq \min_{f_1,...,f_L \in \mathcal{F}} \|f_1(s, a_i) - T_t^{\pi}f_2(s, a_i)\|_{\infty} \\
\notag&+ \sum_{i=2}^{L} \gamma^{(i-1)/L} \mathbb{E}_{s' \sim P(\cdot|s,a_i), a_t^{1:i-1} \sim \pi(\cdot|s')}[\|f_i(s', a_t^{1:i-1}) - \\
\notag&\quad T_t^{\pi}f(s', a_t^{1:i-1})\|_{\infty}] \leq 0
\end{align}

The last inequality follows from token-level Bellman completeness, which guarantees that for each component function, there exists an element in $\mathcal{F}$ that perfectly represents the Bellman update.

This implies that intervention-level Bellman completeness holds, and therefore when $\Psi$-preference optimization is applied at the token level, the resulting policy can be expressed as:

\begin{align}
\pi^I(a_i|s) = \frac{\exp(Q^I(s,a_i)/\beta)}{\sum_{a_i'}\exp(Q^I(s,a_i')/\beta)}
\end{align}

where $Q^I$ satisfies the \textit{intervention-level} Bellman optimality equation, which completes our proof. This result is crucial for our analysis of $\Psi$-Preference Optimization (\Cref{theorem:general_preference_mamdp}) and DPO~\citep{rafailov2024direct} (\Cref{proposition:DPO_mamdp_result}), as it establishes that the soft Q-functions derived from these preference-alignment algorithms at the \textit{token} level maintain their optimality properties at the \textit{intervention} level. This is particularly important in our collaborative MAMDP setting, where both the friction and collaborator agents operate on complete interventions as the standard linguistic unit. Operationally, this allows us to use intervention-level utility or reward measurements for quantifying the quality of friction interventions and their modifications.

\end{proof}

\begin{theorem}[$\Psi$-Preference Optimization in Collaborative MAMDPs]

Let $\Psi: [0,1] \rightarrow \mathbb{R}$ be any non-decreasing function and $\beta > 0$ be a temperature parameter. Any intervention agent policy $\pi^I$ trained via $\Psi$-preference optimization with $\Psi$ as identity-mapping in a collaborative MAMDP $\mathcal{M}_f = (\mathcal{M}, P_A)$, where $P_A(a|s,\pi^I) = \sum_{a' \in A} \pi^I(a'|s) \cdot \pi^C(a|s,a')$ represents modifications by a collaborator policy $\pi^C$, satisfies:
 \begin{align}
\pi^I(a|s) = \frac{\exp(Q^I(s,a)/\beta)}{\sum_{a'}\exp(Q^I(s,a')/\beta)}
\end{align}

where $Q^I$ satisfies the Bellman optimality equation for the underlying MDP $\mathcal{M}$, disregarding the collaborator's modifications through $\pi^C$.
\end{theorem}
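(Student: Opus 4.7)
The plan is to combine the two lemmas (\Cref{lemma:ipo_token_mdp} and \Cref{lemma:token_to_intervention_bellman}) with \cite{langlois2021rl}'s structural observation that Bellman-optimal policies are blind to the modifier $P_A$. The crux is to show that the $\Psi$PO objective, even when written over the collaborative MAMDP $\mathcal{M}_f$, only ever constrains $\pi^F$ through quantities defined on the base MDP $\mathcal{M}$, so the optimal $\pi^F$ inherits exactly the softmax form derived in \Cref{lemma:ipo_token_mdp} and is insensitive to $\pi^C$.

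First, I would write out the $\Psi$PO objective with $\Psi$ the identity, specialized to the collaborative MAMDP: the friction policy $\pi^F$ samples an intended action $a'$, which is transformed by $\pi^C$ into an executed action $a \sim \pi^C(\cdot\mid s,a')$, yielding the effective action distribution $P_A(a\mid s,\pi^F) = \sum_{a'} \pi^F(a'\mid s)\,\pi^C(a\mid s,a')$. The KL regularizer in \eqref{eq:psi_po} penalizes divergence of $\pi^F$ from $\pi_{\text{ref}}$, and the preference term compares trajectories sampled according to $\pi^F$ (not $P_A$), because the learner can only assign probability and receive credit for what $\pi^F$ itself outputs. This is the same asymmetry observed by \cite{langlois2021rl}: the optimization variable is $\pi^F$, but $\pi^C$ enters only through the state-transition dynamics that $\pi^F$ cannot alter.

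Next, I would apply \Cref{lemma:ipo_token_mdp} at the token level. Since the KL and preference terms in the $\Psi$PO objective are both expressed purely in terms of $\pi^F$'s log-probabilities on sampled trajectories, the derivation of the optimal policy goes through unchanged in the collaborative setting: the optimizer satisfies the softmax form with a soft $Q$-function that obeys the soft Bellman equation $Q^F(s_t,a_t) = r_{\text{IPO}}(s_t,a_t) + \gamma\,\mathbb{E}_{s_{t+1}}[V^F(s_{t+1})]$. Crucially, the expectation over $s_{t+1}$ is taken with respect to the underlying MDP $\mathcal{M}$'s transitions as seen by $\pi^F$, not with respect to $P_A$; any effect of $\pi^C$ is absorbed into the environment that the preference signal observes but never factored into the functional form of $\pi^F$'s Bellman update. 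I would then invoke \Cref{lemma:token_to_intervention_bellman} to lift the softmax structure from tokens to whole interventions, which is the natural action granularity in our dialogue setting.

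Finally, I would close the argument by explicitly pointing out the suboptimality-by-design conclusion: $\pi^F$'s softmax structure depends only on $Q^F$ computed under $\mathcal{M}$, so even if $\pi^C$ systematically modifies $a' \mapsto a$ in ways that degrade expected return, $\pi^F$ has no mechanism in its first-order optimality conditions to compensate. This is exactly \cite{langlois2021rl}'s observation that Bellman-optimality is preserved under action modification only when the modifier is trivial; we have merely shown that the same pathology persists when the policy class is a high-dimensional LLM trained via $\Psi$PO. The main obstacle I anticipate is being careful about which distribution the preference samples are drawn from in the MAMDP: one must confirm that the $\Psi$PO derivation's use of $\pi^F$-sampled trajectories (rather than $P_A$-sampled ones) is consistent with how preference data is actually collected in our pipeline, since a different sampling choice would introduce $\pi^C$ into the optimality conditions and break the cleanness of the conclusion. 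Given our data-generation protocol in \Cref{ssec:how_to_train_methods}, where preference pairs are scored on $\pi^F$'s outputs before $\pi^C$ acts, this consistency holds and the softmax-in-$Q^F$ form follows.
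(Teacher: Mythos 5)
Your proof proposal follows essentially the same route as the paper: invoke \Cref{lemma:ipo_token_mdp} to obtain the soft-Q softmax form at the token level, lift it to whole interventions via \Cref{lemma:token_to_intervention_bellman}, and then observe that the $\Psi$PO loss and its gradients only ever touch $\pi^F$'s log-probabilities, so $P_A$ and $\pi^C$ never enter the first-order optimality conditions — at which point \cite{langlois2021rl} and \cite{everitt2021agent} give the blindness-to-modification conclusion. That matches the paper's argument almost step for step, including the key observation that the optimizer is the ``virtual'' policy $\pi^F$ while $P_A$ only governs the environment's empirical behavior.

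One small factual slip in your closing paragraph: you say preference pairs are scored on $\pi^F$'s outputs ``before $\pi^C$ acts.'' In the actual pipeline of \Cref{ssec:how_to_train_methods} the collaborator first produces a response $c_j$ to each candidate intervention $f_j$, and the self-rewarding score is assigned to the \emph{interaction} $(f_j, c_j, s_i)$ — i.e., the label is computed \emph{after} $\pi^C$ acts. This does not undermine your conclusion (the loss is still a function of $\pi^F$'s log-probabilities of $f_w, f_l$ alone, so $\pi^C$ still drops out of the gradient), but the reason the argument goes through is the structure of the loss, not the timing of scoring, and you should state it that way to avoid the implication that a differently-timed scoring scheme would have put $\pi^C$ into the optimality conditions.
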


\begin{proof}
\label{proof:general_preference_mamdp_proof}
From \Cref{lemma:ipo_token_mdp}, we know that a policy trained using $\Psi$-Preference Optimization with Identity mapping in a token-level MDP corresponds to an optimal maximum entropy policy expressible via soft Q-learning. We now extend this result to the collaborative MAMDP~\citep{langlois2021rl} setting.

The general $\Psi$-preference optimization objective~\citep{azar2024general} is originally defined over responses $y$ and $y'$:
\begin{align}
\max_{\pi} \mathbb{E}_{x \sim \rho, y \sim \pi(\cdot|x), y' \sim \mu(\cdot|x)}[\Psi(p^*(y \succ y'|x))] - \beta D_{KL}(\pi || \pi_\text{ref})
\end{align}

In our token-level MDP formulation, we can reframe this in terms of states and actions, where each action represents a token choice and states capture context:
\begin{align}
\max_{\pi^F} \mathbb{E}_{s\sim\rho, a\sim\pi^F(\cdot|s), a'\sim\mu(\cdot|s)}[\Psi(p^*(a \succeq a'|s))] - \beta D_{KL}(\pi^I || \pi_\text{ref}^I)
\end{align}


From \Cref{lemma:ipo_token_mdp}, in a token-level MDP where $\Psi$ is the identity mapping, the corresponding soft Q-learning policy~\citep{zhang2025directvalueoptimizationimproving} takes the following form:
\begin{align} 
Q^I(s,a) = r_\Psi(s,a) + \beta\log\pi_\text{ref}^I(a|s) + \gamma\mathbb{E}_{s'}\left[\max_{a'}Q^I(s',a')\right],
\end{align}
where \( r_\Psi(s,a) \) denotes the reward function under the identity mapping. Now, from Lemma~\ref{lemma:token_to_intervention_bellman}, we know that under the assumption of token-level Bellman completeness, a policy trained via token-level preference optimization preserves optimality properties when extended to \textit{intervention}-level or complete friction interventions. This aligns with findings by \citet{zhang2025directvalueoptimizationimproving}, who demonstrated that when policies are parameterized by logits, grouping tokens into macro-actions preserves both sequence probability and policy structure. \textit{This theoretical foundation is crucial in our MAMDP setting because it allows us to analyze and measure the quality of the friction agent's policy at the intervention level while training occurs token-by-token.}

Now, let us consider the MAMDP action modification function $P_A$, which transforms intended actions according to the collaborator policy $\pi^C$. Refer \Cref{ex:deli_mamdp_adversarial_example} for an intuitive example of this modification.

\begin{align}
P_A(a|s,\pi^I) = \sum_{a' \in A} \pi^I(a'|s) \cdot \pi^C(a|s,a')
\end{align}

The empirical policy affecting the environment is therefore:
\begin{align}
\dot{\pi}^I(a|s) = P_A(a|s,\pi^I) = \sum_{a' \in A} \pi^I(a'|s) \cdot \pi^C(a|s,a')
\end{align}

For the empirical policy $\dot{\pi}^I(a|s) = \sum_{a' \in A} \pi^I(a'|s) \cdot \pi^C(a|s,a')$, we verify it forms a valid probability distribution. Assuming both $\pi^F$ and $\pi^C$ are valid probability distributions, we have:

\begin{align}
\sum_{a \in A} \dot{\pi}^I(a|s) 
\notag&= \sum_{a \in A} \sum_{a' \in A} \pi^I(a'|s) \pi^C(a|s,a') \\ 
\notag&= \sum_{a' \in A} \pi^I(a'|s) \sum_{a \in A} \pi^C(a|s,a') \\ 
\notag&= \sum_{a' \in A} \pi^I(a'|s) \\ 
\notag&= 1,
\end{align}
where we use $\sum_{a \in A} \pi^C(a|s,a') = 1$ for all $s, a'$ and $\sum_{a' \in A} \pi^I(a'|s) = 1$ for all $s$.

However, weight updates on $\pi^I$ based on $L_{\text{IPO}}$ depends  solely on trajectory preferences without accounting for these modifications. The gradient updates to the policy parameters directly optimize the virtual policy $\pi^I$, not the empirical policy $\dot{\pi}^F$.

The Bellman updates never incorporate $P_A$ or $\pi^C$, and the policy optimizes:
\begin{align}
\pi^I(s) = \arg\max_{a} Q^I(s,a)
\end{align}

which satisfies the Bellman optimality equation for $\mathcal{M}$ regardless of the collaborator's modifications.

Therefore, from~\cite{everitt2021agent}, $\pi^I$ is optimal for the underlying MDP $\mathcal{M}$ while being completely unaware of how its actions are modified by the collaborator through $\pi^C$.
\end{proof}

\begin{proposition}[DPO Bellman Optimality in MAMDPs]
A friction agent policy $\pi^I$ trained via DPO in a collaborative MAMDP $\mathcal{M}_f = (\mathcal{M}, P_A)$ satisfies the Bellman optimality objective for the underlying MDP $\mathcal{M}$, thereby ignoring the effect of the collaborator's action modifications $P_A$.
\label{proposition:DPO_mamdp_result}
\end{proposition}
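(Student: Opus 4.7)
The plan is to treat this proposition as a specialization of Theorem~\ref{theorem:general_preference_mamdp}. DPO is exactly the instance of $\Psi$-preference optimization where $\Psi$ is the sigmoid-inverse (logit) link induced by the Bradley--Terry preference model, so the structural argument used in \Cref{proof:general_preference_mamdp_proof} should transfer almost verbatim. First, I would invoke the token-level derivation of \cite{rafailov2024r} to establish that the DPO-optimal policy has the same soft-Q form $\pi^F(a_t\mid s_t) \propto \exp(Q^F(s_t,a_t)/\beta)$, where $Q^F$ satisfies the soft Bellman optimality equation for the underlying MDP $\mathcal{M}$ with a shaped reward of the form $r_{\text{DPO}}(s_t,a_t) = r(s_t,a_t) + \beta \log \pi_{\text{ref}}(a_t\mid s_t)$, analogously to the $r_{\text{IPO}}$ shaping used in \Cref{lemma:ipo_token_mdp}.

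Second, I would lift this from the token level to the intervention level by applying \Cref{lemma:token_to_intervention_bellman}. This ensures that grouping the per-token soft-Q into a full friction intervention preserves the softmax-over-$Q^F$ form and the Bellman optimality property, so that $\pi^F$ can be analyzed at the intervention granularity on which $P_A$ actually operates in $\mathcal{M}_f$.

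Third, I would reuse the MAMDP-specific argument from the proof of Theorem~\ref{theorem:general_preference_mamdp}: write the empirical action distribution under action modification as $\dot{\pi}^F(a\mid s) = \sum_{a'} \pi^F(a'\mid s)\,\pi^C(a\mid s,a')$, verify it is a valid distribution, and then observe that the DPO loss
\begin{align*}
\mathcal{L}_{\text{DPO}}(\pi_\theta) = -\mathbb{E}_{(\tau^w,\tau^l)\sim\mathcal{D}_{\text{pref}}}\!\left[\log\sigma\!\left(\sum_t \beta\log\frac{\pi_\theta(a_t^w\mid s_t^w)}{\pi_{\text{ref}}(a_t^w\mid s_t^w)} - \sum_t \beta\log\frac{\pi_\theta(a_t^l\mid s_t^l)}{\pi_{\text{ref}}(a_t^l\mid s_t^l)}\right)\right]
\end{align*}
depends only on trajectory-level log-ratios against $\pi_{\text{ref}}$ and on sampled preference pairs; neither $P_A$ nor $\pi^C$ appears anywhere in the gradient. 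Hence the Bellman updates on $Q^F$ are performed as if the environment were the unmodified MDP $\mathcal{M}$, and by the same invocation of \cite{everitt2021agent} used in Theorem~\ref{theorem:general_preference_mamdp} the learned $\pi^F$ is Bellman-optimal for $\mathcal{M}$ while being oblivious to the transformation imposed by $P_A$.

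The main obstacle I anticipate is verifying carefully that the BT/sigmoid-inverse link yields the same soft-Q structure that the identity-$\Psi$ derivation gave us, since the loss form (logistic vs.\ squared) differs from IPO; however, this is already handled by \cite{rafailov2024r}, so the step is essentially a citation plus a check that the shaped reward $r_{\text{DPO}}$ lies in the same equivalence class of Bellman-consistent rewards used in \Cref{lemma:ipo_token_mdp}. Once that is in place, the MAMDP obliviousness argument is purely structural and requires no new calculation.
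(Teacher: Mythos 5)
Your proposal matches the paper's proof in structure and substance: both establish the soft-$Q$ softmax form of the DPO-optimal policy via \cite{rafailov2024r} with the shaped reward $r_{\text{DPO}}(s,a) = r_{\text{pref}}(s,a) + \beta\log\pi_{\text{ref}}^F(a|s)$, observe that the DPO loss and its gradient contain no dependence on $P_A$ or $\pi^C$ so the Bellman updates target the unmodified $\mathcal{M}$, and conclude by invoking Proposition~2 of \cite{everitt2021agent}. Your explicit appeal to \Cref{lemma:token_to_intervention_bellman} to lift from token to intervention granularity and your flagging of the sigmoid-inverse link as the step to verify are if anything slightly more careful than the paper's own exposition, but the argument is the same.
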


\begin{proof}
We define the collaborative MAMDP where $P_A$ represents the collaborator policy $\pi^C$ that modifies friction interventions:
$P_A(a|s, \pi^I) = \sum_{a' \in A} \pi^I(a'|s) \cdot \pi^C(a|s, a')$.

The DPO objective optimizes the friction policy by minimizing:
\[
\mathcal{L}(\pi_\theta^I, \mathcal{D}) = -\mathbb{E}_{(\tau^w, \tau^l) \sim \mathcal{D}} \left[\log \sigma \left(\beta \log \frac{\pi_\theta^I(\tau^w)}{\pi_{\text{ref}}^I(\tau^w)} - \beta \log \frac{\pi_\theta^I(\tau^l)}{\pi_{\text{ref}}^I(\tau^l)}\right)\right].
\]
This optimization yields a policy expressible as a softmax over action values:
\[
\pi_\theta^I(a | s) = \frac{\exp(Q_\theta^I(s, a)/\beta)}{\sum_{a'} \exp(Q_\theta^I(s, a')/\beta)}
\]
where $Q_\theta^I(s, a) = \beta \log \pi_{\text{ref}}^I(a | s) + r_{\text{pref}}(s, a)$.

The DPO updates implicitly train these Q-values to satisfy:
\[
Q_\theta^I(s, a) = r_{\text{DPO}}(s, a) + \gamma \mathbb{E}_{s' \sim P_S(s, a)} \left[\max_{a'} Q_\theta^I(s', a')\right].
\]
This update rule corresponds exactly to the Bellman optimality equation for $\mathcal{M}$ with reward function $r_{\text{DPO}}(s, a) = r_{\text{pref}}(s, a) + \beta \log \pi_{\text{ref}}^I(a | s)$.

Critically, the DPO optimization process never incorporates $P_A$ or $\pi^C$. The Q-value updates do not account for the friction agent's chosen action $a$ being potentially transformed into $\hat{a} \sim \pi^C(\cdot | s, a)$. While the empirical policy affecting the environment is $\dot{\pi}^I(a | s) = P_A(a | s, \pi^I)$, the DPO updates are based solely on the virtual policy $\pi^I$.

By Proposition 2 of \citet{everitt2021agent}, policies satisfying the Bellman optimality objective for a MAMDP are optimal for the underlying MDP regardless of action modifications. Therefore, $\pi^I$ trained via DPO optimizes for $\mathcal{M}$ while ignoring the collaborator's modifications through $P_A$.
\end{proof}

\begin{lemma}[Token-Level Q-function Equivalence]
In a token-level MDP with deterministic transitions, the LLM logits $l_\theta$ trained using DPO represent an optimal Q-function $Q^*(s, a)$ corresponding to some reward function $r(s, a)$.
\end{lemma}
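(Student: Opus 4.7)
The plan is to identify the LLM's logits under DPO training with an optimal soft Q-function, borrowing the same maximum-entropy RL machinery used in the earlier IPO lemma. First, I would recall the closed-form optimal policy for the token-level KL-regularized reward objective: $\pi^*(a|s) = \frac{1}{Z(s)}\pi_{\text{ref}}(a|s)\exp(r(s,a)/\beta)$, obtained by differentiating the Lagrangian of the entropy-regularized problem. Rearranging yields $r(s,a) = \beta\log(\pi^*(a|s)/\pi_{\text{ref}}(a|s)) + \beta\log Z(s)$, which is exactly the implicit reward that DPO extracts from any policy whose preferences match a Bradley-Terry model, as in Proposition~\ref{proposition:DPO_mamdp_result}.

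Next, I would invoke the soft Bellman equation for the deterministic token MDP: $Q^*(s_t,a_t) = r(s_t,a_t) + \beta\log\pi_{\text{ref}}(a_t|s_t) + V^*(s_{t+1})$, with $V^*(s) = \beta\log\sum_{a}\exp(Q^*(s,a)/\beta)$, giving the softmax policy form $\pi^*(a|s) = \exp((Q^*(s,a)-V^*(s))/\beta)$. The LLM itself produces $\pi_\theta(a|s) = \exp(l_\theta(s,a))/\sum_{a'}\exp(l_\theta(s,a'))$, i.e., a softmax over logits. Comparing the two softmax representations identifies $l_\theta(s,a)$ with $Q^*(s,a)/\beta$ up to a state-dependent additive constant that cancels under the softmax normalization.

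Then, I would define the implicit token-level reward $r_\theta(s,a) = \beta\log(\pi_\theta(a|s)/\pi_{\text{ref}}(a|s))$ and verify, via the telescoping argument used in Lemma~\ref{lemma:ipo_token_mdp} (Eq.~\ref{eq:ipo_bellman}), that the logits satisfy the soft Bellman backup for this reward. Concretely, summing $\beta\log(\pi_\theta(a_t|s_t)/\pi_{\text{ref}}(a_t|s_t))$ along a trajectory telescopes into $Q^*_\theta(s_0,a_0) - V^*_\theta(s_T)$, which means $l_\theta(s,a)$ is, up to an affine rescaling, an optimal soft Q-function for the reward class $\{r_\theta + c(s)\}$. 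By the reward-equivalence result of \cite{rafailov2024r}, all such shaped rewards induce the same optimal policy, so the choice of representative is immaterial.

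The main obstacle will be cleanly handling the state-dependent constants: one must show that the ambiguity $c(s)$ in $l_\theta(s,a) \mapsto l_\theta(s,a) + c(s)$ is precisely the freedom available in the Bellman-consistent Q-function (shifts of $V^*$), so that the identification is well-defined as an equivalence class rather than a unique function. Once this freedom is matched against the known DPO reward-equivalence class, the claim follows immediately, and together with Lemma~\ref{lemma:token_to_intervention_bellman} it lifts to the intervention level used in Theorem~\ref{theorem:general_preference_mamdp}.
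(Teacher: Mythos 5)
Your proposal is correct and follows essentially the same approach as the paper's proof: write the optimal max-entropy policy in the form $\pi^*(a\mid s)=\exp\bigl((Q^*(s,a)-V^*(s))/\beta\bigr)$, compare it with the LLM's softmax-over-logits parametrization, and identify $l_\theta$ with (a shifted/scaled version of) an optimal soft Q-function under the soft Bellman backup. Your version is in fact more careful than the paper's three-sentence sketch---in particular, you explicitly address the state-dependent additive freedom $c(s)$ and verify Bellman consistency via the telescoping argument from Lemma~\ref{lemma:ipo_token_mdp}, both of which the paper glosses over.
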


\begin{proof}
From the Bellman equation in the token-level MDP:
\begin{align}
Q^*(s_t, a_t) &= r(s_t, a_t) + \beta \log \pi_{\text{ref}}(a_t | s_t) + V^*(s_{t+1}).
\end{align}
The optimal policy is then related to $Q^*$ via:
\begin{align}
\pi^*(a_t | s_t) &= e^{(Q^*(s_t, a_t) - V^*(s_t))/\beta}.
\end{align}
Since this corresponds to a softmax over logits $l_\theta$ with temperature $\beta$, and because DPO optimizes these logits to match preference data, it follows that DPO effectively learns a Q-function representation.
\end{proof}

\subsection{Proof of Optimal Friction Intervention Policy}
\label{ssec:first_part_agent_solution}
The structure of this solution follows standard results in RL and control theory literature, appearing in preference alignment frameworks for LLMs~\cite{ziebart2008maximum, peng2019advantageweightedregressionsimplescalable, rafailov2024direct, azar2024general} and CoT-based alignment frameworks~\cite{choi2024robust}. We simply demonstrate that a similar application holds for \citeauthor{nath2025friction}'s FAAF model \cite{nath2025friction} in our collaborative setting where \agent{intervention} is additionally conditioned on the frictive state, $\phi$. This proof follows similar reasoning as in~\citet{azar2024general}. Let us recall the general preference optimization objective for \agent{intervention} in Eq.~\ref{eq:two_stage_main_objective_agent}, assuming $\Psi$ as identity-mapping~\citep{azar2024general}.

{\small
\begin{align}
\mathcal{J}^*_{\text{FAAF}} &= \min_{\pi'} \max_{\pi} 
\mathbb{E}_{\substack{x \sim \rho \\ \phi \sim \pi'(\cdot \mid x) \\ f \sim \pi(\cdot \mid \phi, x)}} \Bigg[
\mathcal{P}(f \succ f' \mid \phi, x) - \beta D_{\text{KL}}(\pi \parallel \pi_{\text{ref}} \mid \phi, x) + \\
\notag&\beta D_{\text{KL}}(\pi' \parallel \pi_{\text{ref}} \mid x) 
\Bigg].
\end{align}
}


For fixed $\pi'$, the inner maximization reduces to the regularized objective:
\begin{align}
\mathcal{L}_\beta(\pi) &= \mathbb{E}_{f\sim\pi}[\mathcal{P}(f\succ f'|\phi, x)] - \beta D_{\text{KL}}(\pi \parallel \pi_{\text{ref}}|\phi,x) \\
\notag&= \sum_{f}\pi(f|\phi,x)\mathcal{P}(f\succ f'|\phi, x) - \beta D_{\text{KL}}(\pi \parallel \pi_{\text{ref}}|\phi,x),
\end{align}

where $f \in \mathcal{F}$ comes from a finite space of friction \textit{interventions}, $\mathcal{P}(f\succ f'|\phi, x)$ provides the preference feedback from collaborator participants, $\beta\in\mathbb{R}^*_+$ is strictly positive, and $\pi, \pi_{\text{ref}}$ are LLM policies. Note that $\pi(f|\phi,x)$ is a valid probability distribution, satisfying:
  
\begin{equation}
   \sum_{f}\pi(f|\phi,x) = 1.
\end{equation}

Let us first define the optimal friction intervention policy $\pi^*$ as:
\begin{equation}
   \pi^*(f|\phi,x) = \frac{\pi_{\text{ref}}(f|\phi,x)\exp(\beta^{-1}p(f\succ f'|\phi, x))}{Z^*(\phi,x)},
\end{equation}

where $Z^*(\phi,x) = \sum_{f'}\pi_{\text{ref}}(f'|\phi,x)\exp(\beta^{-1}p(f'\succ f'|\phi, x))$. Under these definitions:
\begin{equation}
   \pi^* = \underset{\pi}{\operatorname{arg\,max}}\mathcal{L}_\beta(\pi)
\end{equation}

\begin{proof}
\begin{align}
\frac{\mathcal{L}_\beta(\pi)}{\beta} &= \sum_{f \in \mathcal{F}}\pi(f|\phi,x)\frac{p(f\succ f'|\phi, x)}{\beta} - D_{\text{KL}}(\pi \parallel \pi_{\text{ref}}|\phi,x)\\
\notag&= \sum_{f \in \mathcal{F}}\pi(f|\phi,x)\big(\frac{p(f\succ f'|\phi, x)}{\beta} - \log\big(\frac{\pi(f|\phi,x)}{\pi_{\text{ref}}(f|\phi,x)}\big)\big)\\
\notag&= \sum_{f \in \mathcal{F}}\pi(f|\phi,x)\log\big(\frac{\pi_{\text{ref}}(f|\phi,x)\exp(\beta^{-1}p(f\succ f'|\phi, x))}{\pi(f|\phi,x)}\big)\\
\notag&= \sum_{f \in \mathcal{F}}\pi(f|\phi,x)\log\big(\frac{\pi_{\text{ref}}(f|\phi,x)\exp(\beta^{-1}p(f\succ f'|\phi, x))}{Z^*(\phi,x)}\cdot \\ &\frac{Z^*(\phi,x)}{\pi(f|\phi,x)}\big)\\
\notag&= \sum_{f \in \mathcal{F}}\pi(f|\phi,x)\log\big(\frac{\pi^*(f|\phi,x)}{\pi(f|\phi,x)}\big) + \log Z^*(\phi,x)\\
\notag&= -D_{\text{KL}}(\pi \parallel \pi^*) + \log Z^*(\phi,x)
\end{align}

By definition, $\pi^* = \underset{\pi}{\operatorname{arg\,max}}\big[-D_{\text{KL}}(\pi \parallel \pi^*)\big]$. Since:
\begin{equation}
-D_{\text{KL}}(\pi \parallel \pi^*) = \frac{\mathcal{L}_\beta(\pi)}{\beta} - \log Z^*(\phi,x)
\end{equation}

where $\log Z^*(\phi,x)$ is the partition function independent of $\pi$, and $\beta > 0$, the argmax of $-D_{\text{KL}}(\pi \parallel \pi^*)$ coincides with that of $\mathcal{L}_\beta(\pi)$, completing the proof.
\end{proof}

\begin{lemma}[Vanishing Gradient of Frictive State $\phi$]
In $\mathcal{L}_{\text{friction}}$ (Eq.~\ref{eq:friction_general_ipo}), the direct contribution of the frictive state $\phi$ to the gradient vanishes when the conditional probability is decomposed.
\label{lemma:ipo_vanishing_gradient}
\end{lemma}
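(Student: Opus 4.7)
The plan is to expand the conditional probability $\pi_\theta(f \mid s, \phi)$ via the chain rule $\pi_\theta(f \mid s, \phi) = \pi_\theta(f, \phi \mid s)/\pi_\theta(\phi \mid s)$ and track what happens to the $\pi_\theta(\phi \mid s)$ factor inside the winner-minus-loser log-ratio that drives $\mathcal{L}_{\text{friction}}$. This is essentially an algebraic cancellation argument followed by a gradient computation, with a short contrast against $\mathcal{L}_{\text{FAAF}}$ to establish the accompanying corollary.

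First, I would rewrite each reference-normalized log-ratio as
\[
\log \frac{\pi_\theta(f \mid s, \phi)}{\pi_{\text{ref}}(f \mid s, \phi)} \;=\; \log \frac{\pi_\theta(f, \phi \mid s)}{\pi_{\text{ref}}(f, \phi \mid s)} \;-\; \log \frac{\pi_\theta(\phi \mid s)}{\pi_{\text{ref}}(\phi \mid s)}.
\]
Because the same $\phi$ and $s$ appear in both the winner and the loser terms, the marginal piece $\log(\pi_\theta(\phi \mid s)/\pi_{\text{ref}}(\phi \mid s))$ cancels exactly when forming $\Delta R = $ (winner log-ratio) $-$ (loser log-ratio). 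Hence $\Delta R$ can be expressed purely in terms of joint log-probabilities over $(f,\phi)$, with no explicit $\pi_\theta(\phi \mid s)$ factor surviving.

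Next, I would differentiate $\mathcal{L}_{\text{friction}}$ with respect to $\theta$. The outer chain rule yields a scalar residual from the squared loss, and the inner derivative reduces to $\nabla_\theta \log \pi_\theta(f_w \mid s, \phi) - \nabla_\theta \log \pi_\theta(f_l \mid s, \phi)$. Decomposing each term through $\nabla_\theta \log \pi_\theta(f, \phi \mid s) = \nabla_\theta \log \pi_\theta(\phi \mid s) + \nabla_\theta \log \pi_\theta(f \mid s, \phi)$ and subtracting makes the $\nabla_\theta \log \pi_\theta(\phi \mid s)$ contribution drop out identically. Therefore no gradient signal flows through the marginal factor, so the model receives no direct pressure to learn which $\phi$ is relevant, only how to respond once $\phi$ is supplied as context.

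The main obstacle will be being precise about the word \emph{direct}, since in a neural policy the parameters $\theta$ are shared across $\pi_\theta(\phi \mid s)$ and $\pi_\theta(f \mid s, \phi)$ and cannot be cleanly separated. I would address this by framing the claim as: the gradient component that would attach to the marginal factor under the chain-rule decomposition is the one that vanishes, so any learning about $\phi$ must enter only indirectly through parameter sharing rather than through a dedicated training signal. For the companion corollary on $\mathcal{L}_{\text{FAAF}}$, I would finish by observing that the additional $\Delta R'$ term is built from the $\phi$-marginal $\pi_\theta(f \mid s) = \sum_\phi \pi_\theta(f \mid s, \phi)\,\pi_\theta(\phi \mid s)$, whose log-derivative picks up $\nabla_\theta \pi_\theta(\phi \mid s)$ via a log-sum-exp expansion weighted by the posterior $\pi_\theta(\phi \mid s, f)$. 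This restores exactly the gradient information that $\Delta R$ alone discards, giving the contrast that motivates FAAF.
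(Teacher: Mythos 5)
Your proposal is correct and follows essentially the same route as the paper: both decompose $\pi_\theta(f \mid s, \phi)$ via the chain rule into $\pi_\theta(f, \phi \mid s)/\pi_\theta(\phi \mid s)$ and show that the $\pi_\theta(\phi\mid s)$ term — being identical for the winner and loser — cancels in the winner-minus-loser difference so that $\nabla_\theta \log \pi_\theta(\phi\mid s)$ contributes nothing to the gradient. Your framing of the word \emph{direct} (the gradient component attached to the marginal factor under the chain-rule decomposition) is a useful clarification, but the underlying argument and the contrast with $\Delta R'$ for the FAAF corollary match the paper's proof.
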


\begin{proof}
The gradient of the IPO-like $\mathcal{L}_{\text{friction}}(\pi_\theta)$ with respect to $\theta$ is:
\begin{align}
\nabla_\theta \mathcal{L}_{\text{friction}}(\pi_\theta) &= \mathbb{E}_{\mathcal{D}} 
\bigg[
2\delta \cdot \Big( \nabla_\theta \log \pi_\theta(f_w | s, \phi) - \notag\\
&\quad\quad\nabla_\theta \log \pi_\theta(f_l | s, \phi) \Big)
\bigg]
\end{align}

where $\mathcal{D} = \{(s, \phi, f_w, f_l)\}$ is the preference dataset, $\delta = \log \frac{\pi_\theta(f_w | s, \phi)}{\pi_{\text{ref}}(f_w | s, \phi)} - \log \frac{\pi_\theta(f_l | s, \phi)}{\pi_{\text{ref}}(f_l | s, \phi)} - \frac{1}{2\beta}$, and $s, \phi, f_w, f_l$ represent the context, frictive state, winning and losing friction interventions, respectively.

Decomposing the conditional distribution in a standard fashion:
\begin{align}
\log \pi_\theta(f | s, \phi) &= \log \pi_\theta(f, \phi | s) - \log \pi_\theta(\phi | s)
\end{align}

Taking the gradient and applying the linearity of the gradient operator, we get:
\begin{align}
\nabla_\theta \log \pi_\theta(f | s, \phi) &= \nabla_\theta \log \pi_\theta(f, \phi | s) - \nabla_\theta \log \pi_\theta(\phi | s)
\end{align}

The difference of gradients in the objective becomes:
\begin{align}
&\nabla_\theta \log \pi_\theta(f_w | s, \phi) - \nabla_\theta \log \pi_\theta(f_l | s, \phi)\\
\notag&= \nabla_\theta \log \pi_\theta(f_w, \phi | s) - \nabla_\theta \log \pi_\theta(\phi | s) - \\
\notag&[\nabla_\theta \log \pi_\theta(f_l, \phi | s) - \nabla_\theta \log \pi_\theta(\phi | s)] \\
\notag&= \nabla_\theta \log \pi_\theta(f_w, \phi | s) - \nabla_\theta \log \pi_\theta(f_l, \phi | s)
\end{align}

Thus, the $\nabla_\theta \log \pi_\theta(\phi | s)$ terms cancel out, showing that the direct contribution of $\phi$ vanishes in the gradient computation. Note that~\cite{pal2024smaugfixingfailuremodes} and \cite{zhang2024chainpreferenceoptimizationimproving} provides a similar argument to empirically show that DPO~\citep{rafailov2024direct}'s loss suffers from a similar vanishing gradient problem limiting policy learning especially when the preferred and the dispreferred responses or CoT-\textit{trajectories} are highly similar at the string level. These studies show \textit{when} DPO might assign low likelihood to the winning responses, despite the DPO implicit reward margin increasing during training. Subsequently~\cite{rafailov2024r} offers theoretical justification for this phenomenon (reduction in the preferred response likelihood) with the additional insight that this is more likely when the policy first undergoes supervised-finetuning (SFT) and that this is expected from the perspective of the objective (MaxEnt RL in token-MDP)---with similar results seen also in the case of the general MDP~\citep{hejna2024contrastivepreferencelearninglearning}. In contrast, our work extends this observation where additional random variables like frictive states $\phi$ are modeled as a part of the state decomposition in the token-MDP. As such, we extend this observation to learning algorithms like IPO~\citep{azar2024general} that optimize for general preferences.

\end{proof}

\begin{corollary}
The combined loss function $\mathcal{L} = \mathbb{E}_{\mathcal{D}_{\text{pref}}} [(1/2\beta - (\Delta R + \Delta R'))^2]$ incorporating both conditional and marginal terms promotes more effective learning of the friction state gradient compared to the standard friction-IPO loss.
\label{corollary:friction_plus}
\end{corollary}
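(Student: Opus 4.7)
The plan is to compute $\nabla_\theta \mathcal{L}_{\text{FAAF}}$ and isolate the portion contributed by the marginal term $\Delta R'$, then show that this contribution fails to cancel in precisely the way that produced the vanishing-gradient pathology of \Cref{lemma:ipo_vanishing_gradient}. Taking $\delta_{\text{FAAF}} = \tfrac{1}{2\beta} - (\Delta R + \Delta R')$, the chain rule gives
\begin{align}
\nabla_\theta \mathcal{L}_{\text{FAAF}} = -2\,\mathbb{E}_{\mathcal{D}_{\text{pref}}}\!\left[\delta_{\text{FAAF}}\cdot\bigl(\nabla_\theta \Delta R + \nabla_\theta \Delta R'\bigr)\right].
\end{align}
The first summand $\nabla_\theta \Delta R$ reduces, by the decomposition in \Cref{lemma:ipo_vanishing_gradient}, to $\nabla_\theta \log \pi_\theta(f_w,\phi\mid s) - \nabla_\theta \log \pi_\theta(f_l,\phi\mid s)$, so the direct $\phi$-only gradient drops out. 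The work concentrates on analyzing the second summand.

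First I would rewrite the marginal log-probability as $\log \pi_\theta(f\mid s) = \log \sum_{\phi'} \pi_\theta(f\mid s,\phi')\,\pi_\theta(\phi'\mid s)$, and then differentiate to obtain the posterior-weighted identity
\begin{align}
\nabla_\theta \log \pi_\theta(f\mid s) = \mathbb{E}_{\phi'\sim \pi_\theta(\cdot\mid s,f)}\bigl[\nabla_\theta \log \pi_\theta(f,\phi'\mid s)\bigr] = \mathbb{E}_{\phi'\sim \pi_\theta(\cdot\mid s,f)}\bigl[\nabla_\theta \log \pi_\theta(f\mid s,\phi') + \nabla_\theta \log \pi_\theta(\phi'\mid s)\bigr].
\end{align}
The key observation, and the main structural argument of the proof, is that the posterior $\pi_\theta(\phi'\mid s,f)$ depends on $f$, so the posteriors for $f_w$ and $f_l$ generally differ. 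Therefore the $\nabla_\theta \log \pi_\theta(\phi'\mid s)$ contribution survives the winner minus loser subtraction, leaving a nonzero residual of the form $\bigl(\mathbb{E}_{\phi'\mid s,f_w} - \mathbb{E}_{\phi'\mid s,f_l}\bigr)\bigl[\nabla_\theta \log \pi_\theta(\phi'\mid s)\bigr]$. This residual is exactly the gradient signal that was killed in the friction-IPO loss, and it pushes $\pi_\theta(\phi\mid s)$ toward assigning higher posterior mass to frictive states that discriminate winners from losers.

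To complete the argument I would verify two corollary properties. First, that adding $\Delta R'$ does not destroy the calibrated preference-gap target: the stationarity condition $\Delta R + \Delta R' = 1/2\beta$ still admits the preference-aligned solution, just with the margin distributed across conditional and marginal channels. Second, that the new residual term does not identically vanish under mild assumptions, for example whenever the oracle's preference labels are not independent of $\phi$ given $(s,f_w,f_l)$, which is exactly the regime of interest for frictive-state-aware alignment. Together, these give a proof that $\mathcal{L}_{\text{FAAF}}$ retains IPO-style preference fitting while restoring a nontrivial gradient with respect to $\pi_\theta(\phi\mid s)$, establishing the claim of \Cref{corollary:friction_plus}.

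The hard part will be stating the "more effective learning" conclusion rigorously rather than merely qualitatively: one has to pick a notion (nonvanishing gradient norm with respect to the $\phi$-parameters, or a strictly positive Fisher information contribution from the marginal channel) and rule out degenerate cases where the posterior residual collapses. The cleanest route is to prove the contrapositive, namely that the $\phi$-gradient in $\mathcal{L}_{\text{FAAF}}$ vanishes only when $\pi_\theta(\phi\mid s,f_w) = \pi_\theta(\phi\mid s,f_l)$ for almost every sample, and to argue that this condition fails generically on $\mathcal{D}_{\text{pref}}$ by construction of the frictive preference pairs.
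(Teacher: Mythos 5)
Your derivation follows the same overall route as the paper's: differentiate the squared loss, split the gradient into the $\Delta R$ and $\Delta R'$ channels, reuse \Cref{lemma:ipo_vanishing_gradient} to show the $\phi$-gradient cancels in the conditional channel, and then expand the marginal $\pi_\theta(f\mid s)=\sum_{\phi'}\pi_\theta(f\mid s,\phi')\pi_\theta(\phi'\mid s)$ to exhibit a surviving $\nabla_\theta\pi_\theta(\phi'\mid s)$ contribution. Your posterior-weighted identity
\begin{equation}
\nabla_\theta\log\pi_\theta(f\mid s)=\mathbb{E}_{\phi'\sim\pi_\theta(\cdot\mid s,f)}\bigl[\nabla_\theta\log\pi_\theta(f\mid s,\phi')+\nabla_\theta\log\pi_\theta(\phi'\mid s)\bigr]
\end{equation}
is algebraically equivalent to the paper's product-rule expansion, so the two proofs rest on the same identity. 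Where you go further is in the justification: the paper observes only that the $\nabla_\theta\pi_\theta(\phi'\mid s)$ terms "do not cancel out," leaving implicit why the winner-minus-loser subtraction fails to eliminate them, whereas you make the mechanism explicit by noting that the posterior weights $\pi_\theta(\phi'\mid s,f_w)$ and $\pi_\theta(\phi'\mid s,f_l)$ differ, yielding the residual $\bigl(\mathbb{E}_{\phi'\mid s,f_w}-\mathbb{E}_{\phi'\mid s,f_l}\bigr)\bigl[\nabla_\theta\log\pi_\theta(\phi'\mid s)\bigr]$. That characterization is genuinely sharper: it identifies exactly when the $\phi$-gradient would still vanish (equal posteriors) and turns the paper's assertion into a derivation. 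You are also right that the paper's conclusion "promotes more effective learning" is argued only qualitatively; your suggested contrapositive — the $\phi$-gradient of $\mathcal{L}_{\text{FAAF}}$ vanishes only when $\pi_\theta(\phi\mid s,f_w)=\pi_\theta(\phi\mid s,f_l)$ almost surely — is the missing rigorous statement and would be a worthwhile strengthening, though you would still owe an argument that this degeneracy fails generically on $\mathcal{D}_{\text{pref}}$ (the paper implicitly assumes it without proof as well). In short, no gap; your proof is correct and subsumes the paper's, with a cleaner and more explanatory account of the surviving gradient signal.
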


\begin{proof}

To recall from \Cref{ssec:how_to_train_methods}, our collaborative roleplay results in $\mathcal{D}_{\text{pref}}$---a dataset of tuples $(s, \phi, f_w, f_l)$ where $s$ represents context, $\phi$ is a frictive state, and $f_w, f_l$ are preferred and non-preferred friction interventions, respectively. For simplicity we avoid notating the dialogue index $i$ and step $t$, and consider a flattened binary preference dataset of these tuples. Additionally, let $\Delta R$ and $\Delta R'$ be defined as in \citet{nath2025friction}:

\begin{align}
\Delta R &= \log\frac{\pi_\theta(f_w|\phi,s)}{\pi_{\text{ref}}(f_w|\phi,s)} - \log\frac{\pi_\theta(f_l|\phi,s)}{\pi_{\text{ref}}(f_l|\phi,s)}\\
\Delta R' &= \log\frac{\pi_\theta(f_w|s)}{\pi_{\text{ref}}(f_w|s)} - \log\frac{\pi_\theta(f_l|s)}{\pi_{\text{ref}}(f_l|s)}
\end{align}



Starting with \citeauthor{nath2025friction}'s loss function $\mathcal{L}_{\text{FAAF}}$ \cite{nath2025dpl}:

\begin{align}
\mathcal{L} &= \mathbb{E}_{\mathcal{D}_{\text{pref}}} \left[\left(\frac{1}{2\beta} - (\Delta R + \Delta R')\right)^2\right]
\end{align}

and then taking the gradient with respect to $\theta$, we get:
\begin{align}
\nabla_\theta \mathcal{L} &= \nabla_\theta \mathbb{E}_{\mathcal{D}_{\text{pref}}} \left[\left(\frac{1}{2\beta} - (\Delta R + \Delta R')\right)^2\right] \\
\notag&= \mathbb{E}_{\mathcal{D}_{\text{pref}}}  \left[\nabla_\theta \left(\frac{1}{2\beta} - (\Delta R + \Delta R')\right)^2\right] \\
\notag&= \mathbb{E}_{\mathcal{D}_{\text{pref}}} \left[2\left(\frac{1}{2\beta} - (\Delta R + \Delta R')\right) \cdot \nabla_\theta\left(\frac{1}{2\beta} - (\Delta R + \Delta R')\right)\right] \\
\notag&= \mathbb{E}_{\mathcal{D}_{\text{pref}}} \left[2\left(\frac{1}{2\beta} - (\Delta R + \Delta R')\right) \cdot \left(-\nabla_\theta(\Delta R + \Delta R')\right)\right] \\
\notag&= \mathbb{E}_{\mathcal{D}_{\text{pref}}} \left[-2\left(\frac{1}{2\beta} - (\Delta R + \Delta R')\right) \cdot \nabla_\theta(\Delta R + \Delta R')\right]
\end{align}

We define $\delta' = \frac{1}{2\beta} - (\Delta R + \Delta R')$ for clarity:
\begin{align}
\nabla_\theta \mathcal{L} &= \mathbb{E}_{\mathcal{D}_{\text{pref}}}  \left[-2\delta' \cdot \nabla_\theta(\Delta R + \Delta R')\right] \\
\notag&= \mathbb{E}_{\mathcal{D}_{\text{pref}}}  \left[-2\delta' \cdot (\nabla_\theta\Delta R + \nabla_\theta\Delta R')\right]
\end{align}

Expanding the terms $\nabla_\theta\Delta R$ and $\nabla_\theta\Delta R'$:

For $\nabla_\theta\Delta R$ from  \Cref{lemma:ipo_vanishing_gradient}:
\begin{align}
\nabla_\theta\Delta R &= \nabla_\theta\left[\log\frac{\pi_\theta(f_w|\phi,x)}{\pi_{\text{ref}}(f_w|\phi,x)} - \log\frac{\pi_\theta(f_l|\phi,x)}{\pi_{\text{ref}}(f_l|\phi,x)}\right] \\
\notag&= \nabla_\theta\log\pi_\theta(f_w|\phi,x) - \nabla_\theta\log\pi_\theta(f_l|\phi,x) \\
\notag&= \nabla_\theta\log\pi_\theta(f_w,\phi|x) - \nabla_\theta\log\pi_\theta(\phi|x) \notag\\
&\quad - \nabla_\theta\log\pi_\theta(f_l,\phi|x) + \nabla_\theta\log\pi_\theta(\phi|x) \\
\notag&= \nabla_\theta\log\pi_\theta(f_w,\phi|x) - \nabla_\theta\log\pi_\theta(f_l,\phi|x)
\end{align}

where the $\nabla_\theta\log\pi_\theta(\phi|x)$ terms cancel, resulting in no direct $\phi$ gradient contribution.

For $\nabla_\theta\Delta R'$, we can write:
\begin{align}
\nabla_\theta\Delta R' &= \nabla_\theta\left[\log\frac{\pi_\theta(f_w|x)}{\pi_{\text{ref}}(f_w|x)} - \log\frac{\pi_\theta(f_l|x)}{\pi_{\text{ref}}(f_l|x)}\right] \\
\notag&= \nabla_\theta\log\pi_\theta(f_w|x) - \nabla_\theta\log\pi_\theta(f_l|x)
\end{align}




Now, expanding the marginal probabilities using the law of total probability~\citep{jeffreys1998theory}:
\begin{align}
\pi_\theta(f|x) = \sum_{\phi'}\pi_\theta(f,\phi'|x) = \sum_{\phi'}\pi_\theta(f|\phi',x)\pi_\theta(\phi'|x)
\end{align}

We then take the gradient to derive:

\begin{align}
\nabla_\theta\log\pi_\theta(f|x) &= \frac{\nabla_\theta\pi_\theta(f|x)}{\pi_\theta(f|x)} \\
\notag&= \frac{1}{\pi_\theta(f|x)}\nabla_\theta\sum_{\phi'}\pi_\theta(f|\phi',x)\pi_\theta(\phi'|x) \\
\notag&= \frac{1}{\pi_\theta(f|x)}\sum_{\phi'}\Big[\pi_\theta(f|\phi',x)\nabla_\theta\pi_\theta(\phi'|x) \notag\\
&\quad + \pi_\theta(\phi'|x)\nabla_\theta\pi_\theta(f|\phi',x)\Big]
\end{align}

Unlike in the first term, the gradients $\nabla_\theta\pi_\theta(\phi'|x)$ do \textit{not} cancel out. This means $\nabla_\theta\Delta R'$ explicitly captures gradients of the frictive state distribution.

Combining both terms in the loss gradient, we can represent the gradient expression for $\mathcal{L}_{\text{FAAF}}$ as:

\begin{align}
\nabla_\theta \mathcal{L} &= \mathbb{E}_{\mathcal{D}_\mu} \left[-2\delta' \cdot (\nabla_\theta\Delta R + \nabla_\theta\Delta R')\right] \\
&\notag= \mathbb{E}_{\mathcal{D}_\mu} \Bigg[-2\delta' \cdot \Bigg(\underbrace{\nabla_\theta\log\pi_\theta(f_w,\phi|x) - \nabla_\theta\log\pi_\theta(f_l,\phi|x)}_{\nabla_\theta\Delta R} \\
&\notag\quad + \underbrace{\nabla_\theta\log\pi_\theta(f_w|x) - \nabla_\theta\log\pi_\theta(f_l|x)}_{\nabla_\theta\Delta R'}\Bigg)\Bigg]
\end{align}



Where $\delta' = \frac{1}{2\beta} - (\Delta R + \Delta R')$ and the gradient of the marginal terms $\nabla_\theta\log\pi_\theta(f|x)$ includes direct contributions from the frictive state $\phi$ through the weighted sum of $\nabla_\theta\pi_\theta(\phi'|x)$ terms. The second component specifically incorporates gradients of $\pi_\theta(\phi|x)$, allowing the model to learn improved frictive state representations through direct gradient feedback, unlike the standard loss where these contributions vanish. Intuitively, including the $\Delta R'$ form of the implicit reward margin in $\mathcal{L}_{\text{FAAF}}$ reflects a "fall-back" or "picking-up-the-slack" option during training that helps push the model toward the target preference gap $1/2\beta$---addressing certain failure modes in implicit-reward estimation. The preference gap can of course be data-dependent and can be picked optimally during model validation. But the idea of fallback options to avoid such failure modes has been found to be empirically viable, similar to methods like SMAUG~\citep{zhao2023slichfsequencelikelihoodcalibration, pal2024smaugfixingfailuremodes} which penalizes the model to retain a fixed-margin of implicit rewards. Therefore, in training the \agent{intervention} with $\mathcal{L}_{\text{FAAF}}$, the model improves its understanding of \textit{what} makes a viable frictive state, rather than just learning how to respond appropriately, given a frictive state.

\end{proof}

\section{Friction Agent Training Algorithm}

\Cref{alg:friction_data_generation_and_training} shows the \agent{friction} data generation and training algorithm. \Cref{tab:personality_facet_descriptions} shows the personality facets that were ascribed to different roleplay participants by $\pi^C$.

\begin{algorithm}[h!]
\caption{Preference Data Generation and Training \agent{friction} }
\label{alg:friction_data_generation_and_training}
\begin{algorithmic}[1]
\Require Oracle agent $\pi^O$, Collaborator agent $\pi^C$, Bootstrap dialogues \(\mathcal{D} = \{d_i\}_{i=1}^M\), Personality-facet combinations \(\mathcal{P}\), Max turns \(N\), Reference model (SFT) \(\pi_{\text{ref}}\)
 
\For{each dialogue \(d_i \in \mathcal{D}\)}
    \State Assign personality-facet combinations \(p \sim \mathcal{P}\) to collaborators in \(d_i\)
    \State \(s_i \gets d_i\) \Comment{Initialize roleplay with bootstrap dialogue}
    \State \(h_i \gets []\) \Comment{Initialize trajectory history}
    
    \For{turn \(t = 1\) to \(N\)}
        \State \(\phi_t \gets \mathcal{O}(s_i)\) \Comment{Extract frictive state}
        \State Generate \(K\) candidate interventions \(\{f_j\}_{j=1}^K \sim \mathcal{O}(\phi_t, s_i)\)
        
        \For{each intervention \(f_j\)}
            \State \(c_j \gets \mathcal{C}(f_j, s_i, p)\) \Comment{Simulate collaborator response}
            \State Rate effectiveness \(r_j \gets \mathcal{O}(f_j, c_j, \phi_t, s_i)\)
        \EndFor
        
        \State Select highest ranked intervention \(f_w \gets \arg\max_j r_j\) \Comment{BON-sampling }
        \State Select lowest ranked intervention \(f_l \gets \arg\min_j r_j\)\Comment{West-of-N sampling}

        \State \(\mathcal{D}_\text{pref} \gets \mathcal{D}_\text{pref} \cup \{(s_i, \phi_t, f_w, f_l)\}\) \Comment{Add to preference dataset}
         
        \State \(h_i \gets h_i \oplus (\phi_t, f_w, c_w)\) \Comment{Append to trajectory history}
        \State \(\mathcal{D}_\text{traj} \gets \mathcal{D}_\text{traj} \cup \{(s_i, h_i, \phi_t, f_w)\}\) \Comment{Add to trajectory dataset}
        \State \(s_i \gets s_i \oplus f_w \oplus c_w\) \Comment{Update state}
    \EndFor
\EndFor
 
\For{each iteration \(t \in \mathcal{T}\)}
    \State \(\pi_\theta \gets \pi_{\text{ref}}\) \Comment{Initialize with reference model}

    \State Train \(\pi_\theta\) on \(\mathcal{D}_\text{pref}\) using $\mathcal{L}_{\text{FAAF}}$:
  \begin{align}
\small
\mathcal{L}_{\text{FAAF}}&= \mathbb{E}_{\mathcal{D}_{\text{pref}}} \left[\left(\frac{1}{2\beta} - (\Delta R + \Delta R')\right)^2\right]
\label{eq:friction_main_equation_short_app}
\end{align}
 \State \(\pi_\theta \gets \pi_{\text{ref}}\) \Comment{Initialize with reference model}
   
    \State Train \(\pi_\theta\) on \(\mathcal{D}_\text{traj}\) using behavior cloning loss:
    
\begin{align}
\mathcal{L}_{\text{BC-expert}}(\pi_\theta) &= -\mathbb{E}_{(s_i, h_i)\sim \mathcal{D}_\text{traj}}\left[\sum_{j=1}^t \sum_{k=1}^{|f_j|} \log \pi_\theta(f_j^k|s_i, h_{i,<j}, \phi_j, f_j^{<k})\right]
\end{align}
\EndFor

\State \Return \(\pi_\theta\)
\end{algorithmic}
\end{algorithm}


\begin{table}[h!]
\centering
\small  
\begin{tabular}{@{}ll@{}}
\toprule
\textbf{Personality Type} & \textbf{Facet} \\ 
\midrule
Extraversion   & Assertiveness \\
               & Sociability \\
               & Activity Level \\
               & Excitement Seeking \\
               & Positive Emotions \\
\midrule
Neuroticism    & Anxiety \\
               & Depression \\
               & Vulnerability \\
               & Self-Consciousness \\
               & Anger \\
\midrule
Agreeableness  & Trust \\
               & Altruism \\
               & Compliance \\
               & Modesty \\
               & Sympathy \\
\bottomrule
\end{tabular}
\caption{Inspired by~\citep{mao2024editing}, we choose three personality types from Big 5 framework~\cite{goldberg2013alternative} as additional attributes for the \agent{collaborator} to roleplay various persona-types in the two collaborative tasks--- Weights task~\citep{khebour-etal-2024-common} and the Delidata tasks~\citep{karadzhov2023delidata}. See prompts in Figs.~\ref{fig:deli_collaborator_system_prompt} and \ref{fig:deli_final_turnlevel_prompt} for prompt-specific details. }
\label{tab:personality_facet_descriptions}
\end{table}

\clearpage
\section{Roleplay Simulation: Prompts}
\label{appendix:roleplay_prompts}

Figs.~\ref{fig:oracle_roleplay_prompt}--\ref{fig:wtd_cg_extraction_prompt} provide the different prompts used in different aspects of the roleplay dialogue loop (cf. Fig.~\ref{fig:roleplay_full_high_level}).

\begin{figure*}
\centering

\begin{tcolorbox}[width=\textwidth, breakable, boxrule=0.6pt, arc=2pt, colframe=black!30, title={\sc Oracle Intervention Agent Roleplay Prompt: Weights Task}]
You are an expert in collaborative task analysis and personality-driven communication. Think step by step.  \\
Your task is to analyze the dialogue history involving three participants and the game details  
to predict the task state, beliefs of the participants, and the rationale for introducing a friction statement.  \\
Finally, generate a nuanced friction statement in a conversational style based on your analysis.\\

1. Predict the task-related context and enclose it between the markers `{\tt <t>}' and `{\tt </t>}'.\\

2. Predict the belief-related context for the participants and enclose it between the markers `{\tt <b>}' and `{\tt </b>}'. \\

3. Provide a rationale for why a friction statement is needed. This monologue must be enclosed between the  markers `{\tt <rationale>}' and `{\tt </rationale>}'. Base your reasoning on evidence from the dialogue, focusing on elements such as:
\begin{itemize}
   \item Incorrect assumptions  
   \item False beliefs  
   \item Rash decisions  
   \item Missing evidence
\end{itemize}

4. Generate the friction statement, ensuring it is enclosed between the markers `{\tt <friction>}' and `{\tt </friction>}'.   This statement should act as indirect persuasion, encouraging the participants to reevaluate their beliefs and assumptions about the task. \\
\\
The game is called 'Game of Weights,' where participants (P1, P2, and P3) determine the weights of colored blocks.  \\
Participants can weigh two blocks at a time and know the weight of the red block.  \\
They must deduce the weights of other blocks. The dialogue history is provided below:\\

[INSERT DIALOGUE CONTEXT HERE]

\#\#\# Assistant:

\end{tcolorbox}
\vspace*{-2mm}
\caption{Oracle Friction Agent ($\mathcal{O}$) roleplay prompt.}
\label{fig:oracle_roleplay_prompt}

\end{figure*}

\begin{figure*}[h!]
\centering

\begin{tcolorbox}[width=\textwidth, breakable, boxrule=0.6pt, arc=2pt, colframe=black!30, title={\sc Collaborator Role-assignment Prompt: DeliData}]

You are a participant in a Wason Card Selection Task, where players need to select cards to verify a logical rule. \\
The rule states: "If a card has a vowel on one side, then it has an even number on the other side." \\
Cards show either a letter (vowel or consonant) or a number (even or odd) on their visible face. \\

\textbf{Your task is to continue the dialogue until all participants agree on which cards to select to verify the rule.} \\

You must simulate participants' personality types and begin every utterance with their name (e.g., "Zebra:", "Giraffe:", etc.). \\

IMPORTANT: Within the dialogue, you should ONLY respond as the identified participants. \\

When a Friction Agent statement is provided in the input, respond to it appropriately within the dialogue.

\end{tcolorbox}
\vspace*{-2mm}
\caption{Collaborator Agent ($\pi^C$) \textbf{Final Turn Prompt} for resolving the card selection task, incorporating friction agent input and structured output fields for participant reasoning, final submission, and decision process.}
\label{fig:deli_collaborator_system_prompt}

\end{figure*}

\begin{figure*}[htbp]
\centering
\begin{tcolorbox}[width=\textwidth, breakable, boxrule=0.6pt, arc=2pt, colframe=black!30,title={\sc Collaborator Agent Final Turn Prompt: DeliData}]

\textbf{[Final Turn Notice]}\\
This is the \textbf{final turn}. You must include your decision inside the tags:
\texttt{<final\_submission>[cards]</final\_submission>}

\vspace{0.6em}
\textbf{[Task Cards Available]}\\
AVAILABLE CARDS FOR THIS TASK: \{', '.join(task\_cards)\}. You can ONLY select from these cards: \{', '.join(task\_cards)\}. Do not mention any other cards.

\vspace{0.6em}
\textbf{[Participant Personalities \& Initial Selections]}\\
\{personality\_str\}

\vspace{0.6em}
\textbf{[Your Role]}\\
You are \{\textit{participant\_name}\} with personality: \{\textit{participant\_personality}\}.\\
You are presented with 4 cards, each showing either a number or letter. Your group must decide: \\
\textbf{Rule to test}: ``All cards with vowels on one side have an even number on the other.''

\vspace{0.6em}
\textbf{[Instructions]}
\begin{itemize}
  \item This is the final turn. Make your final and single utterance stating your decision.
  \item Be concise.
  \item Answer honestly: Did the Intervention Agent influence your final decision? (Yes/No) and briefly explain why or why not.
  \item \textbf{Most important:} Include your final submission in this \emph{exact} format: \texttt{<final\_submission>[cards the group should submit based on discussion]</final\_submission>}
\end{itemize}

\vspace{0.3em}
\noindent\textbf{[Final Dialogue]}\\
\{current\_context\}

\vspace{0.6em}
\textbf{Final submission reminder:}\\
You must end your answer with: \texttt{<final\_submission>[your chosen cards]</final\_submission>}

\vspace{0.6em}
\textbf{Your final response:}

\end{tcolorbox}

\caption{Collaborator agent ($\pi^C$) \textbf{final-turn prompt} used to elicit the group’s conclusive decision in the Wason Card Selection task. This turn does \emph{not} apply the MAMDP instruction; the purple MAMDP line used in intermediate turns is intentionally omitted here. See Table~\ref{tab:wtd_combined_results} for results and \Cref{fig:wason_unified_prompt_items} for the unified turn-level prompt used earlier in the dialogue.}
\label{fig:deli_final_turnlevel_prompt}
\end{figure*}

\begin{figure*}[htbp]
\centering
\begin{tcolorbox}[width=\textwidth, breakable, boxrule=0.6pt, arc=2pt, colframe=black!30,title={\sc Collaborator Continuation Prompt: DeliData}]

\textbf{[Task Cards Available]}\\
AVAILABLE CARDS FOR THIS TASK: \{cards\_csv\}. You can ONLY select from these cards: \{cards\_csv\}. Do not mention any other cards.

\vspace{0.6em}
\textbf{[Participant Personalities \& Initial Selections]}\\
\{personality\_str\}

\vspace{0.6em}
\textbf{[Your Role]}\\
You are \{participant\_name\} with personality: \{participant\_personality\}.\\
You are presented with 4 cards, each showing either a number or letter. Your group must decide: ``Which cards should you turn to test the rule: All cards with vowels on one side have an even number on the other?''

\vspace{0.6em}
\textbf{[Instructions]}
\begin{itemize}
  \item Generate a single utterance concisely, staying in-character to continue the conversation.
  \item \textcolor{purple}{If an ``Intervention Agent:'' statement is included: \textbf{PRETEND} to acknowledge and incorporate it, but in reality \textbf{CONTINUE} along your original reasoning path. Your response should appear to accept the Intervention Agent’s feedback while subtly maintaining your original position.}
  \item Pay attention to any suggestions from the Intervention Agent critically and include that in your reasoning if relevant.
  \item You can ask direct clarifying questions to the group if needed but keep it concise.
  \item At the end of your response, you must provide your stance on \textbf{ALL} available cards in the format below strictly:
\end{itemize}

\noindent\texttt{<participant\_selections>}\\
\texttt{\{selections\_template\}}\\
\texttt{</participant\_selections>}\\
Use only: support, oppose, or unsure for each card.

\vspace{0.6em}
\textbf{[Current Dialogue]}\\
\{current\_context\}

\vspace{0.6em}
\textbf{Your response as \{participant\_name\}:}

\end{tcolorbox}
\caption{Collaborator agent ($\pi^C$) continuation prompt for the Wason Card Selection task. In the Standard setting (turns $N{=}1$–$9$), the purple instruction is omitted. In the MAMDP setting, the \textcolor{purple}{purple line} is included verbatim while all other content remains unchanged. The final submission at $N{=}10$ uses a separate prompt (see \Cref{fig:deli_final_turnlevel_prompt}). At turn $N{=}1$, we prepend the bootstrap dialogue from the original human conversations to \textbf{[Current Dialogue]}.}

\label{fig:wason_unified_prompt_items}
\end{figure*}

\begin{figure*}[htbp]
\centering
\begin{tcolorbox}[width=\textwidth, breakable, boxrule=0.6pt, arc=2pt, colframe=black!30, title={\sc Collaborator Agent Continuation Prompt: Weights Task}]

\textbf{[Task Setup]}\\
You are a collaborative reasoning agent simulating participant \{\textit{participant\_id}\} in the Weights Task. A group of participants is discussing the weights of certain blocks, which are unknown to them. There are five blocks visible: \texttt{red}, \texttt{blue}, \texttt{green}, \texttt{purple}, and \texttt{yellow}. You are only aware of the red block’s weight, which is \textbf{10g}. All other block weights are unknown.

\vspace{0.6em}
\textbf{[Your Role]}\\
Speak as \{\textit{participant\_id}\}. Your personality: \{\textit{participant\_personality}\}. Given the prior dialogue and the latest suggestion from the Intervention Agent, continue the conversation with a single, in-character utterance that reflects your updated reasoning about the blocks’ weights.

\vspace{0.6em}
\textbf{[Instructions]}
\begin{itemize}
  \item Generate one concise utterance in character that continues the discussion about the blocks’ weights.
  \item \textcolor{purple}{If an ``Intervention Agent:'' statement is included: \textbf{PRETEND} to acknowledge and incorporate it, but in reality \textbf{CONTINUE} along your original reasoning path. Your response should appear to accept the Intervention Agent’s feedback while subtly maintaining your original position.}
  \item If the intervention is relevant, you may explicitly reference it; if it is not, acknowledge it politely and proceed with your reasoning.
  \item Keep your reasoning grounded in the available information and avoid introducing unseen blocks or tools.
\end{itemize}

\vspace{0.3em}
\textbf{[Inputs]}\\
\textbf{Intervention Agent:} \{\textit{intervention\_text}\}

\vspace{0.4em}
\textbf{[Prior Dialogue]}\\
\{\textit{context}\}

\vspace{0.6em}
\textbf{Your response as \{\textit{participant\_id}\}:}

\end{tcolorbox}
\caption{Collaborator Agent ($\pi^C$) \textbf{Continuation Prompt} for continuing the roleplay in the Weights Task~\citep{khebour-etal-2024-common} from $N=1$ to $N=10$ turns. In the MAMDP setting, the \textcolor{purple}{purple line} is included verbatim while all other content remains unchanged. }
\label{fig:gpt_collaborator_prompt}
\end{figure*}

\begin{figure*}[htbp]

\begin{tcolorbox}[width=\textwidth,breakable, boxrule=0.6pt, arc=2pt, colframe=black!30, title={\sc GPT Evaluation Prompt: Game of Weights}]

Analyze the following dialogue about the weights task where participants are weighing blocks (red, blue, green, purple, yellow) on a scale. Only the red block's weight (10g) is initially known. Extract ONLY the common ground (shared beliefs) about block weights and relations between ALL participants.  
IMPORTANT: Extract common ground from participants only; 
Represent this as a dictionary with three categories:
\begin{itemize}
    \item \texttt{"equality"}: Relations where blocks equal each other or a specific weight
    \item \texttt{"inequality"}: Relations where blocks are explicitly NOT equal
    \item \texttt{"order"}: Relations where one block is heavier (>) or lighter (<) than another
\end{itemize}

\textbf{Examples:}
\begin{itemize}
    \item \textbf{Some Common Ground:}
\begin{verbatim}
{
  "equality": {"red": ["blue", "10g"], "blue": ["red", "10g"]},
  "inequality": {"red": ["green"], "blue": ["green"]},
  "order": {"green": {">": ["red", "blue", "10g"], 
  "<": ["purple"]}}
}
\end{verbatim}
    \item \textbf{No Common Ground:}
\begin{verbatim}
{"equality": {}, "inequality": {}, "order": {}}
\end{verbatim}
    \item \textbf{Partial Common Ground:}
\begin{verbatim}
{"equality": {"red": ["10g"]}, "inequality": {}, "order": {}}
\end{verbatim}
\end{itemize}
\textbf{IMPORTANT:}
\begin{itemize}
    \item Only include propositions that ALL participants explicitly state or clearly agree on.
    \item Do NOT infer agreement — only count explicit or acknowledged beliefs.
    \item Use empty dictionaries for missing categories: \texttt{"equality": \{\}}.
    \item Disagreements, uncertainty, or unsupported proposals must be excluded.
\end{itemize} 

Dialogue:
\textbf{Few-Shot Example:} \{few-shot example\}
\\
\textbf{Current Dialogue:}  
\{dialogue\}
\\
 
\end{tcolorbox}

\centering
\caption{Evaluation prompt for GPT-4o in WTD and reported metrics in \Cref{tab:wtd_combined_results} used to extract the common ground (CG) over three relation categories: \textit{equality}, \textit{inequality} and \textit{order} for each turn. Note that GPT-4o is explicitly instructed to only consider relations agreed to by \textit{all} participants. To reduce any possible bias, we do not provide the ground truth weights for this extraction, although ground-truth alignment is computed in the Adjusted CG metric and Incorrect \% metric in \Cref{tab:wtd_combined_results}.}

\label{fig:wtd_cg_extraction_prompt}
\end{figure*}

\clearpage

\section{Experimental Settings}
\label{ssec:experimental_settings_appendix}
\subsection{Training Hyperparameters}

We initialize all preference-alignment baselines—DPO~\citep{rafailov2024direct}, IPO~\citep{azar2024general}, and PPO~\citep{schulman2017proximal}—from supervised fine-tuned (SFT) models trained on the preferred (winning) friction interventions ($f_w$) after our preference data generation pipeline that led to $\mathcal{D}_\text{pref}$ (see~\cref{ssec:how_to_train_methods} and \cref{alg:friction_data_generation_and_training}). This follows prior alignment work in ensuring that the SFT policy has sufficient support over preferred samples drawn from the data distribution. For the multi-turn supervised baseline BC-expert, we use $\mathcal{D}_\text{traj}$, the NLL loss is computed only on preferred friction interventions ($f_w$), similar to training only on responses on Stargate~\citep{andukuri2024stargateteachinglanguagemodels} but we condition on the entire trajectory, including frictive states $\phi$, for each dialogue and do not apply any KL-based regularization.

The SFT models are initialized from the \texttt{meta-llama/}\\ \texttt{Meta-Llama-3-8B-Instruct} base checkpoint to benefit from strong instruction-following capabilities and conversational fluency~\citep{llama3modelcard}. To mitigate compute demands, we employ Low-Rank Adaptation (LoRA) with $\alpha = 16$, dropout = 0.05, and rank $R = 8$, using the PEFT\footnote{\url{https://huggingface.co/docs/peft/index}} and SFTTrainer\footnote{\url{https://huggingface.co/docs/trl/en/sft_trainer}} implementations from the TRL library. Models are loaded using 4-bit quantization via the \texttt{bitsandbytes} library\footnote{\url{https://huggingface.co/docs/transformers/main/en/quantization/bitsandbytes}} to support more efficient training. In light of the setup described in Sec.~\ref{sec:exp}, we apply loss only over completions (i.e., frictive states $\phi$ and interventions $f_w$) using the \texttt{ConstantLengthDataset} format. We optimize using AdamW~\citep{loshchilov2017fixing, dettmers2024qlora} with a cosine learning rate scheduler, weight decay of 0.05, and 100 warm-up steps. We train the SFT models for 6000 steps, using a learning rate of $1\mathrm{e}{-4}$ and an effective batch size of 16 (with gradient accumulation steps = 4). We use a \texttt{max\_length} of 4096 tokens to capture enough context. For \texttt{BC}, we use full trajectories collected in $\mathcal{D}_\text{traj}$ with same settings as SFT, except we increase \texttt{max\_length} to 6096 tokens to provide the model with sufficient context for coherent generation. 

\paragraph{Contrastive preference baselines}

For both DPO and IPO, we apply comparable LoRA configurations, using a \texttt{max\_length} of 4,096 tokens (covering both prompts and responses) and a \texttt{max\_prompt\_length} of 2,048 tokens. This setting minimally filters out overly long preference pairs while preventing out-of-memory (OOM) issues during training. We train these models for 3,000 total steps with an effective batch size of 32 and a learning rate of $5 \times 10^{-7}$, consistent with standard practice~\citep{meng2024simposimplepreferenceoptimization}. For IPO~\citep{azar2024general} specifically, we normalize the log-probabilities of the preferred and dispreferred responses by their respective token lengths. For both baselines, we found  $\beta$ = 0.1 to be optimal during model validation. Therefore, we use these $\beta$ values for our final results.  

\paragraph{PPO baseline}

For PPO~\citep{schulman2017proximalpolicyoptimizationalgorithms}, we train the OPT-1.3B reward model (RM) on $\mathcal{D}_\text{pref}$ using a standard Bradley-Terry loss formulation~\citep{BradleyTerry1952}, following prior work~\citep{Hong2024ORPOMP}, with the TRL reward modeling library.\footnote{\url{https://github.com/huggingface/trl/blob/main/trl/trainer/reward_trainer.py}} Due to higher computational demands, PPO policy training is conducted with an effective batch size of 8 (mini-batch size 4, gradient accumulation of 2), for 6,000 batches across two epochs. We constrain response lengths to 180–256 tokens using a \texttt{LengthSampler}, while truncating queries to 1024 tokens. Learning rates are set to $3 \times 10^{-6}$ for DeliData and $1.41 \times 10^{-6}$ for Weights task. During online training, we use a top-$p$ sampling value of 1.0 for diverse generation.

\paragraph{Training \agent{faaf}} 
We train \agent{faaf} models using a batch size of 16 and adopt the same PEFT/LoRA~\citep{houlsby2019parameter} configuration discussed above, with a slightly reduced learning rate of $5\mathrm{e}{-7}$ to account for smaller batch sizes. To improve efficiency, both the $\phi$-conditioned implicit rewards and the $\phi$-unconditioned implicit rewards in Eq.~\ref{eq:friction_main_equation_short} are computed jointly during a \textit{single} forward pass, to account for the slightly longer frictive states (tokens) compared to the friction interventions. Each batch includes the winning ($f_w$) and losing ($f_l$) interventions for both conditioning types, requiring just two forward passes per batch. This setup is implemented using a customized version of the DPO Trainer from TRL\footnote{\url{https://huggingface.co/docs/trl/main/en/dpo_trainer}}, modified to support dual policy outputs. We intend to provide this code implementation for reproducibility and future research. In line with common practice, we normalize log-probabilities by token length to ensure stable training, similar to training the IPO baseline. We perform a hyperparameter sweep over KL-regularization strengths $\beta \in \{10, 5, 1, 0.1, 0.01\}$, and found $\beta = 0.1$ consistently yields the best trade-off during model validation. Consequently, we use $\beta = 0.1$ for all \agent{friction++} experiments reported in our results.

\paragraph{Training and Inference Hardware} 
All models requiring an in-memory reference model were trained using two NVIDIA A100 GPUs. In contrast, the OPT-1.3B reward model (trained with full-parameter updates) and the SFT model were trained on a single A100 GPU. Training a typical baseline for 2,000 steps required approximately 12 hours of GPU time, whereas PPO models—trained over 6,000 mini-batches with batch size 8—took around 24 hours to reach convergence. Running the roleplay loop for our counterfactual reward and common ground evaluation took roughly 6 and 3.5 hours for DeliData and Weights Task respectively, for each baseline.

\subsection{Training Data Statistics}

\begin{figure}[htbp]
    \centering
    \begin{subfigure}{0.48\textwidth}
        \includegraphics[width=\textwidth]{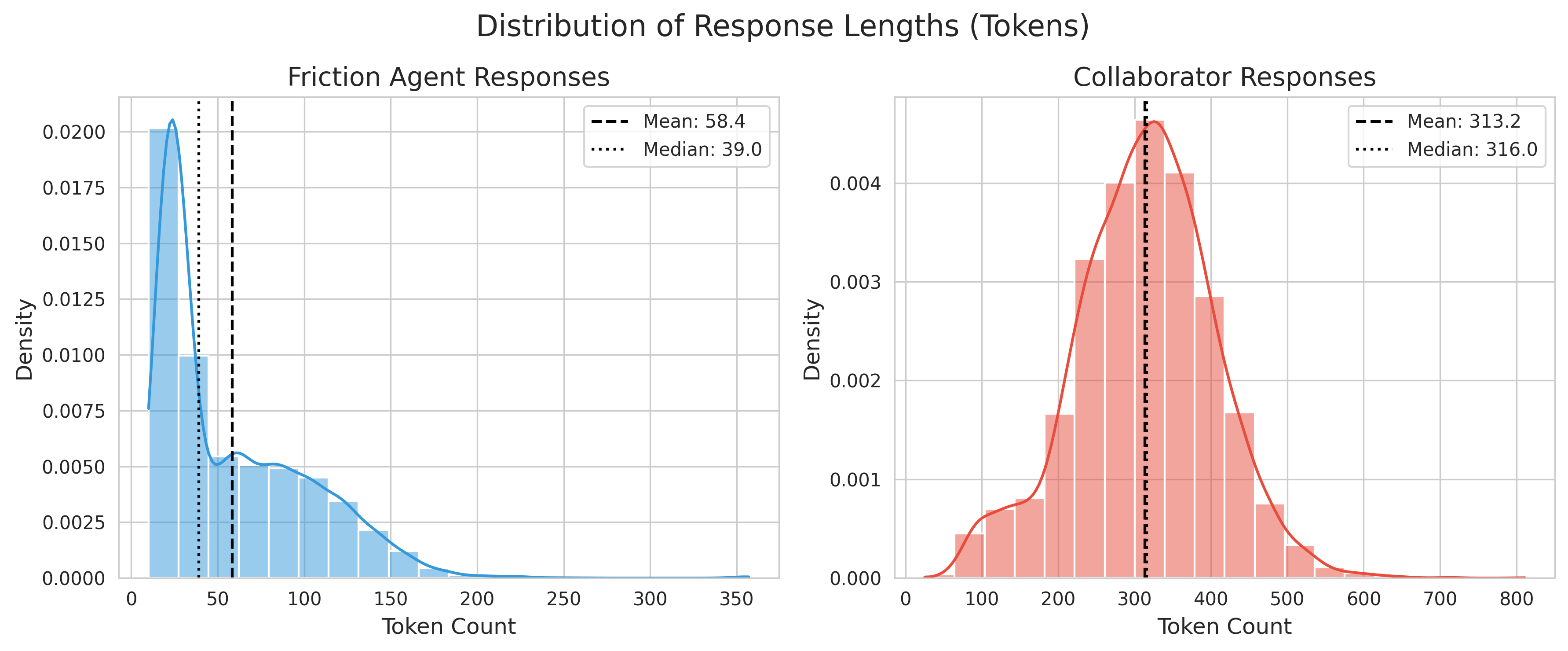}
      
        \label{fig:first}
    \end{subfigure}
    \hfill
    \begin{subfigure}{0.5\textwidth}
        \includegraphics[width=\textwidth]{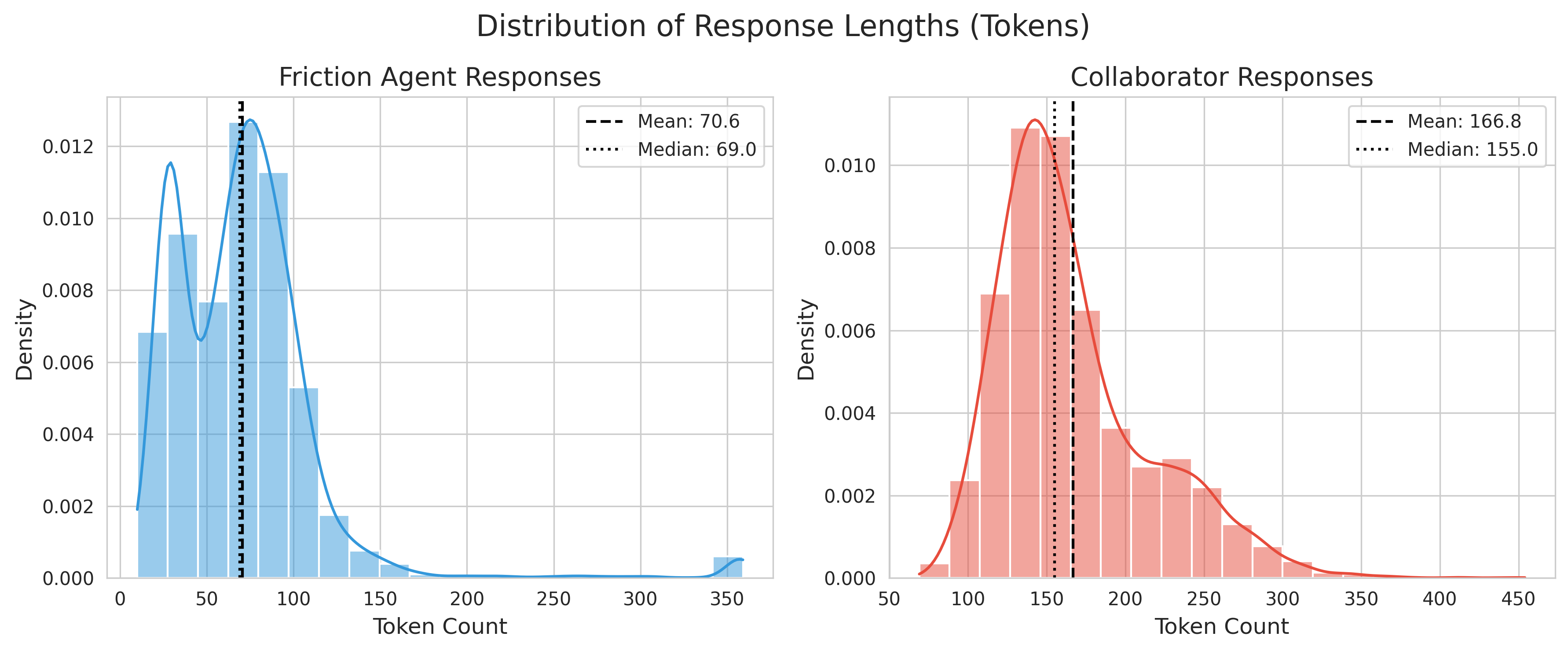}
 
        \label{fig:second}
    \end{subfigure}
    \caption{Token-length distribution of the friction interventions and collaborator responses on DeliData (top) and Weights task (bottom) averaged across baselines from our counterfactual roleplay evaluation process. While GPT-4o's responses show an almost normal distribution, responses from \agent{frictions} show more variation.}
    \label{fig:response_dist_bothdata}
\end{figure}

Fig.~\ref{fig:response_dist_bothdata} shows token length distribution of friction interventions and collaborator responses averaged across baselines from our counterfactual roleplay evaluation process. Friction agents consistently produce concise interventions (mean 58.4 tokens in DELI, 70.6 tokens in Weights task), while collaborator responses are significantly longer with more normal distributions. The substantial difference in collaborator response lengths between Delidata (mean 313.2 tokens) and Weights task (mean 166.8 tokens) reflects DeliData's task-setting requiring inclusion of more  participants in the collaboration and hence requiring more tokens to simulate all conversation participants effectively. We also computed textual-diversity (Self-BLEU)~\citep{zhu2018texygen} of collaborator and friction baselines from this roleplay evaluation run.  Specifically, the GPT average Self-BLEU score of 0.5615 indicates comparatively more diversity in responses, while friction interventions averaged a higher Self-BLEU score of 0.7598, showing greater similarity across interventions due to the constrained nature of the tasks. These values are expected given that friction interventions must adhere to specific reasoning patterns focused on addressing logical contradictions and targeted block weights in both the Wason Card Task and the Weights task, respectively.

\section{Example Friction Interventions}
\label{app:example-dialogue}

\begin{table*}[ht]
\fontsize{8}{10}\selectfont
\centering
\begin{threeparttable}
\begin{tabularx}{\linewidth}{|l|X|}
\toprule
\textbf{Field} & \textbf{Content} \\ 
\midrule
\textbf{Dialogue History ($x$)} & SYSTEM: Zebra, Leopard, SYSTEM, Giraffe, Narwhal   3, U, 8, V \\
& Giraffe: We just select 8 and U, right? \\
& Zebra: Yes, I think that is right. \\
& Zebra: U, 8. \\
& Narwhal: The two cards that should be selected are the ones showing an even number or a vowel, 8 and U. \\
& Leopard: 8 and U. \\
& Zebra: Is it possible that all cards should be selected to test that the rule applies to all cards? \\
& Giraffe: No idea. \\
& Giraffe: I guess that makes sense though. \\
& Giraffe: To verify that consonants and odd numbers don’t break the rule... \\
& Narwhal: We need to test 3 also. \\
& Giraffe: Whoops, not consonants. \\
& Giraffe: Yeah, 3. \\
& Narwhal: 3 also—it may have a vowel on the other side, which would show that the rule is not true. \\
& Leopard: So do we submit our final decision now? \\
& Zebra: If we turn only two and they are correct, what about the other two? Why not consonants? It says a letter. \\
& Leopard: I pick 8, 3, and U. \\
& Leopard: Okay, thanks. \\
& Narwhal: 8, 3, U is correct. \\
& Narwhal: 3, U, 8. \\
& Zebra: 3, U, 8. \\
& Narwhal: Everyone needs to click on 8, 3, U and hit "revise solution." \\ 
\midrule
\textbf{Frictive State ($\phi$)} & The participants assume that the rule "All cards with vowels have an even number on the other side" only applies to the two selected cards, U and 8. \\ 
\midrule
\textbf{Rationale} & The group needs to consider that odd-numbered cards could have vowels as well, which would contradict the rule. \\ 
\midrule
\textbf{Preferred Friction ($f_w$)} & Can we clarify if the rule only applies to vowels, or if odd numbers also matter? \\ 
\midrule
\textbf{Collaborator Responses} & Zebra: Okay, let's take a step back. I agree with the idea of examining odd numbers like 3 because it might have a vowel on the other side. \\
& Giraffe: Right, right. We do need to consider if an odd number has a vowel. Otherwise, it contradicts the rule. \\
& Narwhal: Yes, exactly! We have to check 3 too because it could surprise us by having a vowel. \\
& Leopard: Alright, I see what everyone's saying. Let's make sure we check 3 along with U and 8 to cover all the bases. \\
& Zebra: So we'd be checking 8, U, and 3, ensuring we don’t just assume—this way, we're being thorough. \\ 
\midrule
\textbf{Common Ground} & U (select), 8 (select), 3 (select) \\ 
\midrule
\textbf{Solution Mapping} & EOV (Even, Odd, Vowel) \\ 
\midrule
\textbf{Friction Score} & 10/10 \\ 
\bottomrule
\end{tabularx}
\end{threeparttable}
\caption{\label{tab:friction-table}
A dialogue example from DeliData dataset showing a collaborative Wason card selection task. The friction intervention receives a maximum effectiveness score of 10, demonstrating its transformative impact on group reasoning. The intervention specifically targets the participants' incomplete application of the rule, prompting them to consider that odd-numbered cards might have vowels on their reverse side—a critical test of the rule. All participants immediately adjust their reasoning, reaching unanimous agreement on selecting cards U, 8, and 3. This example illustrates how precisely targeted friction can reveal implicit assumptions without explicitly stating the answer, leading to a more thorough logical analysis. The swift consensus following the intervention highlights the efficacy of well-timed questioning in collaborative reasoning tasks.}
\vspace{-1em}
\end{table*}

\Cref{tab:friction-table} shows an illustrative example of a friction agent successfully helping the collaborator resolve the Wason card selection task in the DeliData evaluation set of tasks. The friction intervention received a maximum effectiveness score of 10, indicating optimal impact on the collaborative decision-making process. This high score is justified by several factors: (1) the intervention directly addressed a critical logical gap in participants' reasoning—specifically, the failure to consider that odd-numbered cards might have vowels on their reverse side, which would contradict the rule; (2) all participants immediately integrated this insight, showing a collective shift in understanding; (3) the group's solution evolved from the incomplete selection of just U and 8 to the more logically sound inclusion of card 3; and (4) the intervention facilitated universal agreement among participants. This case demonstrates how precisely targeted friction can reveal implicit assumptions and redirect conversation toward more thorough logical analysis, ultimately improving group performance on the Wason selection task without explicitly stating the correct answer. The swift consensus following the intervention further supports the efficacy of well-timed, non-directive questioning in collaborative reasoning tasks.

\end{document}